\setlist{leftmargin=3mm}
\DeclareMathOperator*{\argmin}{arg\,min}
\newcommand\numberthis{\addtocounter{equation}{1}\tag{\theequation}}
\newcommand{\R}{\mathbb{R}}
\newcommand{\E}{\mathbb{E}}
\renewcommand{\P}{\mathrm{Pr}}
\newcommand{\T}{^\top}
\newcommand{\tr}{\mathrm{tr}}
\newcommand{\hide}[1]{}
\newtheorem{definition}{Definition}
\newtheorem{assumption}{Assumption}
\newtheorem{lemma}{Lemma}
\newtheorem{theorem}{Theorem}
\begin{document}

\bstctlcite{IEEEexample:BSTcontrol}

\title{Fast and Secure Distributed\\Nonnegative Matrix Factorization
	\author{Yuqiu Qian,
	        Conghui Tan,
	        Danhao Ding,
	        Hui Li,
	        and Nikos Mamoulis
		\IEEEcompsocitemizethanks{
			\IEEEcompsocthanksitem Yuqiu Qian is with Tencent, Shenzhen, China. E-mail: yuqiuqian@tencent.com.
			\IEEEcompsocthanksitem Conghui Tan is with WeBank, Shenzhen, China. E-mail: tanconghui@gmail.com.
			\IEEEcompsocthanksitem Danhao Ding is with Department of Computer Science,
			University of Hong Kong, Hong Kong SAR, China. E-mail: dhding2@cs.hku.hk.
			\IEEEcompsocthanksitem Hui Li is with School of Informatics, Xiamen University, Xiamen, Fujian, China. E-mail: hui@xmu.edu.cn. He is the corresponding author.
			\IEEEcompsocthanksitem Nikos Mamoulis is with Department of Computer Science and Engineering, University of Ioannina, Ioannina, Epirus, Greece. E-mail: nikos@cs.uoi.gr.
		}
	}
}


\IEEEtitleabstractindextext{%
\begin{abstract} 
Nonnegative matrix factorization (NMF) has been successfully
applied in several data mining tasks. Recently, there is an increasing
interest in the acceleration of NMF, due to its high cost on large matrices.
On the other hand, the privacy issue of NMF over federated data is worthy of
attention, since NMF is prevalently applied in image and text analysis which
may involve leveraging privacy data (e.g, medical image and record) across
several parties (e.g., hospitals). In this paper, we study the
\emph{acceleration} and \emph{security} problems of distributed NMF. Firstly,
we propose a {\em distributed sketched alternating nonnegative least squares}
(DSANLS) framework for NMF, which utilizes a matrix sketching technique to
reduce the size of nonnegative least squares subproblems with a convergence
guarantee. For the second problem, we show that DSANLS with modification can
be adapted to the security setting, but only for \emph{one or limited
iterations}. Consequently, we propose four efficient distributed NMF methods
in both synchronous and asynchronous settings with a security guarantee. We
conduct extensive experiments on several real datasets to show the superiority
of our proposed methods. The implementation of our methods is available at
\url{https://github.com/qianyuqiu79/DSANLS}.
\end{abstract}

\begin{IEEEkeywords}
Distributed Nonnegative Matrix Factorization, Matrix Sketching, Privacy
\end{IEEEkeywords}
}

\maketitle

\IEEEdisplaynontitleabstractindextext
\IEEEpeerreviewmaketitle

\ifCLASSOPTIONcompsoc
\IEEEraisesectionheading{\section{Introduction}\label{sec:introduction}}
\else
\section{Introduction}
\label{sec:introduction}
\fi
\IEEEPARstart{N}{onnegative} matrix factorization (NMF) is a technique for
discovering nonnegative latent factors and/or performing dimensionality
reduction. Unlike general
matrix factorization (MF), NMF restricts the two output matrix factors to be
nonnegative. Specifically, the goal of NMF is to decompose a huge matrix
$M\in \mathbb{R}_+^{m\times n}$ into the product of two matrices
$U\in\R_+^{m\times k}$ and $V\in\R_+^{n\times k}$ such that $M\approx UV\T$.
$\R_+^{m\times n}$ denotes the set of $m \times n$ matrices with nonnegative
real values, and $k$ is a user-specified dimensionality, where typically $k\ll
m,n$. Nonnegativity is inherent in the feature space of many real-world
applications, where the resulting factors of NMF can have a natural
interpretation. Therefore, NMF has been widely used in a branch of fields including text mining
\cite{pauca2004text}, image/video processing \cite{kotsia2007novel},
recommendation \cite{gu2010collaborative}, and analysis of social networks
\cite{ZhangY12}.

Modern data analysis tasks apply on big matrix data with increasing scale and
dimensionality. Examples~\cite{kannan2016high} include community detection in
a billion-node social network, background separation on a 4K video in which
every frame has approximately 27 million rows, and text mining on a
bag-of-words matrix with millions of words. The volume of data is anticipated
to increase in the `big data' era, making it impossible to store the whole
matrix in the main memory throughout NMF. Therefore, there is a need for
high-performance and scalable distributed NMF algorithms. On the other hand,
there is a surge of works on privacy-preserving data mining over
federated data~\cite{KimSYJ17, feng2018privacy} in recent years. In contrast
to traditional research about privacy which emphasizes protecting individual
information from single institution, federated data mining deals with multiple
parties. Each party possesses its own confidential dataset(s) and the union of
data from all parties is utilized for achieving better performance in the
target task.  
Due to the prevalent use of NMF in image and text analysis which may involve
leveraging privacy data (e.g, medical image and record) across several
parties (e.g., hospitals), the privacy issue of NMF over federated data is
worthy of attention.  To address aforementioned challenges of NMF (i.e.,
high performance and privacy), we study the \emph{acceleration} and \emph{security}
problems of distributed NMF in this paper.

First of all, we propose the {\em distributed sketched alternating nonnegative
least squares} (DSANLS) for accelerating NMF. The state-of-the-art distributed NMF
is MPI-FAUN~\cite{kannan2016mpi}, a general framework that iteratively solves
the nonnegative least squares (NLS) subproblems for $U$ and $V$. The main idea
behind MPI-FAUN is to exploit the independence of local updates for rows of
$U$ and $V$, in order to minimize the communication requirements of matrix
multiplication operations within the NMF algorithms. Unlike MPI-FAUN, our idea
is to speed up distributed NMF in a new, orthogonal direction: by reducing the
problem size of each NLS subproblem within NMF, which in turn decreases the
overall computation cost. In a nutshell, we reduce the size of each NLS
subproblem, by employing a {\em matrix sketching} technique: the involved
matrices in the subproblem are multiplied by a specially designed random
matrix at each iteration, which greatly reduces their dimensionality. As a
result, the computational cost of each subproblem significantly drops.

However, applying matrix sketching comes with several issues. First, although
the size of each subproblem is significantly reduced, sketching involves
matrix multiplication which brings computational overhead. Second, unlike in a
single machine setting, data is distributed to different nodes in distributed
environment. Nodes may have to communicate extensively in a poorly designed
solution. In particular, each node only retains part of both the input matrix
and the generated approximate matrices, causing difficulties due to data
dependencies in the computation process. Besides, the generated random
matrices should be the same for all nodes in every iteration, while
broadcasting the random matrix to all nodes brings severe communication
overhead and can become the bottleneck of distributed NMF. Furthermore, after
reducing each original subproblem to a sketched random new subproblem, it is
not clear whether the algorithm still converges and whether it converges to
stationary points of the original NMF problem.

Our DSANLS overcomes these problems. Firstly, the extra computation cost due
to sketching is reduced with a proper choice of the random matrices. Then,
the same random matrices used for sketching are generated independently at
each node, thus there is no need for transferring them among nodes during
distributed NMF. Having the complete random matrix at each node, an NMF
iteration can be done locally with the help of a matrix multiplication rule
with proper data partitioning. Therefore, our matrix sketching approach
reduces not only the computational overhead, but also the communication cost. Moreover,
due to the fact that sketching also {\em shifts} the optimal solution of each
original NMF subproblem, we propose subproblem solvers paired with theoretical
guarantees of their convergence to a stationary point of the original
subproblems.

To provide solutions to the problem of secure distributed NMF over federated
data, we first show that DSANLS with modification can be adapted to this
security setting, but only for \emph{one or limited iterations}. Therefore, we
design new methods called \emph{Syn-SD} and \emph{Syn-SSD} in synchronous
setting. They are later extended to \emph{Asyn-SD} and \emph{Asyn-SSD} in
asynchronous setting (i.e., client/server), respectively. Syn-SSD
improves the convergence rate of Syn-SD, without incurring much extra
communication cost. It also reduces computational overhead by \emph{sketching}. All
proposed algorithms are secure with a guarantee. Secure distributed NMF problem
is hard in nature. All parties involved should not be able to infer the
confidential information during the process. To the best of our knowledge, we
are the first to study NMF over federated data.

In summary, our contributions are as follows:
\begin{itemize}
	\item DSANLS is the first distributed NMF algorithm that leverages matrix sketching to reduce the problem size of each NLS subproblem and can be applied to both dense and sparse input matrices with a convergence guarantee. 
	\item We propose a novel and specially designed subproblem solver ({\em proximal coordinate descent}), which helps DSANLS converge faster.
	We also discuss the use of {\em projected gradient descent} as subproblem solver,
	showing that it is equivalent to
	stochastic gradient descent (SGD) on the original (non-sketched)
	NLS subproblem.
	\item For the problem of secure distributed NMF, we propose efficient methods, Syn-SD and Syn-SSD, in synchronous setting and later extend them to asynchronous setting. Through sketching, their computation cost is significantly reduced. They are the first secure distributed NMF methods for federated data.
	\item We conduct extensive experiments using several (dense and sparse) real datasets, which demonstrates the efficiency and scalability of our proposals.
\end{itemize}

The remainder of the paper is organized as follows. Sec.~\ref{sec:bg}
provides the background and discusses the related work. Our DSANLS algorithm
with detailed theoretical analysis is presented in Sec.~\ref{sec:DSANLS}. Our
proposed algorithms for secure distributed NMF problem in both synchronous and
asynchronous settings are presented in Sec.~\ref{sec:secure}.
Sec.~\ref{sec:experiment} evaluates all algorithms. Finally,
Sec.~\ref{sec:discuss} concludes the paper.

\vspace{5pt}
\noindent\textbf{Notations.} For a matrix $A$, we use $A_{i:j}$ to denote the
entry at the $i$-th row and $j$-th column of $A$. Besides, either $i$ or $j$
can be omitted to denote a column or a row, i.e., $A_{i:}$ is the $i$-th row
of $A$, and $A_{:j}$ is its $j$-th column. Furthermore, $i$ or $j$ can be
replaced by a subset of indices. For example, if $I\subset\{1,2,\dots,m\}$,
$A_{I:}$ denotes the sub-matrix of $A$ formed by all rows in $I$, whereas
$A_{:J}$ is the sub-matrix of $A$ formed by all columns in a subset
$J\subset\{1,2,\dots,n\}$.

\hide{
For a matrix $A$, sub-matrix $A_{i}$ is used to denote $i$-th part of $A$. Besides, $A_{(i)}$ is used to specify local full copy of $A$ stored in node $i$. Note that $A_{(i)} \neq A_{(j)}$ sometimes stands for $i \neq j$. When $A$ is same across all nodes, we can also call local copy of $A$ as $A_{\bar{i}}$ of each node $i$. Therefore, $A_{\bar{i}} = A_{\bar{j}}$ always stands for $i \neq j$.
}
\section{Background and Related Work}
\label{sec:bg}
In Sec.~\ref{sec:pre}, we first illustrate NMF and its security problem in a distributed environment. Then we elaborate on previous works which are related to this paper in Sec.~\ref{sec:related_work}.

\subsection{Preliminary}
\label{sec:pre}

\subsubsection{NMF Algorithms}
\label{sec:nmf:algo}
Generally, NMF can be defined as an optimization
problem~\cite{lee2001algorithms} as follows: 
\begin{equation} 
\label{eq:problem}
\min_{\substack{U\in\R_+^{m\times k}, V\in\R_+^{n\times k}}} \left\|M-UV\T\right\|_F,
\end{equation}
where $||X||_F=\left(\sum_{ij}x_{ij}^2 \right)^{1/2}$ is the Frobenius norm of $X$.
Problem \eqref{eq:problem} is hard to solve directly because it is non-convex. Therefore,
almost all NMF algorithms leverage two-block coordinate descent schemes (shown in Alg.~\ref{nls}): they
optimize over one of the two factors, $U$ or $V$, while keeping the other
fixed~\cite{gillis2014and}. By fixing $V$, we can optimize $U$ by solving a
nonnegative least squares (NLS) subproblem: 
\begin{equation} 
\label{eq:subproblem}
\min_{U\in\R_+^{m\times k}} \left\|M-UV\T\right\|_F.
\end{equation}
Similarly, if we fix $U$, the problem becomes:
\begin{equation} 
\label{eq:subproblem_v}
\min_{V\in\R_+^{n\times k}} \left\|M\T-VU\T\right\|_F.
\end{equation}

\begin{algorithm}[!t]
	\caption{Two-Block Coordinate Descent: Framework of Most NMF Algorithms}
	\label{nls}
	\small
	\textbf{Input}: $M$\\
	\textbf{Parameter}: Iteration number $T$
	\begin{algorithmic}[1] 
		\State initialize $U^0\geq0$, $V^0\geq0$
		\For {$t=0$ \textbf{to} $T-1$}
		\State $U^{t+1}\leftarrow $ update($M$, $U^{t}$, $V^{t}$)
		\State $V^{t+1}\leftarrow$ update($M$, $U^{t+1}$, $V^{t}$)
		\EndFor
		\State \textbf{return} $U^T$ and $V^T$
	\end{algorithmic}
\end{algorithm}

Within two-block coordinate descent schemes (exact or inexact), different
subproblem solvers are proposed. The first widely used update rule is
Multiplicative Updates (MU)~\cite{daube1986iterative,lee2001algorithms}. MU is
based on the majorization-minimization framework and its application
guarantees that the objective function monotonically decreases
\cite{daube1986iterative,lee2001algorithms}. Another extensively studied
method is alternating nonnegative least squares (ANLS), which represents a
class of methods where the subproblems for $U$ and $V$ are solved exactly
following the framework described in Alg.~\ref{nls}. ANLS is guaranteed to
converge to a stationary point \cite{grippo2000convergence} and has been shown
to perform very well in practice with active set \cite{kim2008nonnegative,
kim2011fast}, projected gradient \cite{lin2007projected}, quasi-Newton
\cite{zdunek2006non}, or accelerated gradient \cite{guan2012nenmf} methods as
the subproblem solver. Therefore, we focus on ANLS in this paper. 


\subsubsection{Secure Distributed NMF}
Secure distributed NMF problem is meaningful with practical applications. Suppose two hospitals $A$ and $B$ have different clinical records, $M_1$ and $M_2$ (i.e., matrices), for same set of phenotypes. For legal or commercial concerns, it is required that none of the hospitals can reveal personal records to another directly. For the purpose of phenotype classification, NMF task can be applied independently (i.e., $M_1\approx U_1V_1\T$ and $M_2\approx U_2V_2\T$). However, since $M_1$ and $M_2$ have the same schema for phenotypes, the concatenated matrix $M=[M_1, M_2]$ can be taken as input for NMF and results in better user (i.e., patients) latent representations $V_1$ and $V_2$ by sharing the same item (i.e., phenotypes) latent representation $U$:
\begin{equation}
M=\begin{bmatrix}M_1  & M_2\\\end{bmatrix}
\approx \begin{bmatrix}U V_1\T  & U V_2\T \\\end{bmatrix} 
= U \cdot \begin{bmatrix}V_1\T  & V_2\T \\\end{bmatrix}.
\end{equation} 

Throughout the factorization process, a \emph{secure} distributed NMF should
guarantee that party $A$ only has access to $M_1$, $U$ and $V_1$ and party $B$
only has access to $M_2$, $U$ and $V_2$. It is worth noting that the above problem of distributed NMF with two parties can be straightforwardly extended to $N$ parties. The requirement of all parties over federated data in secure distributed NMF is actual the so-called \emph{$t$-private protocol} (shown in Definition~\ref{def:honest} with $t=N-1$) which derives from secure function evaluation~\cite{NaorN01}. In this paper, we will use it to assess whether a distributed NMF is \emph{secure}.
\begin{definition}{($t$-private protocol).}
\label{def:honest}
All $N$ parties follow the protocol honestly, but they are also curious about inferring
other party's private information based on their own data (i.e., honest-but-curious). A protocol is $t$-private if any $t$ parties who collude at the end of the protocol learn nothing beyond their own outputs.
\end{definition}

Note that a single matrix transpose
operation transforms a column-concatenated matrix to a row-concatenated
matrix. Without loss of generality, we only consider the scenario that
matrices are concatenated along rows in this paper.


Secure distributed NMF problem is hard in nature. Firstly, party $A$ needs to solve the NMF problem to get $U$ and $V_1$ together with party $B$. At the same time, party $A$ should not be able to infer $V_2$ or $M_2$ during the whole process. Such secure requirement makes it totally different from traditional distributed NMF problem, whose data partition already incurs secure violation. 

\hide{
Since a single matrix transpose operation transforms a column-concatenated matrix to a row-concatenated matrix, without loss of generality we only consider the scenario that matrices are concatenated along rows in this paper. The formal problem is defined as following:
\begin{definition}
	Suppose we have $R$ honest but curious parties $\{P_1, P_2, ..., P_R\}$, and each $P_i$ hold one part (i.e. $M_i$) of matrix $M=[M_1, M_2, ..., M_R]$. \textbf{Secure distributed NMF} is to extract latent matrices $U$ and $V_i$ for each party $P_i$ using distributed NMF
	\begin{align*}
		M=&\begin{bmatrix}M_1  & M_2 & \dots &M_{R-1} & M_R \\\end{bmatrix}\\
		\approx &\begin{bmatrix}U V_1\T  & U V_2\T &\dots&U V_{R-1}\T & U V_{R}\T \\\end{bmatrix} \\
		= & U \cdot  \begin{bmatrix}V_1\T  & V_2\T &\dots&V_{R-1}\T &V_{R}\T  \\\end{bmatrix}\\
		=& U V\T
	\end{align*}
	by solving an optimization problem:
	\begin{equation} \label{eq:problem}
	\min_{\substack{U\in\R_+^{m\times k}, V\in\R_+^{n\times k}}} \left\|M-UV\T\right\|^2_F,
	\end{equation}
	where $\|X\|_F=\left(\sum_{ij}x_{ij}^2 \right)^{1/2}$ denotes the Frobenius norm of $X$, and $V_i$ is one part of output matrix $V=[V_1, V_2, ..., V_R]$. The whole distributed NMF process is called \textbf{secure} if and only if any party $P_i$ cannot infer $M_j$ and $V_j$, $j\neq i$ during any step of the system.
\end{definition} 

Note that, we assume the parties are \textit{honest but curious} in the definition. It means that all parties would follow the protocol honestly, but they are also curious about inferring other party's private information based on their own data. 

}

\subsection{Related Work}
\label{sec:related_work}
In the sequel, we briefly review three research areas which are related to this paper.

\subsubsection{Accelerating NMF}
\label{sec:related_work:acc}
Parallel NMF algorithms are well studied in the literature~\cite{kanjani2007parallel,robila2006parallel}. However, different from a parallel and single machine setting, data sharing and communication have considerable cost in a distributed setting. Therefore, we need specialized NMF algorithms for massive scale data handling in a distributed environment.
The first method in this direction~\cite{liu2010distributed} is based on the MU algorithm.
It mainly focuses on sparse matrices and applies a careful partitioning of the data in order to maximize data locality and parallelism.
Later, CloudNMF~\cite{liao2014cloudnmf}, a MapReduce-based NMF algorithm similar to~\cite{liu2010distributed}, was implemented and tested on large-scale biological datasets.
Another distributed NMF algorithm~\cite{yin2014scalable} leverages block-wise updates for local aggregation and parallelism. It also performs frequent updates using whenever possible the most recently updated data, which is more efficient
than traditional concurrent counterparts. Apart from MapReduce implementations, Spark is also attracting attention for its advantage in iterative algorithms, e.g., using MLlib~\cite{meng2016mllib}. Finally, there are implementations using X10~\cite{grove2014supporting} and on GPU~\cite{mejia2015nmf}.

The most recent and related work in this direction is MPI-FAUN~\cite{kannan2016high,kannan2016mpi}, which is the first implementation of NMF using MPI for interprocessor communication. MPI-FAUN is flexible and can be utilized for a broad class of NMF algorithms that iteratively solve NLS subproblems including
MU, HALS, and ANLS/BPP.
MPI-FAUN exploits the independence of local update computation for rows of $U$ and $V$
to apply communication-optimal matrix multiplication. In a nutshell, the full matrix $M$ is split across a two-dimensional grid of processors and multiple copies of both $U$ and $V$ are kept at different nodes, in order to reduce the communication between nodes during the iterations of NMF algorithms.

\subsubsection{Matrix Sketching}\label{sec:sketch}
Matrix sketching is a technique that has been previously used in numerical linear algebra~\cite{gower2015randomized}, statistics \cite{PilanciW16} and optimization \cite{PilanciW17}. Its basic idea is described as follows. Suppose we need to find a solution $x$ to the equation: $Ax=b,\quad (A\in\R^{m\times n},\ b\in\R^{m})$.
Instead of solving this equation directly, in each iteration of matrix sketching, a random matrix $S\in\R^{d\times m}$ $(d\ll m)$ is generated, and we instead solve the following problem: $(SA)x=Sb$.
Obviously, the solution to the first equation is also a solution to the second equation, but not vice versa. However, the problem size has now decreased from $m\times n$ to $d\times n$. With a properly generated random matrix $S$ and an appropriate method to solve subproblem in the second equation, it can be guaranteed that we will progressively approach the solution to the first equation by iteratively applying this sketching technique.


To the best of our knowledge, there is only one piece of previous work \cite{wang2010efficient} which incorporates dual random projection into the NMF problem, in a centralized environment,
sharing similar ideas as SANLS,
the centralized version of our DSANLS algorithm.
However, Wang et al. \cite{wang2010efficient} did not provide an efficient subproblem solver, and their method was less effective than non-sketched methods in practical experiments.
Besides, data sparsity was not taken into consideration in their work.
Furthermore, no theoretical guarantee was provided for NMF with dual random projection. 
In short, SANLS is not same as \cite{wang2010efficient} and DSANLS is much more than a distributed version of \cite{wang2010efficient}. The methods that we propose in this paper are efficient in practice and have strong theoretical guarantees.

\subsubsection{Secure Matrix Computation on Federated Data}
In federated data mining, parties collaborate to perform data processing task
on the union of their unencrypted data, without leaking their private data to
other participants~\cite{lindell2000privacy}.  A surge of work in the
literature studies federated matrix computation algorithms, such as
privacy-preserving gradient descent~\cite{WanNHL07,HanNWL10}, eigenvector
computation~\cite{PathakR10a}, singular value
decomposition~\cite{han2009privacy,ChenLZ17}, \emph{k}-means
clustering~\cite{SakumaK10}, and spectral clustering~\cite{LinJ11} over
partitioned data on different parties.  Secure multi-party computation (MPC)
are applied to preserve the privacy of the parties involved (e.g. secure
addition, secure multiplication and secure dot
product)~\cite{DuanC08,SakumaK10}.  
These Secure MPC protocols compute arbitrary function among $n$ parties and tolerate up to $t<(1/2)n$ corrupted parties, at a cost $\Omega(n)$ per bit~\cite{beerliova2008perfectly, damgaard2007scalable}. These protocols are too generic when it comes to a specific task like secure NMF. Our proposed protocol does not incorporate costly MPC multiplication protocols while tolerates up to $n$-1 corrupted (static, honest but curious) parties.
Recently, \citet{KimSYJ17}
proposed a federated method to learn phenotypes across multiple hospitals with
alternating direction method of multipliers (ADMM) tensor factorization; and
\citet{feng2018privacy} developed a privacy-preserving tensor decomposition
framework for processing encrypted data in a federated cloud setting.  

\section{DSANLS: Distributed Sketched ANLS}\label{sec:DSANLS}
In this section, we illustrate our DSANLS method for accelerating NMF in general distributed environment.

\subsection{Data Partitioning}\label{sec:DSANLS:part}
Assume there are $N$ computing nodes in the cluster. We partition the row indices $\{1,2,\dots,m\}$ of the input matrix $M$ into $N$ disjoint sets $I_1,I_2,\dots,I_N$, where $I_r\subset\{1,2,\dots,m\}$
is the subset of rows assigned to node $r$, as in~\cite{liu2010distributed}.
Similarly,
we partition the column indices $\{1,2,\dots,n\}$ into disjoint sets $J_1,J_2,\dots,J_N$ and assign column set $J_r$ to node $r$.
The number of rows and columns in each node are near the same in order to achieve load balancing,
i.e.,
$\left|I_r\right|\approx m/N$ and $\left|J_r\right|\approx n/N$ for each node $r$.
The factor matrices $U$ and $V$ are also assigned to nodes accordingly,
i.e., node $r$
stores and updates $U_{I_r:}$ and $V_{J_r:}$ as shown in Fig.~\ref{fig:storage}.


\begin{figure}[!t]
  \centering
  \subfigure[DSANLS]{
    \label{fig:storage}
    \includegraphics[width=0.53\linewidth]{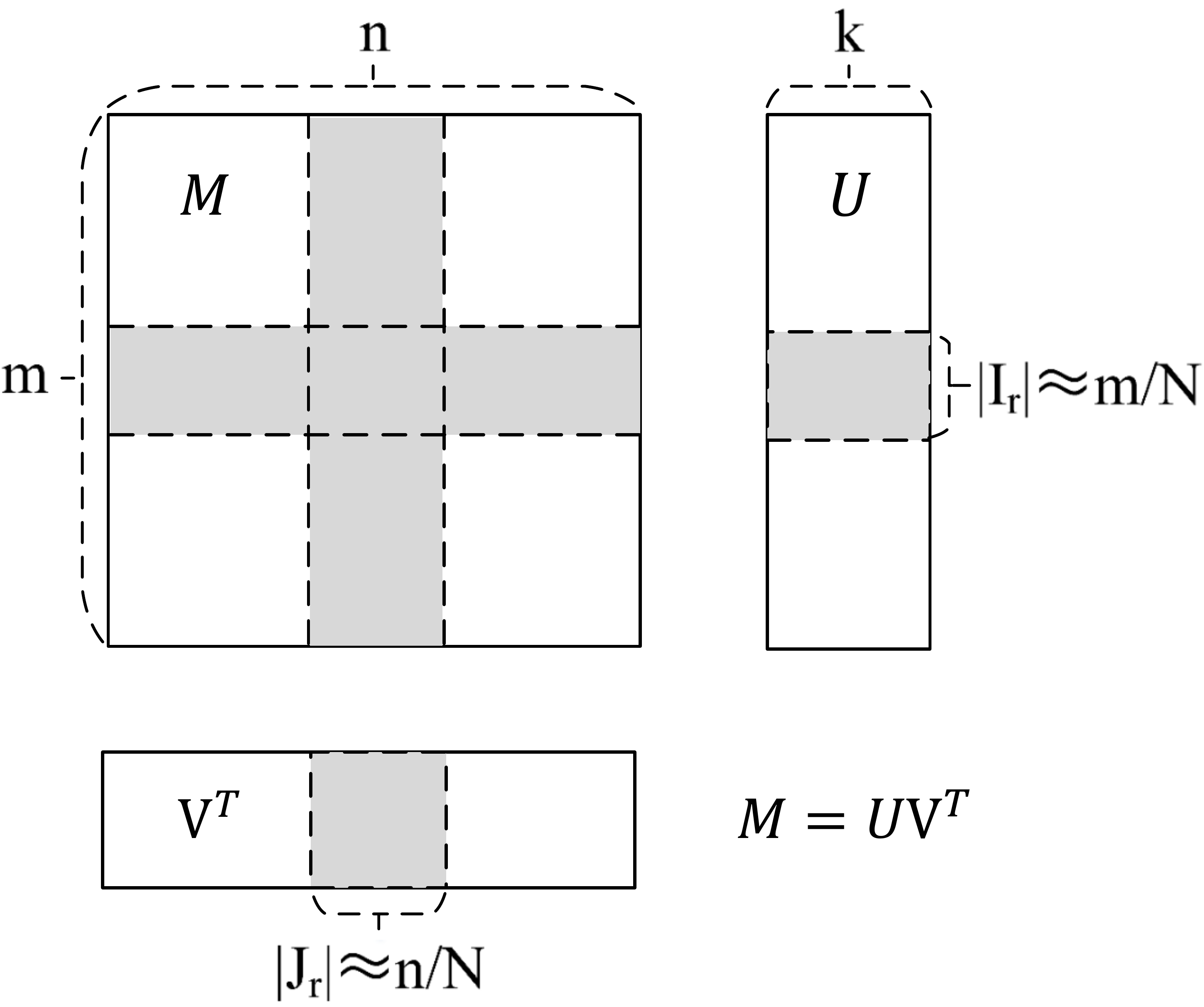}}
  \hspace{0.1in}
  \subfigure[Secure NMF]{
    \label{fig:secure_storage}
    \includegraphics[width=0.4\linewidth]{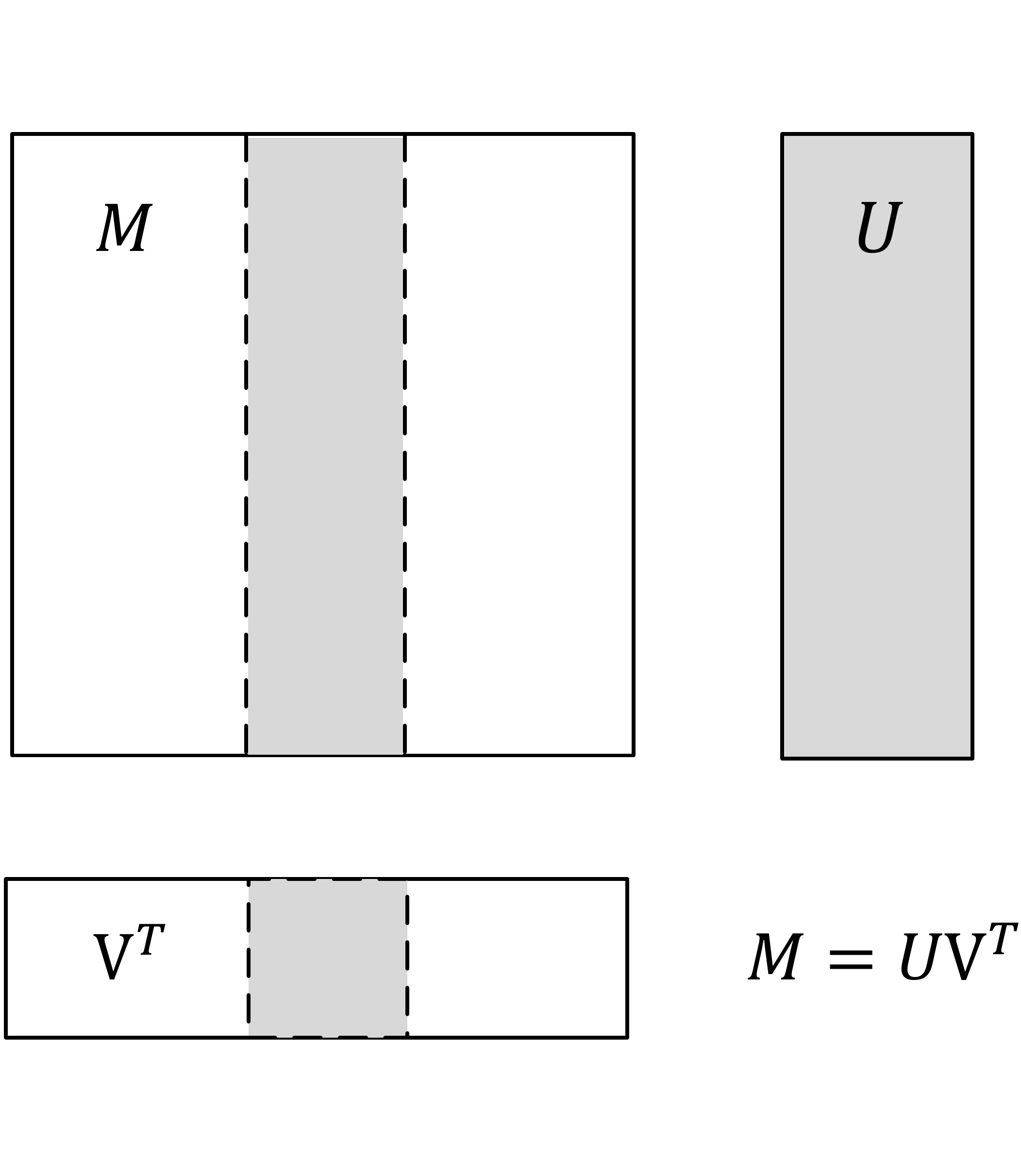}}
  \caption{Partitioning data to $N$ nodes with node $r$'s data shaded.}
\end{figure}

Data partitioning in distributed NMF differs from that in parallel NMF. Previous works on parallel NMF \cite{kanjani2007parallel,robila2006parallel} choose to partition $U$ and $V$ along the long dimension, but we adopt the row-partitioning of $U$ and $V$ as in~\cite{liu2010distributed}. To see why,
take the $U$-subproblem \eqref{eq:subproblem} as an example
and observe that it is
row-independent in nature, i.e., the $r$-th row block of its solution $U_{I_r:}$ is given by:
\begin{equation} 
  \label{eq:u_block}
  U_{I_r:}=  \argmin_{U_{I_r:}\in\mathbb{R}_+^{|I_r|\times k}}  \left\|M_{I_r:} - U_{I_r:} V^\top\right\|^2_F,
\end{equation}
and thus can be solved independently without referring to any other row blocks of $U$.
The same holds for the $V$-subproblem.
In addition, no communication is needed concerning $M$ when solving \eqref{eq:u_block} because
$M_{I_r:}$ is already present in node $r$.

On the other hand,
solving \eqref{eq:u_block} requires the entire $V$ of size $n\times k$, meaning that every node needs to gather $V$ from all other nodes. This process can easily be
the bottleneck of a naive distributed ANLS implementation.
As we will explain shortly, our DSALNS algorithm alleviates this problem, since we use a sketched matrix of reduced size instead of the original complete matrix $V$.

\subsection{SANLS: Sketched ANLS}
To better understand DSANLS, we first introduce the Sketched ANLS (SANLS), i.e., a centralized version of our algorithm.
Recall that in Sec.~\ref{sec:nmf:algo}, at each step of ANLS, either $U$ or $V$ is fixed and we solve a nonnegative least square problem \eqref{eq:subproblem} or \eqref{eq:subproblem_v} over the other variable.
Intuitively, it is unnecessary to solve this subproblem with high accuracy, because we may not have reached the optimal solution for the fixed variable so far.
Hence, when the fixed variable changes in the next step, this accurate solution from the previous step will not be optimal anymore and will have to be re-computed. Our idea is to
apply matrix sketching for each subproblem, in order to obtain an approximate solution for it at a much lower computational and communication cost.

Specifically, suppose we are at the $t$-th iteration of ANLS, and our current estimations for $U$ and $V$ are $U^t$ and $V^t$ respectively. We must solve subproblem \eqref{eq:subproblem} in order to update $U^t$ to a new matrix $U^{t+1}$.
We apply matrix sketching to the residual term of subproblem \eqref{eq:subproblem}. The subproblem now becomes:
\begin{equation}\label{eq:sketched_u}
\min_{U\in\R_+^{m\times k}} \left\|M S^t - U \left(V^{t\top} S^t\right) \right\|_F^2,
\end{equation}
where $S^t\in\R^{n\times d}$ is a randomly-generated matrix. Hence, the problem size decreases from $n\times k$ to $d\times k$.
$d$ is chosen to be much smaller than $n$, in order to sufficiently reduce the computational cost%
\footnote{However, we should not choose an extremely small $d$, otherwise the
the size of sketched subproblem would become so small that it can hardly represent the original subproblem, preventing NMF from converging to a good result.
In practice, we can set $d=0.1n$ for medium-sized matrices and $d=0.01n$ for large matrices if $m\approx n$. When $m$ and $n$ differ a lot, e.g., $m\ll n$ without loss of generality, we should not apply sketching technique to the $V$ subproblem (since solving the $U$ subproblem is much more expensive) and simply choose $d=m\ll n$.
}.
Similarly, we transform the $V$-subproblem into:
\begin{equation}\label{eq:sketched_v}
\min_{V\in\R_+^{n\times k}} \left\|M\T S'^t - V \left(U^{t\top} S'^t\right) \right\|_F^2,
\end{equation}
where $S'^t\in \mathbb{R}^{m\times d'}$ is also a random matrix with $d'\ll m$.

\subsection{DSANLS: Distributed SANLS}\label{sec:distributed SANLS}
Now, we come to our proposal: the distributed version of SANLS called DSANLS. Since the $U$-subproblem \eqref{eq:sketched_u} is the same as the $V$-subproblem \eqref{eq:sketched_v} in nature, here we restrict our attention to the $U$-subproblem.
The first observation about subproblem \eqref{eq:sketched_u} is that it is still row-independent, thus node $r$ only needs to solve:
\begin{equation}
  \label{eq:solve}
  \min_{U_{I_r:}\in\mathbb{R}^{|I_r|\times k}_+} \left\|\left(M S^t\right)_{I_r:} - U_{I_r:} \left(V^{t\top} S^t\right) \right\|_F^2.
\end{equation}
For simplicity, we denote:
\begin{equation}\label{eq:def_A_B}
  A^t_{r}\triangleq \left(MS^t\right)_{I_r:}\quad\text{and}\quad \ B^t\triangleq V^{t\top} S^t,
\end{equation}
and the subproblem \eqref{eq:solve} can be written as:
\begin{equation}\label{eq:sketched_sub}
  \min_{U_{I_r:}\in\mathbb{R}^{|I_r|\times k}_+} \left\|A^t_r - U_{I_r:} B^t \right\|_F^2.
\end{equation}
Thus, node $r$ needs to know matrices $A^t_r$ and $B^t$ in order to solve the subproblem.

For $A^t_r$, by applying matrix multiplication rules, we get
$A^t_r=\left(MS^t\right)_{I_r:} = M_{I_r:}S^t$. Therefore, if $S^t$ is stored
at node $r$, $A^t_r$ can be computed without any communication.
On the other hand, computing $B^t=\left(V^{t\top} S^t\right)$ requires communication across the whole cluster, since the rows of $V^t$ are distributed across different nodes. Fortunately, if we assume that $S^t$ is stored at all nodes again, we can compute $B^t$ in a much cheaper way. Following block matrix multiplication rules, we can rewrite $B^t$ as:
\begin{scriptsize}
\begin{equation}
  B^t=V^{t\top} S^t = \left[\left(V^t_{J_1:}\right)\T\ \cdots\ \left(V^t_{J_N:}\right)\T\right]
  \left[
  \begin{array}{c}
  S^t_{J_1:} \\
  \vdots \\
  S^t_{J_N:}
  \end{array}
  \right]
  =\sum_{r=1}^N \left(V^t_{J_r:}\right)\T S^t_{J_r:}.
\end{equation}
\end{scriptsize}
Note that the summand $\bar{B}^t_r\triangleq\left(V^t_{J_r:}\right)\T S^t_{J_r:}$ is a matrix of size $k\times d$ and can be computed locally. As a result, communication is only needed for summing up the matrices $\bar{B}^t_r$ of size $k\times d$ by using MPI all-reduce operation, which is much cheaper than transmitting the whole $V_t$ of size $n\times k$.

Now, the only remaining problem is the transmission of $S^t$. Since $S^t$ can be dense, even larger than $V^t$, broadcasting it across the whole cluster can be quite expensive. However, it turns out that we can avoid this.
Recall that $S^t$ is a randomly-generated matrix; each node can generate exactly the same matrix, if we use the same pseudo-random generator and the same seed.
Therefore, we only need to broadcast the random seed, which is just an integer, at the beginning of the whole program. This ensures that each node generates exactly the same random number sequence and hence the same random matrices $S^t$ at each iteration.

In short, the communication cost of each node is reduced from $\mathcal{O}(n k)$ to $\mathcal{O}(d k)$ by adopting our sketching technique for the $U$-subproblem. Likewise, the communication cost of each $V$-subproblem is decreased from $\mathcal{O}\left(m k\right)$ to $\mathcal{O}\left(d' k\right)$. The general framework of our DSANLS algorithm is listed in Alg.~\ref{alg:disSANLS}.

\begin{algorithm}[!t]
  \caption{Distributed SANLS on Node $r$}
  \label{alg:disSANLS}
  \small
  \textbf{Input}: $M_{I_r:}$ and $M_{:J_r}$\\
  \textbf{Parameter}: Iteration number $T$
  \begin{algorithmic}[1] 
    \State {Initialize $U^0_{I_r:}\geq0$, $V^0_{J_r:}\geq0$}
    \State {Broadcast the random seed}
    \For {$t=0$ \textbf{to} $T-1$}
       \State {Generate random matrix $S^t\in\R^{n\times d}$ }
       \State {Compute $A^t_r\leftarrow M_{I_r:}S^t$ }
       \State {Compute $\bar{B}^t_r\leftarrow \left(V^t_{J_r:}\right)\T S^t_{J_r:} $ }
       \State {All-Reduce: $B^t\leftarrow\sum_{i=1}^N \bar{B}^t_i$ }
       \State {Update $U^{t+1}_{I_r:}$ by solving $\min_{U_{I_r:}}\|A^t_r - U_{I_r:} B^t\|$ }
       \State
       \State {Generate random matrix $S'^t\in\R^{m\times d'}$ }
       \State {Compute $A'^t_r\leftarrow\left(M_{:J_r}\right)\T S'^t$ }
       \State {Compute $\bar{B}'^t_r\leftarrow \left(U^t_{I_r:}\right)\T S'^t_{I_r:}$ }
       \State {All-Reduce: $B'^t\leftarrow\sum_{i=1}^N \bar{B}'^t_i$ }
       \State {Update $V^{t+1}_{J_r:}$  by solving $\min_{V_{J_r:}}\|A'^t_r - V_{J_r:} B'^t\|$ }
    \EndFor
    \State {\textbf{return} $U^T_{I_r:}$ and $V^T_{J_r:}$}
  \end{algorithmic}
\end{algorithm}

\subsection{Generation of Random Matrices}\label{sec:generate matrices}
A key problem in Alg.~\ref{alg:disSANLS} is how to generate random matrices $S^t\in\mathbb{R}^{n\times d}$ and $S'^t\in\mathbb{R}^{m\times d'}$. Here we focus on generating a random $S^t\in\R^{d\times n}$ satisfying Assumption \ref{assumption:1}. The reason for choosing such a random matrix is that the corresponding sketched problem would be equivalent to the original problem on expectation; we will prove this in Sec.~\ref{sec:solving subproblems}.
\begin{assumption}\label{assumption:1}
Assume the random matrices are normalized and have bounded variance, i.e.,
 there exists a constant $\sigma^2$ such that $\mathbb{E}\left[S^t S^{t\top}\right]=I\quad\text{and}\quad\mathbb{V}\left[S^t S^{t\top}\right] \leq \sigma^2$
for all $t$, where $I$ is the identity matrix.
\end{assumption}

Different options exist for such matrices, which
have different computation costs in forming sketched matrices $A^t_r=M_{I_r:} S^t$ and $\bar{B}^t_r=\left(V^t_{J_r:}\right)\T S^t_{J_r:}$.
Since $M_{I_r:}$ is much larger than $V^t_{J_r:}$ and thus computing $A^t_r$ is more expensive, we only consider the cost of constructing $A^t_r$ here.

The most classical choice for a random matrix is one with i.i.d. Gaussian entries having mean 0 and variance $1/d$.
It is easy to show that $\mathbb{E}\left[S^t S^{t\top}\right]=I$. Besides, Gaussian random matrix has bounded variance because Gaussian distribution has finite fourth-order moment. However, since each entry of such a matrix is totally random and thus no special structure exists in $S^t$, matrix multiplication will be expensive. That is, when given $M_{I_r:}$ of size $|I_r|\times n$, computing its sketched matrix $A^t_r=M_{I_r:}S^t$ requires $\mathcal{O}(|I_r|n d)$ basic operations.

A seemingly better choice for $S^t$ would be a {\em subsampling} random matrix. Each column of such random matrix is uniformly sampled from $\{e_1,e_2,\dots,e_n\}$ without replacement,
where $e_i\in\R^n$ is the $i$-th canonical basis vector (i.e., a vector having its $i$-th element 1 and all others 0). We can easily show that such an $S^t$ also satisfies $\mathbb{E}\left[S^t S^{t\top}\right]=I$ and the variance $\mathbb{V}\left[S^t S^{t\top}\right]$ is bounded, but this time
constructing the sketched matrix $A^t_r=M_{I_r:}S^t$ only requires $\mathcal{O}\left(|I_r|d\right)$. Besides, subsampling random matrix can preserve the sparsity of original matrix.
Hence, a subsampling random matrix would be favored over a Gaussian random matrix by most applications, especially for very large-scale or sparse problems.
On the other hand, we observed in our experiments that a Gaussian random matrix can result in a faster per-iteration convergence rate, because each column of the sketched matrix $A^t_r$ contains entries from multiple columns of the original matrix and thus is more informative. Hence, it would be better to use a Gaussian matrix when the sketch size $d$ is small and thus a $\mathcal{O}(|I_r|n d)$ complexity is acceptable, or
when the network speed of the cluster is poor, hence we should trade more local computation cost for less communication cost.

Although we only test two representative types of random matrices
(i.e., Gaussian and subsampling random matrices), our framework is
readily applicable for other choices, such as subsampled
randomized Hadamard transform (SRHT) \cite{ailon2006approximate,lu2013faster} and count sketch \cite{clarkson2013low,pham2013fast}. 
The choice of random matrices is not the focus of this paper and
left for future investigation.

\subsection{Solving Subproblems}\label{sec:solving subproblems}
Before describing how to solve subproblem \eqref{eq:sketched_sub},
let us make an important observation. As discussed in Sec.~\ref{sec:sketch}, the sketching technique has been applied in solving linear systems before.
However, the situation is different in matrix factorization. Note that for the distributed matrix factorization problem we usually have:
\begin{equation}
\min_{U_{I_r:}\in\mathbb{R}_+^{|I_r|\times k}}\left\|M_{I_r:}-U_{I_r:}V^{t\top}\right\|_F^2 \neq 0.
\end{equation}
So, for the sketched subproblem \eqref{eq:sketched_sub}, which can be equivalently written as:
\begin{equation}
\min_{U_{I_r:}\in\mathbb{R}_+^{|I_r|\times k}}\left\|\left(M_{I_r:}-U_{I_r:}V^{t\top}\right)S^t\right\|_F^2,
\end{equation}
where the non-zero entries of the residual matrix
$\left(M_{I_r:}-U_{I_r:}V^{t\top}\right)$ will be scaled by the matrix
$S^t$ at different levels. As a consequence, the optimal solution will
be shifted because of sketching. This fact alerts us that for SANLS,
we need to update $U^{t+1}$ by exploiting the sketched subproblem
\eqref{eq:sketched_sub} to step towards the true optimal solution and
avoid convergence to the solution of the sketched subproblem.

\subsubsection{Projected Gradient Descent}\label{sec:gradient descent}
A natural method is to use \emph{one step}\footnote{Note that we only apply one step of projected gradient descent here to avoid solution shifted. } of projected gradient descent for the sketched subproblem:
\begin{equation}
\label{eq:gradient_descent}
\begin{aligned}
U^{t+1}_{I_r:} &= \max\left\{U^t_{I_r:} - \eta_t \left.\nabla_{U_{I_r:}} \left\|A^t_r - U_{I_r:} B^t \right\|_F^2\right|_{U_{I_r:}=U^t_{I_r:}},\ 0\right\} \\
&= \max\left\{U^{t}_{I_r:} - 2\eta_t\left[ U^t_{I_r:} B^t B^{t\top} -  A^t_r B^{t\top} \right], \ 0\right\}, 
\end{aligned}
\end{equation}
where $\eta_t>0$ is the step size and $\max\{\cdot,\cdot\}$ denotes the entry-wise maximum operation. In the gradient descent step \eqref{eq:gradient_descent}, the computational cost mainly comes from two matrix multiplications: $B^t B^{t\top}$ and $A_{t,r} B^{t\top}$. Note that $A^t_r$ and $B^t$ are of sizes $|I_r|\times d$ and $k\times d$ respectively, thus the gradient descent step takes $\mathcal{O}\left(kd(|I_r|+k)\right)$ in total.

To exploit the nature of this algorithm, we further expand the gradient:
\begin{equation}
\begin{aligned}
  &\nabla_{U_{I_r:}} \left\|A^t_r - U_{I_r:} B^t \right\|_F^2
  =2\left[U_{I_r:} B^t B^{t\top} -  A^t_r B^{t\top}\right] \\
  \stackrel{\eqref{eq:def_A_B}}{=}& 2\left[ U_{I_r:} \left(V^{t\top} S^t\right)\left(V^{t\top} S^t\right)\T - \left(M_{I_r:}S^t\right)\left(V^{t\top} S^t\right)\T \right] \\
  =& 2\left[ U_{I_r:} V^{t\top}\left( S^t S^{t\top}\right)V^t - M_{I_r:}\left( S^t S^{t\top}\right)V^t \right].
\end{aligned}
\end{equation}
By taking the expectation of the above equation, and using the fact $\mathbb{E}\left[S^tS^{t\top}\right]=I$, we have:
\begin{equation}
\begin{aligned}
&\mathbb{E}\left[ \nabla_{U_{I_r:}} \left\|A^t_r - U_{I_r:} B^t \right\|_F^2 \right]
= 2\left[ U_{I_r:}V^{t\top} V^t - M_{I_r:}V^t \right] \\
=& \nabla_{U_{I_r:}} \left\|M_{I_r:} - U_{I_r:} V^{t\top} \right\|_F^2
\end{aligned}
\end{equation}
which means that the gradient of the sketched subproblem is equivalent to the gradient of the original problem on expectation.
Therefore, such a step of gradient descent can be interpreted as a (generalized) \emph{stochastic gradient descent} (SGD) \cite{nemirovski2009robust} method on the original subproblem. Thus, according to the theory of SGD, we naturally require the step sizes $\left\{\eta_t\right\}$ to be diminishing, i.e., $\eta_t\rightarrow 0$ as $t$ increases.

\subsubsection{Proximal Coordinate Descent}
However, it is well known that the gradient descent method converges
slowly, while the coordinate descent
method, namely the HALS method for NMF, is quite efficient
\cite{gillis2014and}. Still, because of its very fast convergence, HALS should not be applied to the sketched subproblem directly 
because it shifts the solution away from the true optimal solution.
Therefore, we would like to develop a method which resembles HALS but will not converge
towards the solutions of the sketched subproblems.

To achieve this, we add a regularization term to the sketched subproblem \eqref{eq:sketched_sub}. The new subproblem becomes:
\begin{equation} 
\label{eq:reg_sub}
\min_{U_{I_r:}\in\mathbb{R}_+^{|I_r|\times k}} \left\| A^t_r - U_{I_r:}B^t \right\|_F^2 + \mu_t\left\|U_{I_r:} - U^t_{I_r:}\right\|_F^2,
\end{equation}
where $\mu_t>0$ is a parameter.
Such regularization is reminiscent to the proximal point method \cite{rockafellar1976monotone}
and parameter $\mu_t$ controls the step size as $1/\eta_t$ in projected gradient descent. We therefore require $\mu_t\rightarrow +\infty$ to enforce the convergence of the algorithm, e.g., $\mu_t=t$.

At each step of proximal coordinate descent, only one column of $U_{I_r:}$, say $U_{I_r,j}$ where $j\in\{1,2,\dots,k\}$, is updated:
\begin{equation}
  \min_{U_{I_r:j}\in\mathbb{R}_+^{|I_r|}} \bigg\|A^t_r - U_{I_r:j} B^t_{j:} - \sum_{l\neq j} U_{I_r:l} B^t_{l:} \bigg\|_F^2 + \mu_t \left\|U_{I_r:j} - U^t_{I_r:j} \right\|_2^2.
\end{equation}
It is not hard to see that the above problem is still row-independent, which means that each entry of the row vector $U_{I_r:j}$ can be solved independently at each node. For example, for any $i\in I_r$, the solution of $U^{t+1}_{i:j}$ is given by:
\begin{equation}
\label{eq:entry_sol}
\begin{aligned}
U^{t+1}_{i:j}=&\argmin_{U_{i:j}\geq 0} \bigg\| \left(A^t_r\right)_{i:} - U_{i:j}B^t_{j:} - \sum_{l\neq j} U_{i:l}B^t_{l:} \bigg\|_2^2 \\
&+ \mu_t \left\|U_{i:j} - U^t_{i:j}\right\|^2_2 \\
=&\max\left\{\frac{ \mu_t U_{i:j}^t + \left(A^t_r\right)_{i:} B_{j:}^{t\top} - \sum_{l\neq j} U_{i:l}B^{t}_{l:}B^{t\top}_{j:} }{ B^t_{j:} B^{t\top}_{j:} + \mu_t }, 0 \right\}. 
\end{aligned}
\end{equation}

At each step of coordinate descent, we choose the column $j$ from $\{1,2,\dots,k\}$ successively. When updating column $j$ at iteration $t$, the columns $l<j$ have already been updated and thus $U_{I_r:l}=U^{t+1}_{I_r:l}$, while the columns $l>j$ are old so $U_{I_r:l}=U^{t}_{I_r:l}$.

The complete proximal coordinate descent algorithm for the $U$-subproblem is
summarized in Alg.~\ref{alg:cd}. When updating column $j$,
computing the matrix-vector multiplication $A^t_r B_{j:}^{t\top}$
takes $\mathcal{O}(d|I_r|)$. The whole inner loop takes $\mathcal{O}\left(k\left(d+|I_r|\right)\right)$ because one vector dot product of length $d$ is required for computing each summand and the summation itself needs $\mathcal{O}\left(k|I_r|\right)$. Considering that there are $k$ columns in total, the overall complexity of coordinate descent is $\mathcal{O}\left(k(\left(k+d\right)|I_r|+kd)\right)$. Typically, we choose $d>k$, so the complexity can be simplified to $\mathcal{O}\left(kd\left(|I_r|+k\right)\right)$, which is the same as that of gradient descent.

Since proximal coordinate descent is much more efficient than projected gradient descent, we adopt it as the default subproblem solver within DSANLS.

\begin{algorithm}[!t]
  \caption{Proximal Coordinate Descent for Local Subproblem \eqref{eq:sketched_sub}  on Node $r$ }
  \label{alg:cd}
  \small
  \textbf{Parameter:} $\mu_t>0$ 
  \begin{algorithmic}[1] 
    \For{$j=1$ \textbf{to} $k$}
       \State $T\leftarrow \mu_t U_{I_r:j}^t +  A^t_r B_{j:}^{t\top}$ 
       \For{$l=1$ \textbf{to} $j-1$}
           \State $T\leftarrow T - \left(B^t_{l:} B_{j:}^{t\top} \right) U^{t+1}_{I_r:l}$
       \EndFor
       \For{$l=j+1$ \textbf{to} $k$}
           \State $T\leftarrow T - \left(B^t_{l:} B_{j:}^{t\top} \right) U^{t}_{I_r:l}$
       \EndFor
       \State $U^{t+1}_{I_r:j}\leftarrow \max\left\{T / \left(B_{j:}^t B_{j:}^{t\top} + \mu_t\right),0 \right\}$
    \EndFor
    \State \textbf{return} $U_{I_r:}^{t+1}$
  \end{algorithmic}
\end{algorithm}

\subsection{Theoretical Analysis} \label{sec:theoretical}
\subsubsection{Complexity Analysis}\label{sec:complexity}
We now analyze the computational and communication costs of our DSANLS algorithm, when using subsampling random sketch matrices.
The computational complexity at each node is:
\begin{equation}
\label{eq:complexity}
\begin{aligned}
	&\mathcal{O}\big(\overbrace{d}^{\text{generating }S^t} + \overbrace{|I_r|d}^{\text{constructing $A^t_r$ and $B^t$}} + \overbrace{kd(|I_r|+k)}^{\text{solving subproblem}} \big) \\
	=\ &\mathcal{O}\left( kd(|I_r|+k) \right) \approx \mathcal{O}\left( kd\left(\frac{m}{N}+k\right) \right).
\end{aligned}
\end{equation}
Moreover, as we have shown in Sec.~\ref{sec:distributed SANLS}, the communication cost of DSANLS is $\mathcal{O}\left(kd\right)$.

On the other hand, for a classical implementation of distributed HALS \cite{fairbanks2015behavioral}, the computational cost is:
\begin{equation}
\label{eq:naive complexity}
\begin{aligned}
	\mathcal{O}\left(kn\left(|I_r| + k\right)\right)\approx \mathcal{O}\left(kn\left(\frac{m}{N}+k\right)\right)
\end{aligned}
\end{equation}
and the communication cost is $\mathcal{O}\left(kn\right)$ due to
the all-gathering of $V^t$'s.

Comparing the above quantities, we observe an $n/d\gg 1$ speedup of our DSANLS algorithm over HALS in both computation and communication. However,
we empirically observed
that DSANLS has a slower per-iteration convergence rate (i.e., it needs more iterations to converge).
Still, as we will show in the next section, in practice, DSANLS is superior to alternative distributed NMF algorithms, after taking all factors into account.

\subsubsection{Convergence Analysis}\label{sec:convergence}
Here we provide theoretical convergence guarantees for the proposed
SANLS and DSANLS algorithms. We show that SANLS and DSANLS
converge to a stationary point.

To establish convergence result, Assumption \ref{assumption:2} is needed first.



\begin{assumption}\label{assumption:2}
Assume all the iterates $U^t$ and $V^t$ have uniformly bounded norms,
which means that there exists a constant $R$ such that
$\|U^t\|_F\leq R$ and $\|V^t\|_F\leq R$
for all $t$.
\end{assumption}

We experimentally observed that this assumption holds in practice,
as long as the step sizes used are not too large.
Besides, Assumption \ref{assumption:2} can also be enforced by imposing additional constraints, such as:
\begin{equation}
\label{eq:extra_con}
\begin{aligned}
U_{i:l}\leq \sqrt{2\|M\|_F} \quad\text{and}\quad V_{j:l}\leq  \sqrt{2\|M\|_F} \quad \forall i,j,l,
\end{aligned}
\end{equation}
with which we have $R=\max\{m,n\}k \sqrt{2\|M\|_F}$.
Such constraints can be very easily handled by both of our projected
gradient descent and regularized coordinate descent
solvers.
Lemma \ref{lemma:exisit_optimal} shows that
imposing such extra constraints does not prevent us from finding the
global optimal solution.
\begin{lemma}\label{lemma:exisit_optimal}
	If the optimal solution to the original problem \eqref{eq:problem} exists, there is at least one global optimal solution in the domain \eqref{eq:extra_con}.
\end{lemma}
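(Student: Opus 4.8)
The plan is to exploit the column-scaling invariance of the NMF objective together with the nonnegativity of the factors, and reduce the lemma to a per-column bound. First I would take any global optimum $(U^\star,V^\star)$ of \eqref{eq:problem}, which exists by hypothesis, and recall that for any positive scalars $\alpha_1,\dots,\alpha_k$, replacing each column $U^\star_{:l}$ by $\alpha_l U^\star_{:l}$ and each $V^\star_{:l}$ by $\alpha_l^{-1}V^\star_{:l}$ leaves the product $U^\star V^{\star\top}=\sum_{l=1}^k U^\star_{:l}\left(V^\star_{:l}\right)\T$ — and hence the objective value — unchanged, while preserving nonnegativity. So it suffices to find scalars $\alpha_l$ making the rescaled pair satisfy the entrywise constraints in \eqref{eq:extra_con}.

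Next I would bound the size of the optimal product. Since $U=0$ is feasible and yields objective $\|M\|_F$, the optimal value is at most $\|M\|_F$, so by the triangle inequality $\|U^\star V^{\star\top}\|_F\leq \|M\|_F+\|M-U^\star V^{\star\top}\|_F\leq 2\|M\|_F$. The crucial step is then to turn this bound on the whole product into a bound on each rank-one layer using nonnegativity: because every summand $U^\star_{:l}\left(V^\star_{:l}\right)\T$ has nonnegative entries, it is dominated entrywise by the full sum $U^\star V^{\star\top}$, whence $\|U^\star_{:l}\left(V^\star_{:l}\right)\T\|_F\leq\|U^\star V^{\star\top}\|_F\leq 2\|M\|_F$. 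Using the identity $\|uv\T\|_F=\|u\|_2\|v\|_2$, the scaling-invariant quantity $c_l\triangleq\|U^\star_{:l}\|_2\,\|V^\star_{:l}\|_2$ satisfies $c_l\leq 2\|M\|_F$ for every $l$.

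Finally I would balance the two factors column by column. For columns with $c_l>0$, choosing $\alpha_l=\sqrt{\|V^\star_{:l}\|_2/\|U^\star_{:l}\|_2}$ makes both rescaled columns have $2$-norm exactly $\sqrt{c_l}\leq\sqrt{2\|M\|_F}$; columns with $c_l=0$ (i.e. $U^\star_{:l}=0$ or $V^\star_{:l}=0$) give a zero rank-one term and can simply be set to the zero vector on both sides. Since any entry of a column is bounded by that column's $2$-norm, every entry of the rescaled $U$ and $V$ is at most $\sqrt{2\|M\|_F}$, so the rescaled pair lies in the domain \eqref{eq:extra_con} and remains globally optimal. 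The main obstacle is identifying that the right invariant to track is the per-column product $\|U^\star_{:l}\|_2\|V^\star_{:l}\|_2$ and that nonnegativity is exactly what makes each rank-one layer entrywise dominated by $U^\star V^{\star\top}$ — this is the only place nonnegativity is used, and the claim would genuinely fail for unconstrained matrix factorization; the remaining balancing computation and the zero-column edge case are routine.
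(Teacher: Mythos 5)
Your proof is correct, and it shares the same overall skeleton as the paper's: both arguments exploit the per-column scaling invariance $U_{:l}\mapsto \alpha_l U_{:l}$, $V_{:l}\mapsto \alpha_l^{-1}V_{:l}$, choose $\alpha_l$ as a square-root ratio to balance the two factors, and ultimately rest on the observation that the zero solution already achieves objective value $\|M\|_F$. Where you differ is in how the per-column, scale-invariant bound is obtained. The paper argues by contradiction at the level of a single entry: if $\max_i U^*_{i:l}\cdot\max_j V^*_{j:l}>2\|M\|_F$ for some $l$, then by nonnegativity the corresponding entry of $U^*V^{*\top}$ exceeds $2\|M\|_F$ while $|M_{i:j}|\le\|M\|_F$, so the squared error already exceeds $\|M\|_F^2$ and $(U^*,V^*)$ cannot be optimal; it then balances the column maxima $\alpha_l=\max_i U^*_{i:l}$ and $\beta_l=\max_j V^*_{j:l}$. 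You instead bound the product directly, $\|U^\star V^{\star\top}\|_F\le 2\|M\|_F$ by the triangle inequality, use nonnegativity to dominate each rank-one layer entrywise by the full sum, and balance the column $2$-norms via $\|uv^\top\|_F=\|u\|_2\|v\|_2$. Your route proves marginally more (the whole column $2$-norm, not just the maximal entry, is at most $\sqrt{2\|M\|_F}$), and you handle the degenerate case $c_l=0$ explicitly, whereas the paper's rescaling by $\sqrt{\beta_l/\alpha_l}$ silently divides by zero when a column of $U^*$ or $V^*$ vanishes. Both proofs use nonnegativity in exactly one place --- to prevent cancellation between rank-one layers --- so the substance is the same and either argument would serve.
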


Based on Assumptions \ref{assumption:1} (see Sec.~\ref{sec:generate matrices}) and Assumption \ref{assumption:2}, we now can formally show our main convergence result:
\begin{theorem}\label{theorem:pgd}
	Under Assumptions \ref{assumption:1} 
   and \ref{assumption:2}, if the step sizes satisfy $
	\sum_{t=1}^{\infty} \eta_t = \infty$ and $\sum_{t=1}^{\infty} \eta_t^2 < \infty$, for projected gradient descent, or $\sum_{t=1}^{\infty} 1/\mu_t = \infty$ and $\sum_{t=1}^{\infty} 1/\mu_t^2 < \infty$, for regularized coordinate descent, then SANLS and DSANLS with either sub-problem solver will converge to a stationary point of problem \eqref{eq:problem} with probability 1.
\end{theorem}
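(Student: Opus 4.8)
The plan is to recast both variants of SANLS as \emph{stochastic approximation} schemes applied to the original, non-sketched objective $f(U,V)=\|M-UV\T\|_F^2$, and then invoke almost-sure convergence theory for constrained nonconvex stochastic gradient methods. The key observation, already established in Sec.~\ref{sec:gradient descent}, is that the sketched gradient is an \emph{unbiased} estimator of the true block gradient: by Assumption~\ref{assumption:1} (namely $\E\left[S^t S^{t\top}\right]=I$), $\E\left[\nabla_{U_{I_r:}}\|A^t_r-U_{I_r:}B^t\|_F^2\right]=\nabla_{U_{I_r:}}\|M_{I_r:}-U_{I_r:}V^{t\top}\|_F^2$. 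Hence one step of projected gradient descent on the sketched subproblem is exactly one step of projected \emph{stochastic} gradient descent on the original subproblem. My first task would be to bound the second moment of this stochastic gradient $g^t$: Assumption~\ref{assumption:1} controls $\mathbb{V}\left[S^t S^{t\top}\right]\le\sigma^2$ and Assumption~\ref{assumption:2} bounds $\|U^t\|_F,\|V^t\|_F\le R$, which together yield a uniform bound $\E\|g^t\|_F^2\le G^2$ with $G$ depending only on $R$, $\sigma$, and $\|M\|_F$.

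Next I would show that the proximal coordinate descent variant fits the same template. The regularized update \eqref{eq:entry_sol} is the exact minimizer of the sketched objective plus the proximal penalty $\mu_t\|U_{I_r:}-U^t_{I_r:}\|_F^2$; writing out its optimality condition shows that $U^{t+1}_{I_r:}-U^t_{I_r:}$ equals $-(1/\mu_t)$ times a sketched gradient evaluated at the new iterate, i.e.\ an implicit (proximal) step. Setting $\eta_t:=1/\mu_t$, this is again a stochastic gradient step driven by an unbiased estimate of the true gradient, so the conditions $\sum_t 1/\mu_t=\infty$ and $\sum_t 1/\mu_t^2<\infty$ coincide with the Robbins--Monro conditions $\sum_t\eta_t=\infty$, $\sum_t\eta_t^2<\infty$. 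The column-by-column sweep needs a little extra bookkeeping, since within one iteration some columns use the updated $U^{t+1}_{I_r:l}$ and others the old $U^t_{I_r:l}$; I would absorb this into the bounded per-step displacement, which is $\mathcal{O}(1/\mu_t)$ and hence asymptotically negligible.

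With both solvers cast as projected/proximal SGD with unbiased, bounded-variance gradients, the heart of the proof is a supermartingale (Robbins--Siegmund) argument. Since $f$ is a quadratic and the iterates stay in the bounded region of Assumption~\ref{assumption:2}, $f$ is $L$-smooth there, so the descent lemma yields a one-step inequality of the form $\E[f^{t+1}\mid\mathcal{F}_t]\le f^t-c\,\eta_t\|\mathcal{G}^t\|_F^2+C\eta_t^2$, where $\mathcal{G}^t$ is the projected-gradient (stationarity) mapping at the active block. Summability of $\sum_t\eta_t^2$ makes the noise term summable, while $\sum_t\eta_t=\infty$ prevents premature stalling; the Robbins--Siegmund theorem then gives almost-sure convergence of $f^t$ together with $\liminf_t\|\mathcal{G}^t\|_F=0$, which is promoted to $\lim_t\|\mathcal{G}^t\|_F=0$ by the standard nonconvex-SGD continuity argument. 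The alternating two-block structure is handled by treating each half-iteration as one stochastic step on the active block (the fixed block being bounded), and DSANLS requires no separate analysis: by the row-independence of \eqref{eq:sketched_u}--\eqref{eq:solve}, the distributed iterates are identical to the centralized SANLS iterates, so the guarantee transfers verbatim.

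The main obstacle I anticipate is establishing \emph{almost-sure} convergence (rather than convergence in expectation of an averaged gradient) for this \emph{nonconvex, two-block alternating} scheme: one must verify that the supermartingale inequality holds jointly across the interleaved $U$- and $V$-updates, and that the $\liminf=0$ conclusion can be upgraded to $\lim=0$. The proximal coordinate descent variant compounds the difficulty because its step is implicit and taken column-wise, so the most delicate checks are that each sweep still defines a valid unbiased descent direction for the whole block and that the induced discrepancy is uniformly controlled by $1/\mu_t$.
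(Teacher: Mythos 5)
Your overall architecture coincides with the paper's: interpret the sketched update as projected stochastic gradient descent on the original objective (unbiasedness and bounded variance of $\tilde{G}^t$ under Assumptions~\ref{assumption:1}--\ref{assumption:2} is exactly Lemma~\ref{lemma:expecation}), treat the proximal coordinate step as an implicit step with effective step size $\approx 1/\mu_t$, apply a supermartingale convergence theorem (Lemma~\ref{lemma:martingale}), and upgrade $\liminf$ to $\lim$ via a deterministic summability lemma (Lemma~\ref{lemma:convergence_gradient}). However, there is a genuine gap at the crux: you claim that the descent lemma yields $\E[f^{t+1}\mid\mathcal{F}_t]\le f^t-c\,\eta_t\|\mathcal{G}^t\|_F^2+C\eta_t^2$ with $\mathcal{G}^t$ the projected-gradient mapping. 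In the nonconvex, nonnegativity-constrained setting this does not follow from smoothness plus unbiasedness. The expansion of $f(U^{t+1},V^t)$ (Eq.~\eqref{eq:sufficient_decrease}) produces the cross term $\eta_t\sum_{i,l}\E[G^t_{i:l}\Delta^t_{i:l}]$ where $\Delta^t_{i:l}=\min\{U^t_{i:l}/\eta_t,\tilde{G}^t_{i:l}\}$ is the \emph{truncated} stochastic gradient; the projection interacts nonlinearly with the noise, so $\E[\Delta^t]$ is not the true gradient and $\E[G^t_{i:l}\Delta^t_{i:l}]$ is not $\|\mathcal{G}^t_{i:l}\|^2$, nor is it obviously even nonnegative. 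You flag this as an anticipated obstacle but supply no mechanism to overcome it, and the inequality as you state it is not derivable by the route you describe.

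The paper's resolution requires two ideas absent from your proposal. First, a sign-preservation property: $G^t_{i:l}\cdot\tilde{G}^t_{i:l}\ge 0$ for every realization of $S^t$, so the truncation $\min\{U^t_{i:l}/\eta_t,\tilde{G}^t_{i:l}\}$ can only clip, never reverse, the descent direction entrywise. Second, a quantitative lower bound on $\E[\min\{X,c\}]$ for a nonnegative random variable with known mean and variance, proved via Cantelli's inequality (Lemma~\ref{lemma:min}); combining the two gives Lemma~\ref{lemma:intern}, which lower-bounds $\E[G^t_{i:l}\Delta^t_{i:l}]$ by the surrogate $\phi(U^t_{i:l}/\eta_t,G^t_{i:l},\sigma'^2)$ of Eq.~\eqref{eq:def_phi}. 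The resulting progress term $\Phi(U^t/\eta_t,G^t)$ is weaker than $\|\mathcal{G}^t\|_F^2$, and stationarity is extracted at the end by showing $\phi\to 0$ forces $\min\{U^t_{i:l},|G^t_{i:l}|\}\to 0$. Without this entrywise analysis (or an equivalent device for handling the projection), your supermartingale inequality cannot be established, so the proposal as written does not close.
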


The proofs of Lemma \ref{lemma:exisit_optimal} and Theorem \ref{theorem:pgd} can be found in Appendices~\ref{sec:lemma1} and~\ref{proof:Theorem}.

\section{Secure Distributed NMF}
\label{sec:secure}
In this section, we provide our solutions to the problem of secure distributed NMF over federated data.

\subsection{Extend DSANLS to Secure Setting}
DSANLS and all lines of works discussed in Sec.~\ref{sec:related_work:acc} store copies of $M$ across two-dimensional (shown in Fig.~\ref{fig:storage}), and exploit the independence of local update computation for rows of $U$ and $V$ to apply communication-optimal matrix multiplication. They cannot be applied directly to secure distributed NMF setting. The reason is that, in secure distributed NMF setting (shown in Fig.~\ref{fig:secure_storage}), only one column copy is stored in each node, while the others cannot be disclosed.

Nevertheless, DSANLS can be adapted to this secure setting with modification, but only for \emph{one or limited iterations}. The reason is illustrated in Theorem~\ref{theorem:secure}. In modified DSANLS algorithm, each node still takes charge of updating $U_{I_r:}$ and $V_{J_r:}$ as before, but only one copy $M_{:J_r}$ of $M=[M_1, M_2, ..., M_N]$ will be stored in node $r$. Thus, $V$-subproblem is exactly the same as in DSANLS. Differently, we need to use MPI-AllReduce function to gather $M_{:J_r}S^t$ from all nodes before each iteration of $U$-subproblem, so that each node has access to fully sketched matrix $MS^t$ to solve sketched $U$-subproblem. Note that here random matrix $S^t$ not only helps reduce the communication cost from $\mathcal{O}(m n)$ to $\mathcal{O}(m d)$ with a smaller NLS problem, but also conceals the full matrix $M$ in each iteration. 



\begin{theorem}\label{theorem:secure}
	$M$ cannot be recovered only using information about $MS$ (or $SM$) and $S$.
\end{theorem}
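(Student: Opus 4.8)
The plan is to recast the recovery question as a statement about injectivity of a linear map and then exhibit an explicit ambiguity. Treat the sketch $S\in\R^{n\times d}$ as fixed and known, and regard the leaked observation $C:=MS\in\R^{m\times d}$ as the only data about $M\in\R^{m\times n}$ available to the adversary. Exact recovery of $M$ is then equivalent to inverting the linear map $\Phi\colon\R^{m\times n}\to\R^{m\times d}$ defined by $\Phi(M)=MS$, and the assertion ``$M$ cannot be recovered'' means precisely that $\Phi$ fails to be injective, so that many distinct matrices are consistent with the same observed pair $(C,S)$.

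First I would record the decisive dimensional fact: because we sketch with $d\ll n$, we have $\mathrm{rank}(S)\le d<n$, so the left null space $\mathcal{K}:=\{y\in\R^{1\times n}:yS=0\}$ has dimension $n-\mathrm{rank}(S)\ge n-d>0$ and hence contains a non-zero vector. The constraint $MS=C$ moreover decouples row by row into $m$ independent systems $M_{i:}S=C_{i:}$, each consisting of only $d$ equations in $n$ unknowns and therefore underdetermined.

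Next I would construct an explicit collision. Choose any non-zero $y\in\mathcal{K}$, let $N\in\R^{m\times n}$ be the matrix each of whose rows equals $y$, and set $M':=M+N$. Then $M'S=MS+NS=C$ while $M'\ne M$, so $M'$ produces exactly the same observation under the same known $S$; since $\mathcal{K}$ has positive dimension there are in fact infinitely many such $M'$, and an adversary holding only $(C,S)$ has no way to single out the true $M$. The dual case $SM$ with $S\in\R^{d\times m}$ and $d\ll m$ follows symmetrically, replacing the left null space of $S$ by its right null space $\{x\in\R^{m}:Sx=0\}$, whose dimension is again at least $m-d>0$, and putting the null-space vectors into the columns of $N$ instead of the rows.

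I expect the only delicate points to be matters of interpretation rather than computation. The honest statement is non-identifiability: $M$ is not \emph{uniquely} determined by $(MS,S)$, not that no information whatsoever about $M$ leaks. I would also address the nonnegativity constraint $M\in\R_+^{m\times n}$, which an adversary may try to exploit: for an $M$ with strictly positive entries the perturbed collision $M':=M+\epsilon N$ remains nonnegative for all sufficiently small $\epsilon>0$, so the ambiguity persists within the feasible set as well. Finally I would connect this to the surrounding remark that security holds only for ``limited iterations'': once enough independent sketches $MS^t$ are gathered so that the concatenated matrix $[\,S^1\ \cdots\ S^T\,]$ attains full row rank $n$, the common left null space collapses to $\{0\}$, the map $\Phi$ becomes injective, and $M$ indeed becomes recoverable.
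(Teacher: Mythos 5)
Your proof is correct, and it is in fact more rigorous than the one in the paper. The paper's argument only observes that $S$ is not a square matrix and hence not invertible, which strictly speaking rules out just the one recovery formula $M = (MS)S^{-1}$ rather than recovery by any means; your proof closes that gap by showing the map $\Phi(M)=MS$ is non-injective, exhibiting the nontrivial left null space $\mathcal{K}$ of dimension at least $n-d>0$ and the explicit family of collisions $M'=M+\epsilon N$. The shared intuition is the same dimensional fact ($d\ll n$ makes the system underdetermined), but your formalization buys a genuine non-identifiability guarantee, correctly frames the claim as non-uniqueness rather than zero leakage, handles the nonnegativity constraint so the ambiguity survives inside the feasible set, and cleanly anticipates the paper's Theorem~\ref{theorem:nosecure}: once the accumulated sketches $[\,S^1\ \cdots\ S^T\,]$ reach full row rank, the common null space collapses and $M$ becomes recoverable, which is exactly the Gaussian-elimination argument the paper gives there. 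The paper's version buys only brevity.
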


\begin{proof}
    Assume $S$ is a square matrix. Given $MS$ (or $SM$) and $S$, we are able to get $M$ by $M=MSS^{-1}$ (or $M=S^{-1}$SM).
	However, the numbers of row and column are highly imbalanced in $S$ and it is not a square matrix. Therefore $M$ cannot be recovered only using information about $MS$ (or $SM$) and $S$.
\end{proof}

However, NMF is an iterative algorithm (shown in Alg.~\ref{nls}). Secure computation in limited iterations cannot guarantee an acceptable accuracy for practical use due to the following reason:
\begin{theorem}\label{theorem:nosecure}
	$M$ can be recovered after enough iterations.
\end{theorem}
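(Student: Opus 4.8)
The plan is to exploit the fact that, in the modified secure DSANLS, each honest-but-curious node learns at every iteration $t$ both the sketch $S^t\in\R^{n\times d}$ (since only the random seed is broadcast, every node regenerates the identical $S^t$) and the full sketched matrix $MS^t\in\R^{m\times d}$ (gathered through the AllReduce of the local products $M_{:J_r}S^t$). Thus, after $T$ iterations an adversarial node holds the collection $\{(S^t, MS^t)\}_{t=0}^{T-1}$, in which every $S^t$ is known exactly and every $MS^t$ is observed. Whereas Theorem~\ref{theorem:secure} guarantees that a \emph{single} non-square $S^t$ loses information about $M$, the point is that accumulating many such sketches restores it.

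First I would stack the observations horizontally. Writing $\tilde{S} \triangleq [\,S^0 \;\cdots\; S^{T-1}\,] \in \R^{n\times Td}$ and noting that the concatenation of the revealed products equals $[\,MS^0 \;\cdots\; MS^{T-1}\,] = M\tilde{S}$, recovery reduces to solving the linear system $M\tilde{S} = (\text{observed data})$ for the unknown $M$ with $\tilde{S}$ fully known. The key step is then to argue that for $T$ large enough $\tilde{S}$ attains full row rank $n$; once it does, $\tilde{S}$ admits a right inverse $\tilde{S}^+ = \tilde{S}\T(\tilde{S}\tilde{S}\T)^{-1}$ with $\tilde{S}\tilde{S}^+ = I_n$, and the adversary recovers $M = (M\tilde{S})\,\tilde{S}^+$ exactly.

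It remains to establish the rank condition, which is where the two families of sketch matrices must be treated separately and is the main obstacle. For a Gaussian $S^t$, the stacked $\tilde{S}$ has i.i.d.\ Gaussian entries, so as soon as $Td\ge n$ (i.e. $T\ge \lceil n/d\rceil$) it has full row rank $n$ with probability $1$, and recovery succeeds. For a subsampling $S^t$, whose columns are distinct canonical basis vectors, I would instead use a coupon-collector argument: each iteration reveals $d$ fresh coordinate directions, and across independent iterations every coordinate $1,\dots,n$ is eventually selected with probability $1$, so $\tilde{S}$ contains all of $e_1,\dots,e_n$ and hence has rank $n$; in fact $M\tilde{S}$ then directly exposes the corresponding columns of $M$. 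In both cases $M$ is recoverable with probability $1$ after sufficiently many iterations, which proves the theorem and shows that the one-iteration security of Theorem~\ref{theorem:secure} cannot be sustained over the many iterations NMF needs for acceptable accuracy.
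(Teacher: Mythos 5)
Your proposal is correct and follows essentially the same route as the paper: the paper also treats the accumulated $(S^t, MS^t)$ pairs as a linear system in the unknown $M$ and solves it once enough equations are available. You go further by explicitly verifying the full-row-rank condition for both Gaussian and subsampling sketches, a step the paper's one-line proof leaves implicit.
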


\begin{proof}
	If we view $ M \cdot S = MS$ as a system of linear equations with a variable matrix $M$ and constant matrices $S$ and $MS$. Each row of $M$ can be solved by a standard Gaussian Elimination solver, given a sufficient number of ($S$, $MS$) pairs.
\end{proof}

Theorem \ref{theorem:nosecure} suggests that DSANLS algorithm suffers from the dilemma of choosing between information disclosure and unacceptable accuracy, making it impractical to real applications. Therefore, we need to propose new practical solutions to secure distributed NMF.

\subsection{Synchronous Framework}
A straightforward solution to secure distributed NMF is that each node solves a local NMF problem with a local copy of $U$ (denoted as $U_{(r)}$ for node $r$). Periodically, nodes communicate with each other, and update local copy of $U$ to the aggregation of all local copies $U_{(j)}, j\in\{1,\cdots,N\}$ by All-Reduce operation. We name this method as \emph{Syn-SD} under synchronous setting. The detailed algorithm is shown in Alg.~\ref{Syn_SD}. Within inner iterations, every node maintains its own copy of $U$ (i.e., $U_{(r)}$) by solving the regular NMF problem. Every $T_2$ rounds, different local copies of $U$ will be averaged through nodes by using $\sum_{j=1}^N U_{(j)}/N$. Note that, $U_{(r)}$ is one copy of the whole matrix $U$ stored locally in node $r$, while $V_{J_r:}$ is the corresponding part of the matrix $V=[V_{J_1:}, V_{J_2:}, ..., V_{J_N:}]$ stored in node $r$.

In Syn-SD, the local copy $U_{(r)}$ in node $r$ will be updated to a uniform aggregation of local copies from all nodes periodically. Small number of inner iteration $T_2$ incurs large communication cost caused by All-Reduce. Larger $T_2$ may lead to slow convergence, since each node does not share any information of its local copy $U_{(r)}$ inside the inner iterations. 

To improve the efficiency of data exchange, we incorporate matrix sketching to \emph{Syn-SD}, and propose an improved version called \emph{Syn-SSD}. In Syn-SSD, information of local copies is shared across cluster nodes more frequently, with communication overhead roughly the same as Syn-SD. As shown in Alg.~\ref{Syn_SSD}, the sketched version $S^tU_{(r)}$ of the local copy $U_{(r)}$ is exchanged within each inner iteration. There are two advantages of applying matrix sketching: (1) Since the sketched matrix has a much smaller size, All-Reduce operation causes much less communication cost, making it affordable with higher frequency. (2) Solving a sketched NLS problem can also reduce the computation cost due to a reduced problem size of solving $U_{(r)}$ and $V_{J_r:}$ for each node. It is worth noting that $S_1^t$ is exactly the same for each node by using the same seed and generator. The same for $S_2^t$. But $S_1^t$ and $S_2^t$ are not necessarily equivalent. With such a constraint, the algorithm is equivalent to NMF in single-machine environment and the convergence can be guaranteed. 

\begin{algorithm}[tb]
	\caption{Syn-SD: Secure Distributed NMF on node $r$}
	\label{Syn_SD}
	\small
	\textbf{Input}: $M_{:J_r}$\\
	\textbf{Parameter}: Iteration numbers $T_1, T_2$
	\begin{algorithmic}[1] 
		\State initialize $U_{(r)}^0\geq0$, $V_{J_r:}^0\geq0$
		\For {$t_1=0$ \textbf{to} $T_1-1$}
		\For {$t_2=1$ \textbf{to} $T_2$}
		\State $t\leftarrow t_1 \times T_2 + t_2$
		\State $U_{(r)}^{t}\leftarrow $ update($M_{:J_r}$, $U_{(r)}^{t-1}$, $V_{J_r:}^{t-1}$)
		\State $V_{J_r:}^{t}\leftarrow$ update($M_{:J_r}$, $U_{(i)}^{t}$, $V_{J_r:}^{t-1}$)
		\EndFor
		\State All-Reduce: $U_{(r)}^t\leftarrow \frac{\sum_{j=1}^N U_{(j)}^t}{N}$ 
		\EndFor
		\State \textbf{return} $U_{(r)}^t$ and $V_{J_r:}^t$
	\end{algorithmic}
\end{algorithm}

\begin{algorithm}[!t]
	\caption{Syn-SSD: Secure Sketched Distributed NMF on node $r$}
	\label{Syn_SSD}
	\small
	\textbf{Input}: $M_{:J_r}$\\
	\textbf{Parameter}: Iteration numbers $T_1, T_2$
	\begin{algorithmic}[1] 
		\State initialize $U_{(i)}^0\geq0$, $V_{J_r:}^0\geq0$
		\For {$t_1=0$ \textbf{to} $T_1-1$}
		\For {$t_2=1$ \textbf{to} $T_2$}
		\State $t\leftarrow t_1 \times T_2 + t_2$
		\State Generate random matrix $S_1^t$
		\State $U_{(r)}^{t}\leftarrow $ update($M_{:J_r} S_1^t$, $U_{(r)}^{t-1}$, $V_{J_r:}^{t-1} S_1^t$)
		\State Generate random matrix $S_2^t$
		\State All-Reduce: $\overline{SU}^{t}\leftarrow \frac{\sum_{j=1}^{N}S_2^tU_{(j)}^t}{N}$
		\State $V_{J_r:}^{t}\leftarrow$ update($S_2^tM_{:J_r}$, $\overline{SU}^{t}$, $V_{J_r:}^{t-1}$)
		\EndFor
		\State All-Reduce: $U_{(r)}^t \leftarrow \frac{\sum_{j=1}^{N}U_{(j)}^t}{N}$
		\EndFor
		\State \textbf{return} $U_{(r)}^t$ and $V_{J_r:}^t$
	\end{algorithmic}
\end{algorithm}

It is straightforward to see that Syn-SD and Syn-SSD satisfy Definition~\ref{def:honest} and they are $(N-1)$-private protocols, since $V_{J_r:}$ and $M_{:J_r}$ are only seen by node $r$. 

\subsection{Asynchronous Framework}\label{sec:asynchronous}

In Syn-SD and Syn-SSD, each node must stall until all participating nodes
reach the synchronization barrier before the All-Reduce operation. However,
highly imbalanced data in real scenario of federated data mining may cause
severe workload imbalance problem. The synchronization barrier will force
nodes with low workload to halt, making synchronous algorithms less efficient.
In this section, we study secure distributed NMF in an asynchronous (i.e.,
server/client architecture) setting and propose corresponding asynchronous
algorithms.

First of all, we extend the idea of Syn-SD to asynchronous setting and name
the new method \emph{Asyn-SD}. In Asyn-SD, the server (in
Alg.~\ref{asyn_server}) takes full charge of updating and broadcasting $U^t$.
Once received $U^t_{(r)}$ from the client node $r$, the server would update
$U^t$ locally, and return the latest version of $U^t$ back to the client node
$r$ for further computing.  Note that the server may receive local copies of
$U^t$ from clients in an arbitrary order.  Consequently, we cannot use the
same operation of All-Reduce as Syn-SD any more.  Instead, $U^t$ in server
side is updated by the weighted sum of current $U^t$ and newly received local
copy $U^t_{(r)}$ from client node $r$. Here the relaxation weight $\omega^t$
asymptotically converges to 0. Thus a converged $U^t$ is guaranteed on server
side. Our experiments in Sec.~\ref{sec:experiment} suggest that this
relaxation has no harm to factorization convergence.

\begin{algorithm}[!t]
	\caption{\emph{Asyn-SD}, \emph{Asyn-SSD}: Server part}
	\label{asyn_server}
	\small
	\textbf{Parameter}: Relaxation parameter $\rho$
	\begin{algorithmic}[1] 
		\State initialize $U^0\geq0$
		\State $t\leftarrow0$       \Comment{$t$ is the update counter.}
		\While {not stopping} 
		\State Receive $U^t_{(r)}$ from client node $r$
		\State $\omega^t \leftarrow \frac{\rho}{\rho+t}$ \Comment{$\omega^t$ is the relaxation weight.} 
		\State $U^t \leftarrow(1-\omega^t) U^t +\omega^t U^t_{(r)}$
		\State Send $U^t$ back to client node $r$
		\State $t\leftarrow t+1$
		\EndWhile
		\State \textbf{return} $U^t$
	\end{algorithmic}
\end{algorithm}

\begin{algorithm}[!t]
	\caption{\emph{Asyn-SD}, \emph{Asyn-SSD}: Client part of node $r$}
	\label{asyn_client_SD}
	\small
	\textbf{Input}: $M_{:J_r}$\\
	\textbf{Parameter}: Iteration number $T$
	\begin{algorithmic}[1] 
		\State initialize $V_{J_r:}^0\geq0$
		\While {Server not stopping}
		\State Receive $U$ from server
		\State $U_{(r)}^{0}\leftarrow U$
		\For {$t=1$ \textbf{to} $T$}
		\State $V_{J_r:}^{t}\leftarrow$ update($M_{:J_r}$, $U_{(r)}^{t-1}$, $V_{J_r:}^{t-1}$)
		\State $U_{(r)}^{t}\leftarrow $ update($M_{:J_r}$, $U_{(r)}^{t-1}$, $V_{J_r:}^{t}$) \Comment{For Asyn-SSD, replace it with Lines 5-6 of Alg.~\ref{Syn_SSD}.}
		\EndFor
		\State Send $U_{(r)}^{T}$ to server
		\EndWhile
		\State \textbf{return} $V_{J_r:}^T$
	\end{algorithmic}
\end{algorithm}

On the other hand, client nodes of Asyn-SD (in Alg.~\ref{asyn_client_SD})
behave similarly as nodes in Syn-SD. Clients locally solve the standard NMF
problem for $T$ iterations, and then update local $U^t_{(r)}$ by communicating
only with the server node. Unlike Syn-SD, Asyn-SD does not have a global
synchronization barrier. Client nodes in Asyn-SD independently exchange their
local copy $U^t_{(r)}$ with the server without  an All-Reduce operation.

Similarly, Syn-SSD can be extended to its asynchronous version
\emph{Asyn-SSD}. However, the algorithm for clients is more constrained and
conservative in sketching. Note that the random sketching matrices $S_1$ and
$S_2$ (in Alg.~\ref{Syn_SSD}) should be the same across the nodes in the same
summation in order to have a meaningful summation of sketched matrices.
However, enforcing the same $S_2^t$ for updating sketched $U$ will result in a
synchronous All-Reduce operation. Therefore, $U$ cannot be sketched in
asynchronous algorithms and we only consider sketching $V_{J_r:}$ in Asyn-SSD (Line 7 in Alg.~\ref{asyn_client_SD}). The server part of Asyn-SSD is the same as Asyn-SD in
Alg.~\ref{asyn_server}.

Similar to synchronous versions, Asyn-SD and Asyn-SSD satisfy Definition~\ref{def:honest} and they are $(N-1)$-private protocols, since $V_{J_r:}$ and $M_{:J_r}$ are only seen by node $r$. 

\section{Experimental Evaluation}
\label{sec:experiment}
This section includes an experimental evaluation of our algorithms on both
dense and sparse real data matrices. The implementation of our methods is
available at \url{https://github.com/qianyuqiu79/DSANLS}.


\hide{

\subsection{Datasets}

We use the same (dense and sparse) real datasets as \citet{QianTMC18} for evaluation. They correspond to different NMF tasks, including video analysis, image processing, text mining and community detection. The statistics of the data are summarized in Tab.~\ref{table: Dataset Details}.

\textbf{Video Analysis}. NMF can be used on video data for background
subtraction (i.e., to detect moving objects)~\cite{kim2014algorithms}.
We here use
BOATS\footnote{\url{http://visal.cs.cityu.edu.hk/downloads/}} video
dataset~\cite{chan2011generalized},
which includes boats moving through water. The video has 15 fps and it
is saved as a sequence of png files, whose format is RGB with a frame
size of $360 \times 200$. We use `Boats2' which contains one boat
close to the camera for 300 frames and reshape the matrix such that
every RGB frame is a column of our matrix;
the final matrix is dense with size $216,000 \times 300$.

\textbf{Image Processing}. The first dataset we use for this application is MIT CBCL FACE DATABASE%
\footnote{\url{http://cbcl.mit.edu/software-datasets/FaceData2.html}} as in~\cite{lee1999learning}. To form the vectorized matrix, we use all 2,429 face images (each with $19 \times 19$ pixels) in the original training set. The second dataset is MINST\footnote{\url{http://yann.lecun.com/exdb/mnist/}}, which is a widely used handwritten digits dataset. All 70,000 samples
including both training and test set are used to form the vectorized matrix. The third one is GISETTE\footnote{\url{http://clopinet.com/isabelle/Projects/NIPS2003/\#challenge}}, another widely used dataset in handwritten digit recognition problem. We use all 13,500 pictures in the training, validation, and test datasets and form the vectorized matrix.

\textbf{Text Mining}. We use the Reuters document corpora\footnote{We use its second version RCV1-v2 in \url{http://jmlr.csail.mit.edu/papers/volume5/lewis04a/}.} as in \cite{xu2003document}. Reuters Corpus Volume I (RCV1) \cite{lewis2004rcv1} is an archive of 804,414 manually categorized newswire stories made available by Reuters, Ltd. for research purposes. 
The official LYRL2004 chronological split is utilized. Non-zero values contain cosine-normalized, log TF-IDF vectors.

\textbf{Community Detection}.
We convert the DBLP collaboration network\footnote{\url{http://snap.stanford.edu/data/com-DBLP.html}} into its adjacency matrix.
It is a co-authorship graph where two authors are connected if they
have published at least one paper together.
}

\subsection{Setup}

\begin{table}[!t]
  \centering
  \caption{Statistics of datasets}
  \scalebox{0.95}{
    \begin{tabular}{@{}ccccc@{}}
    \toprule
    Dataset        & \#Rows   & \#Columns & Non-zero values & Sparsity \\ \midrule
    BOATS          & 216,000 & 300      & 64,800,000      & 0\%       \\
    MIT CBCL FACE  & 2,429   & 361      & 876,869         & 0\%       \\
    MNIST          & 70,000  & 784      & 10,505,375      & 80.86\%   \\
    GISETTE        & 13,500  & 5,000    & 8,770,559       & 87.01\%   \\
    Reuters (RCV1) & 804,414 & 47,236   & 60,915,113      & 99.84\%   \\
    DBLP           & 317,080 & 317,080  & 2,416,812       & 99.9976\% \\ \bottomrule
    \end{tabular}
  }
  \label{table: Dataset Details}
\end{table}

We use several (dense and sparse) real datasets as \citet{QianTMC18} for
evaluation. They corresponds to different NMF tasks, including video analysis,
image processing, text mining and community detection. Their statistics are
summarized in Tab.~\ref{table: Dataset Details}.

We conduct our experiments on a Linux cluster with 16 nodes. Each node
contains 8-core Intel\circledR\ Core\textsuperscript{TM} i7-3770 CPU @ 1.60GHz
cores and 16 GB of memory. Our algorithms are implemented in C++ using the
Intel\circledR\ Math Kernel Library (MKL) and Message Passing Interface (MPI). 
By default, we use 10 nodes and set the factorization rank $k$ to 100. We also report the impact of different node number (2-16) and $k$ (20-500). We use $\mu_t=\alpha+\beta t$~\cite{Boyd03}, do the grid search for $\alpha$ and $\beta$ in the range of \{0.1, 1, 10\} for each dataset and report the best results. Because
the use of Gaussian random matrices is too slow on large datasets RCV1 and
DBLP, we only use subsampling random matrices for them. 

For the general acceleration of NMF, we assess DSANLS with subsampling and
Gaussian random matrices, denoted by DSANLS/S and DSANLS/G, respectively,
using proximal coordinate descent as the default subproblem solver. As
mentioned in~\cite{kannan2016high,kannan2016mpi}, it is unfair to compare with
a Hadoop implementation. We only compare DSANLS with
MPI-FAUN\footnote{\url{https://github.com/ramkikannan/nmflibrary}}
(MPI-FAUN-MU, MPI-FAUN-HALS, and MPI-FAUN-ABPP implementations), which is the
first and the state-of-the-art C++/MPI implementation with MKL and Armadillo.
For parameters $pc$ and $pr$ in MPI-FAUN, we use the optimal values for each
dataset, according to the recommendations
in~\cite{kannan2016high,kannan2016mpi}.

For the problem of secure distributed NMF, we evaluate all proposed methods:
Syn-SD, Syn-SSD with sketch on $U$ (denoted as
Syn-SSD-U), Syn-SSD with sketching on $V$ (denoted as
Syn-SSD-V), Syn-SSD with sketching on both $U$ and $V$ (denoted
as Syn-SSD-UV), Asyn-SD, Asyn-SSD with sketching on $V$
(denoted as Asyn-SSD-V), using proximal coordinate descent as the
default subproblem solver. We do not list secure building block methods as
baselines, since communication overhead is heavy in these multi-round
handshake protocols and it is unfair to compare them with MPI based methods. 
For example, a matrix sum described by~\citet{DuanC08} results in 5X
communication overhead compared to a MPI all-reduce operation.  

We use the relative error of the low rank approximation compared to the
original matrix to measure the effectiveness of different NMF approaches. This
error measure has been widely used in previous
work~\cite{kannan2016high,kannan2016mpi,kim2014algorithms} and is formally
defined as $\left\|M-UV\T\right\|_F/\left\|M\right\|_F$.

\subsection{Evaluation on Accelerating General NMF}

\subsubsection{Performance Comparison}

\begin{figure*}[!ht]
  \centering
  \subfigure[BOATS]{
    \label{fig:dsanl:boats}
    \includegraphics[width=0.3\linewidth]{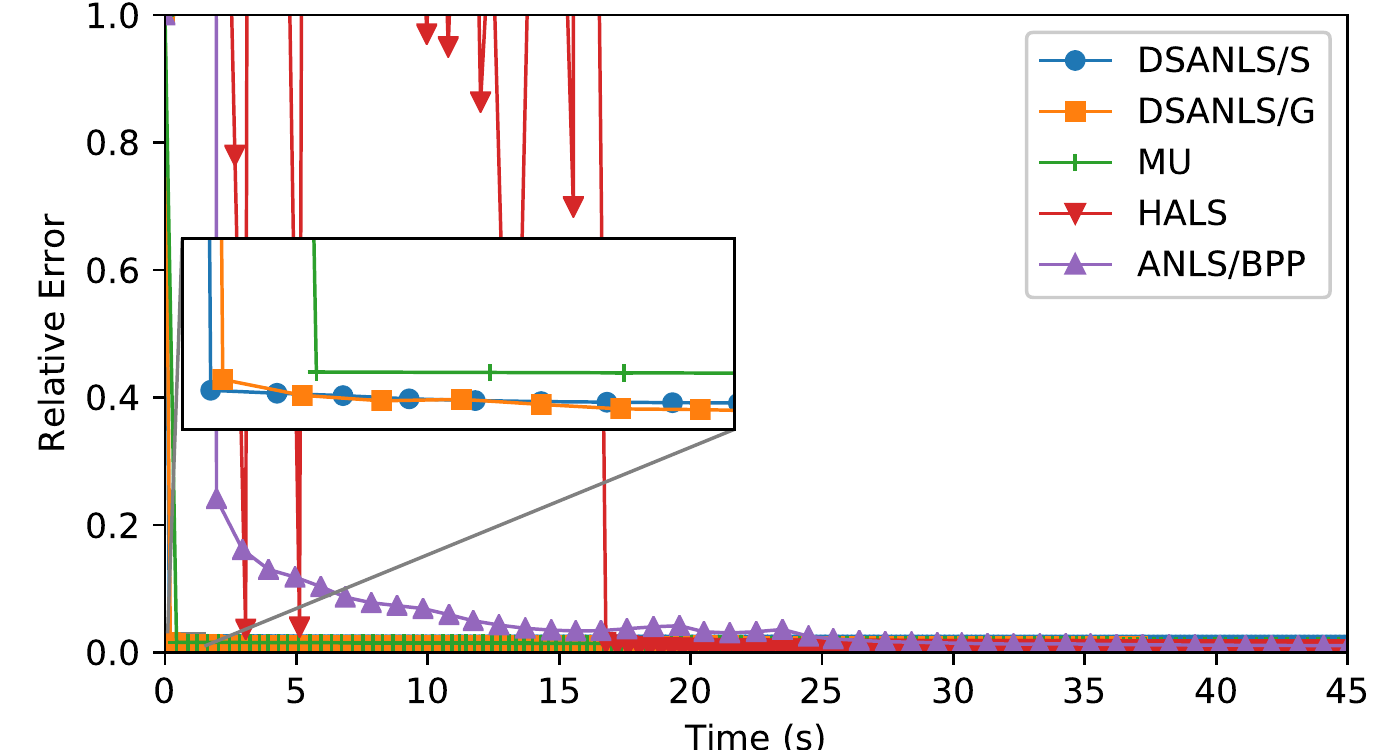}}
  \hspace{0.07in}
  \subfigure[FACE]{
    \label{fig:dsanl:face}
    \includegraphics[width=0.3\linewidth]{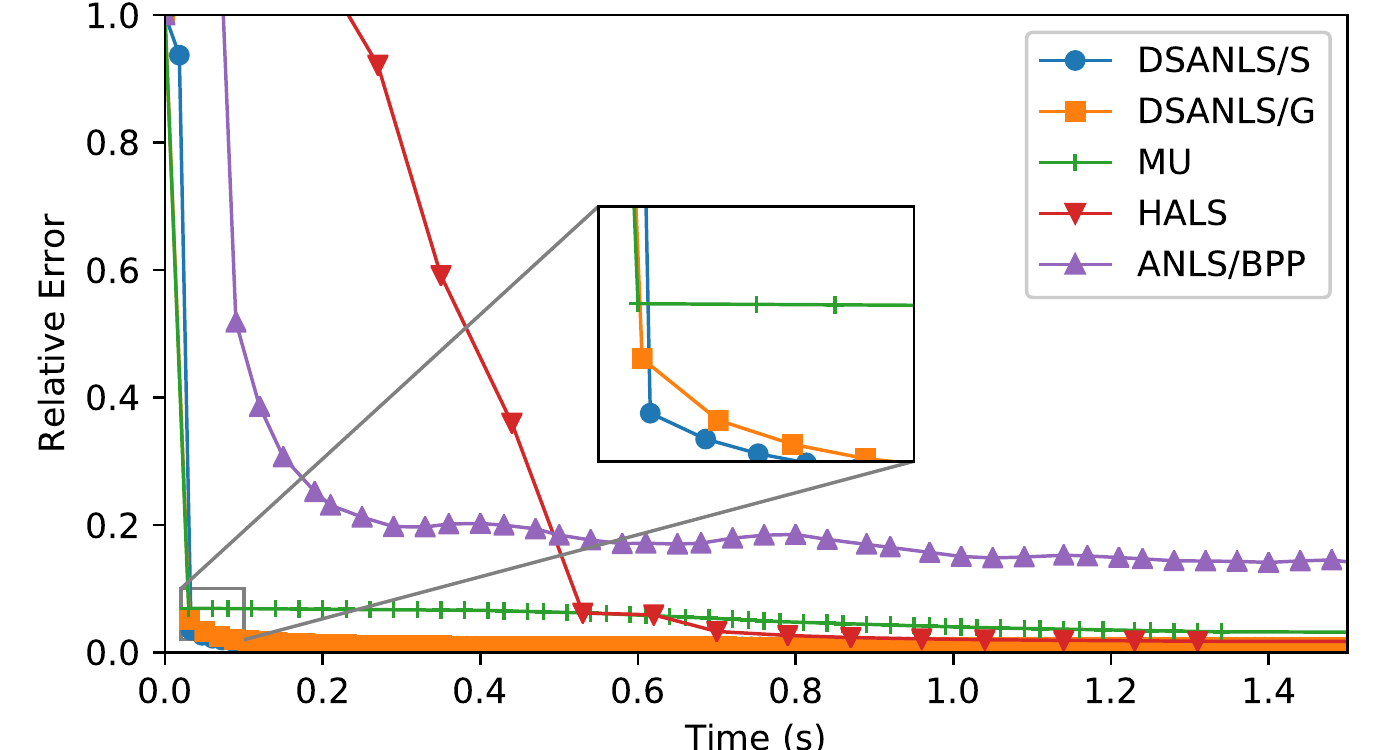}}
  \hspace{0.07in}
   \subfigure[MNIST]{
    \label{fig:dsanl:mnist}
    \includegraphics[width=0.3\linewidth]{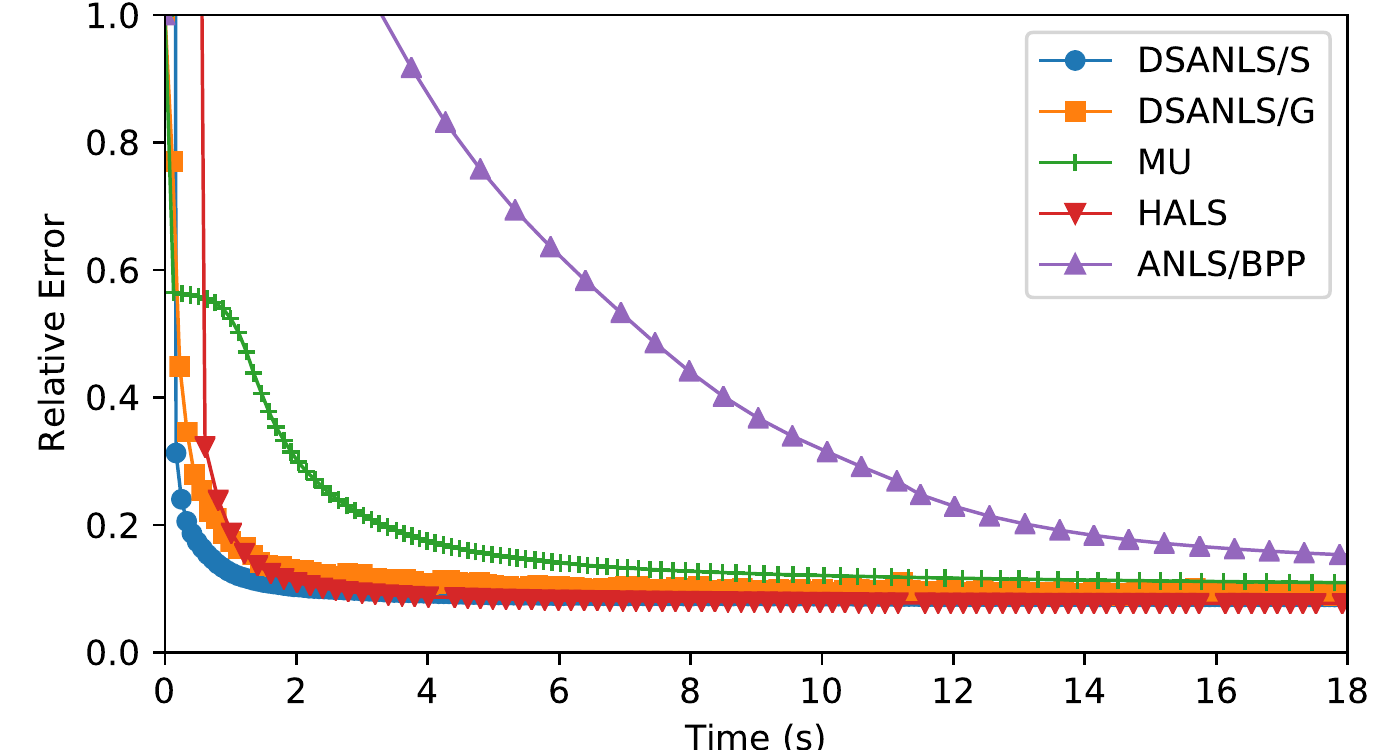}}
    \\
  \subfigure[GISETTE]{
    \label{fig:dsanl:number}
    \includegraphics[width=0.3\linewidth]{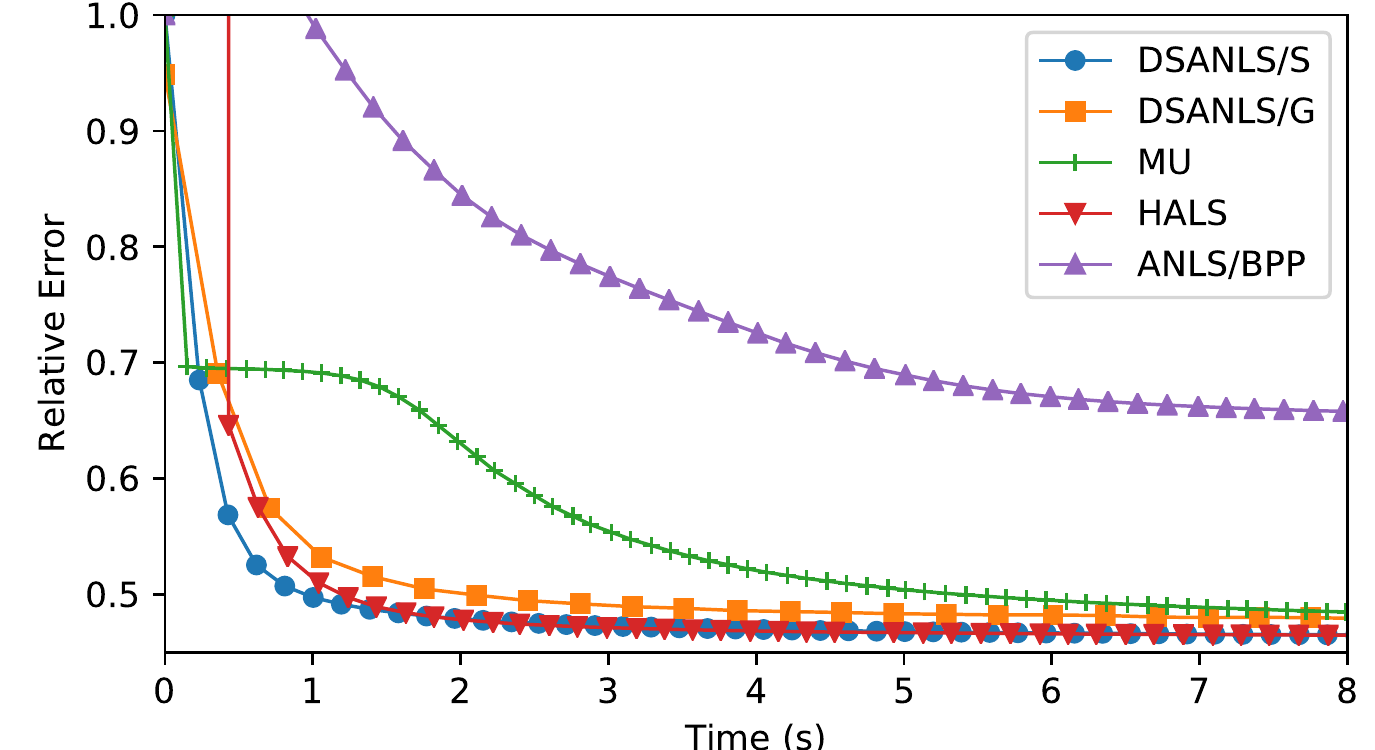}}
   \hspace{0.07in}
   \subfigure[RCV1]{
    \label{fig:dsanl:rcv1}
    \includegraphics[width=0.3\linewidth]{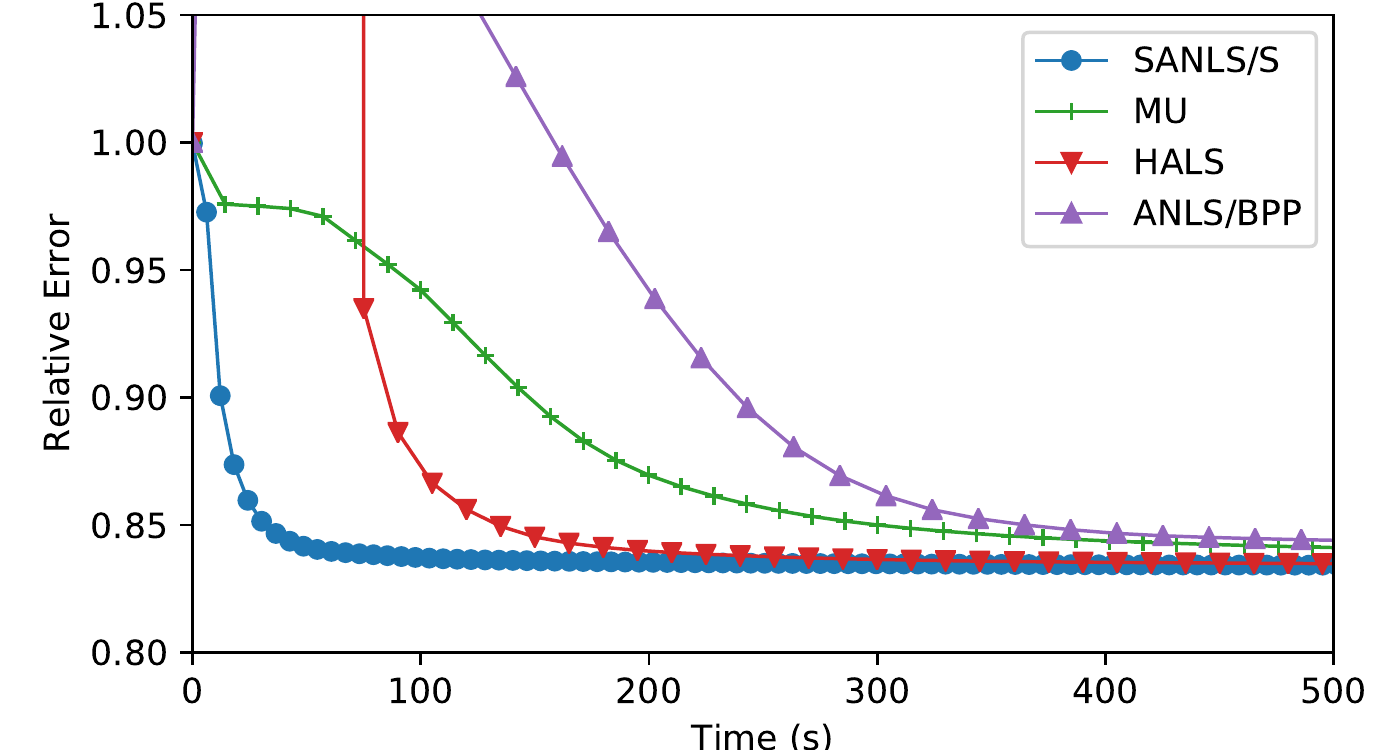}}
  \hspace{0.07in}
  \subfigure[DBLP]{
    \label{fig:dsanl:dblp}
    \includegraphics[width=0.3\linewidth]{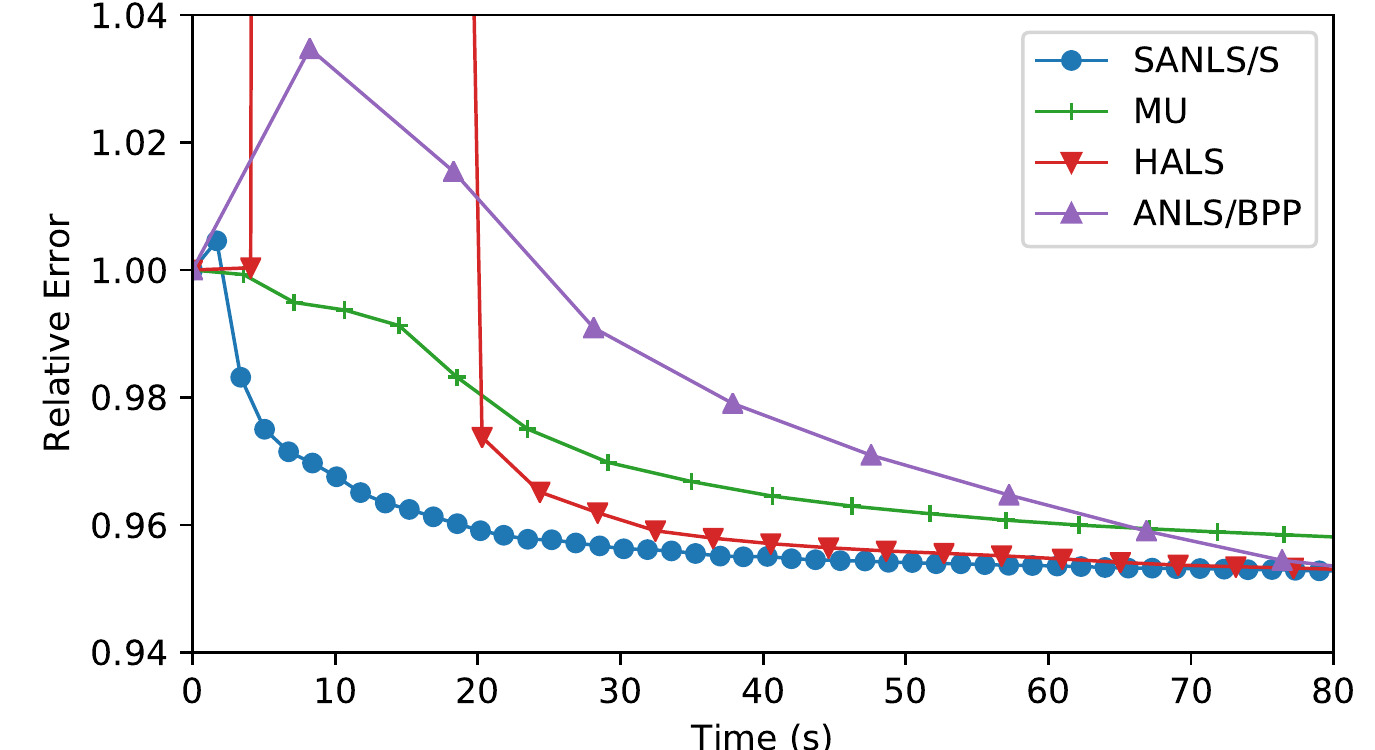}}
  \caption{Relative error over time for general distributed NMF}
  \label{fig:dsanl:error_time}
\end{figure*}

\begin{figure*}[!ht]
  \centering
  \subfigure[FACE]{
    \label{fig:dsanl:scaleface}
    \includegraphics[width=0.22\linewidth]{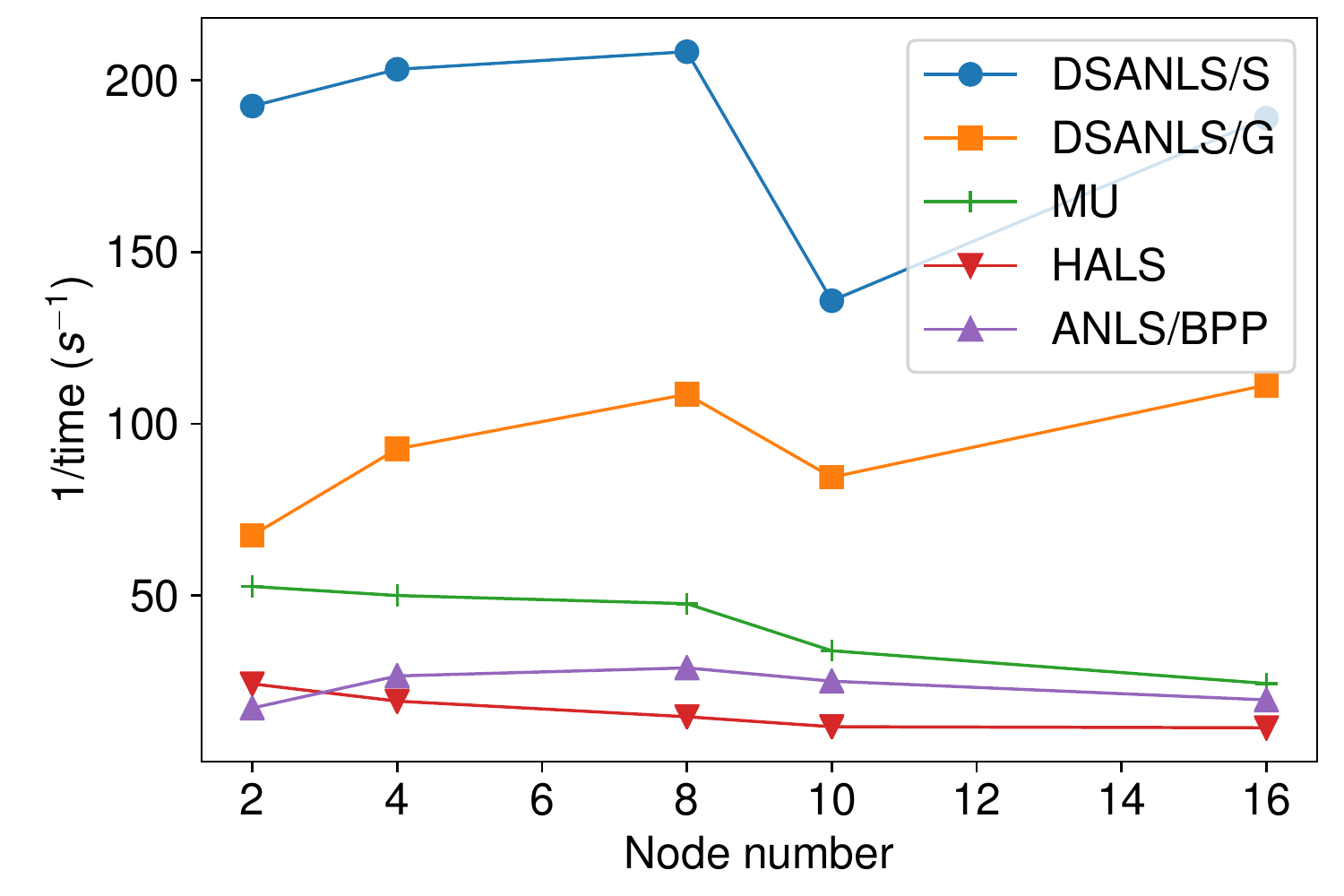}}
    \hspace{0.03in}
   \subfigure[MNIST]{
    \label{fig:dsanl:scalemnist}
    \includegraphics[width=0.22\linewidth]{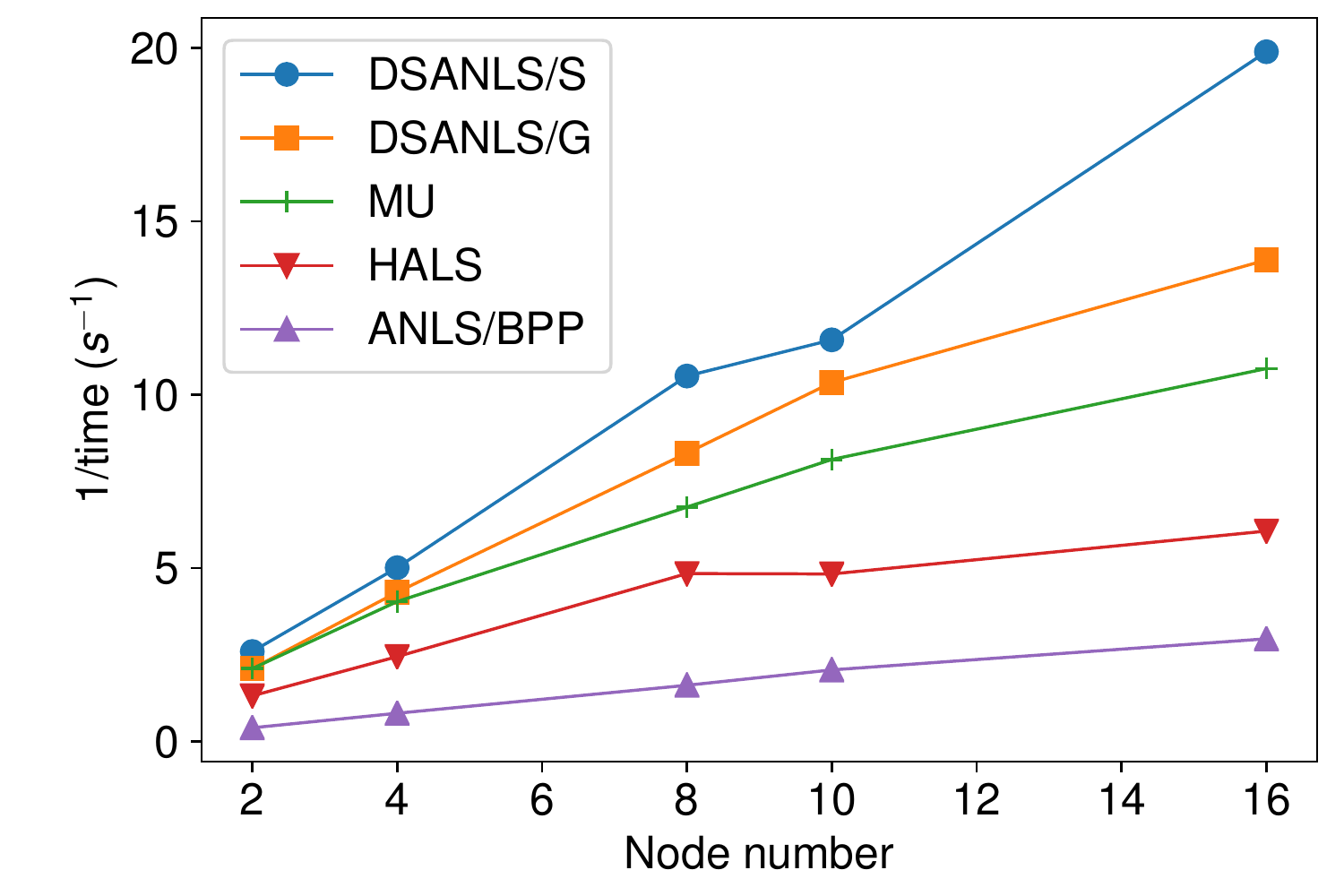}}
    \hspace{0.03in}
   \subfigure[RCV1]{
    \label{fig:dsanl:scalercv1}
    \includegraphics[width=0.22\linewidth]{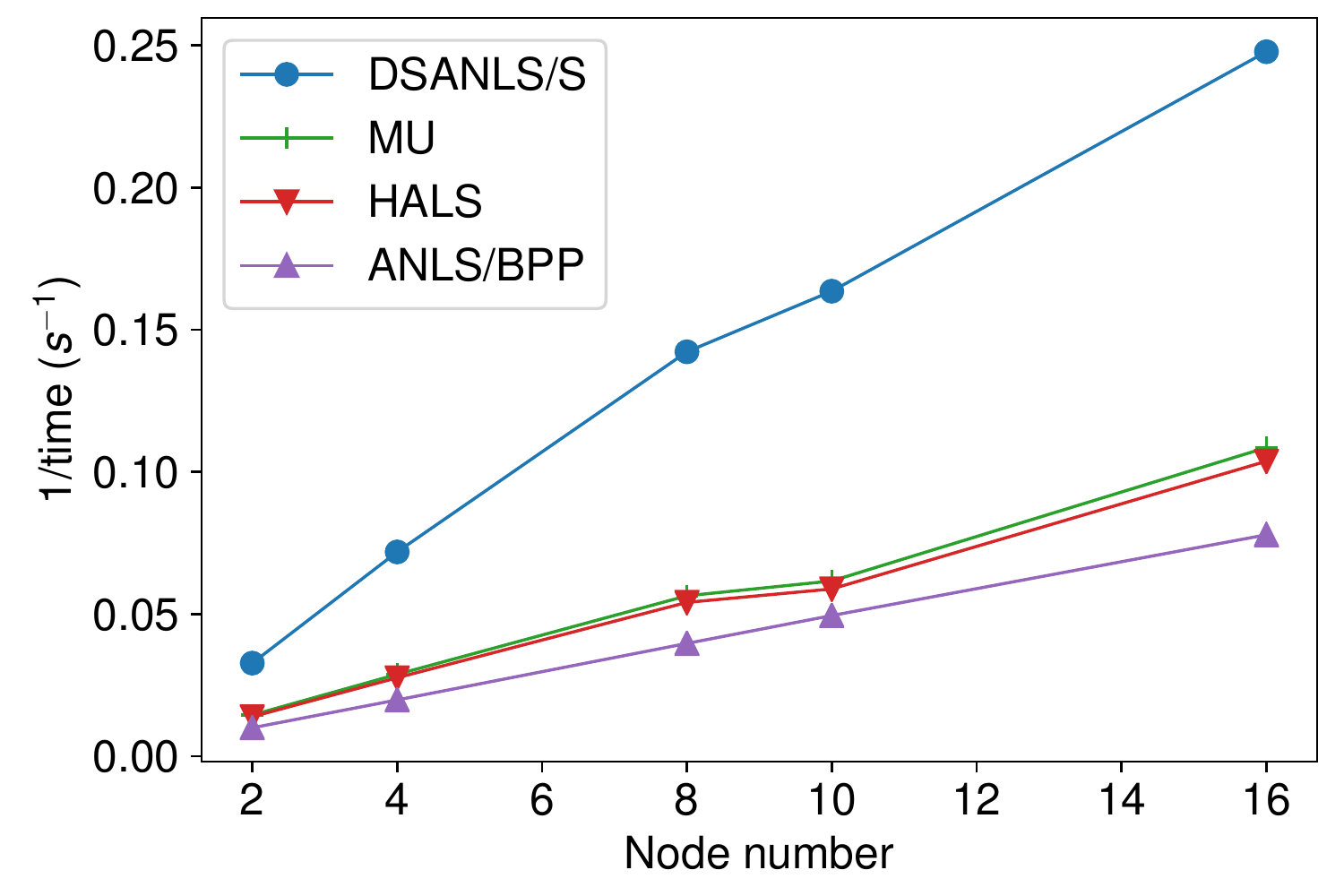}}
    \hspace{0.03in}
  \subfigure[DBLP]{
    \label{fig:dsanl:scaledblp}
    \includegraphics[width=0.22\linewidth]{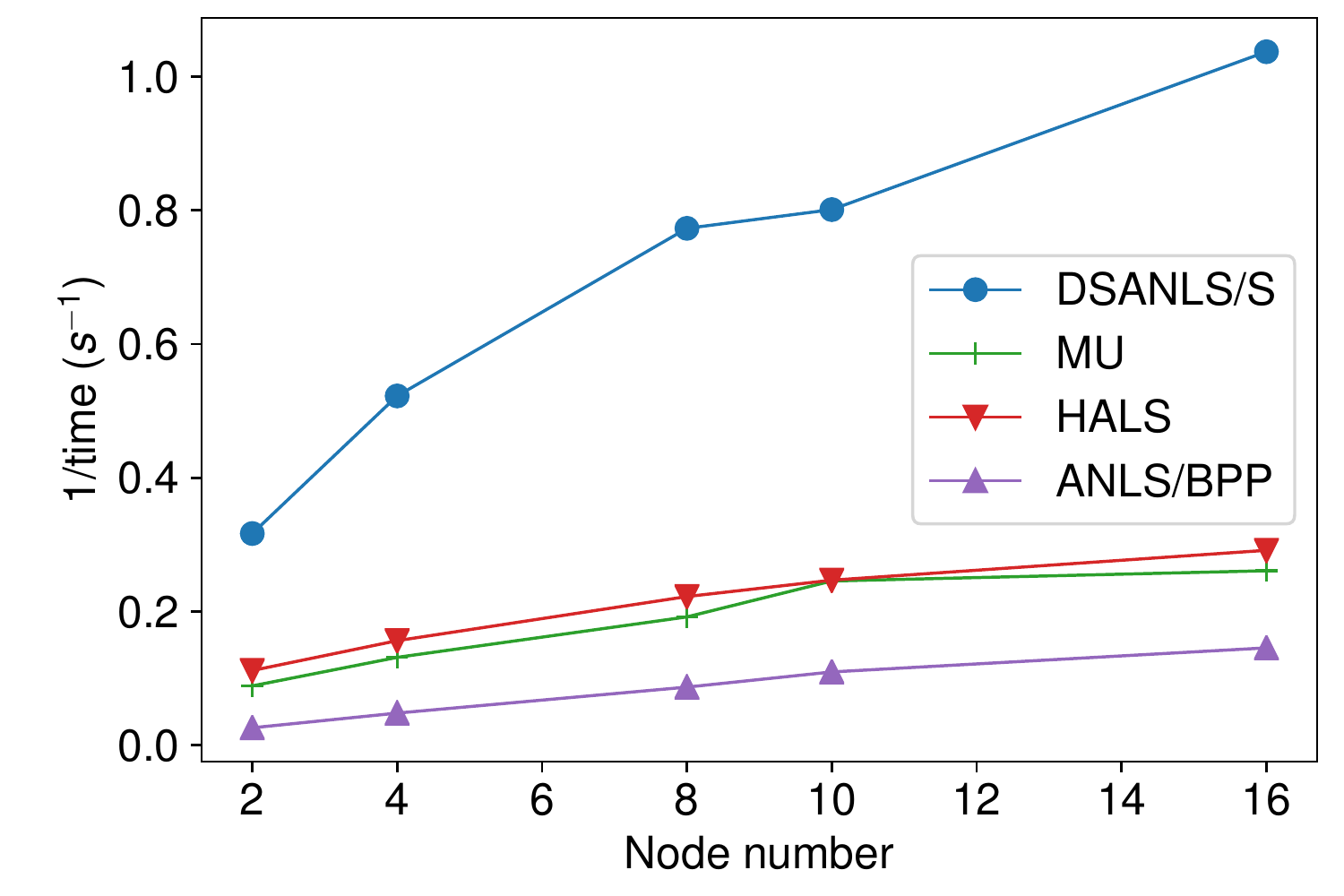}}
  \caption{Reciprocal of per-iteration time as a function of cluster size for general distributed NMF}
  \label{fig:dsanl:scale}
\end{figure*}

\begin{figure*}[!ht]
  \centering
  \subfigure[$k$=20]{
    \label{fig:dsanl:k20}
    \includegraphics[width=0.22\linewidth]{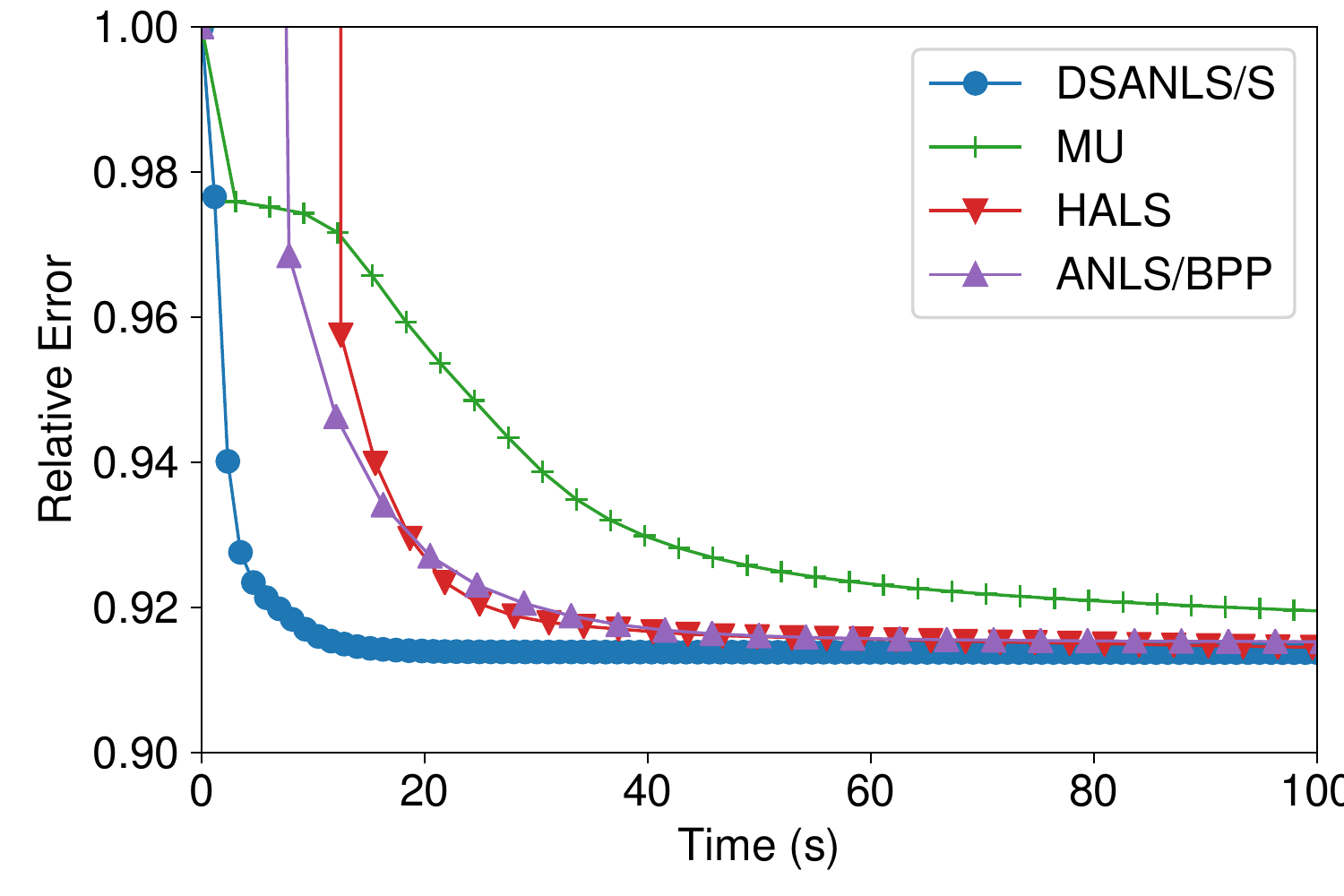}}
  \hspace{0.03in}
  \subfigure[$k$=50]{
    \label{fig:dsanl:k50}
    \includegraphics[width=0.22\linewidth]{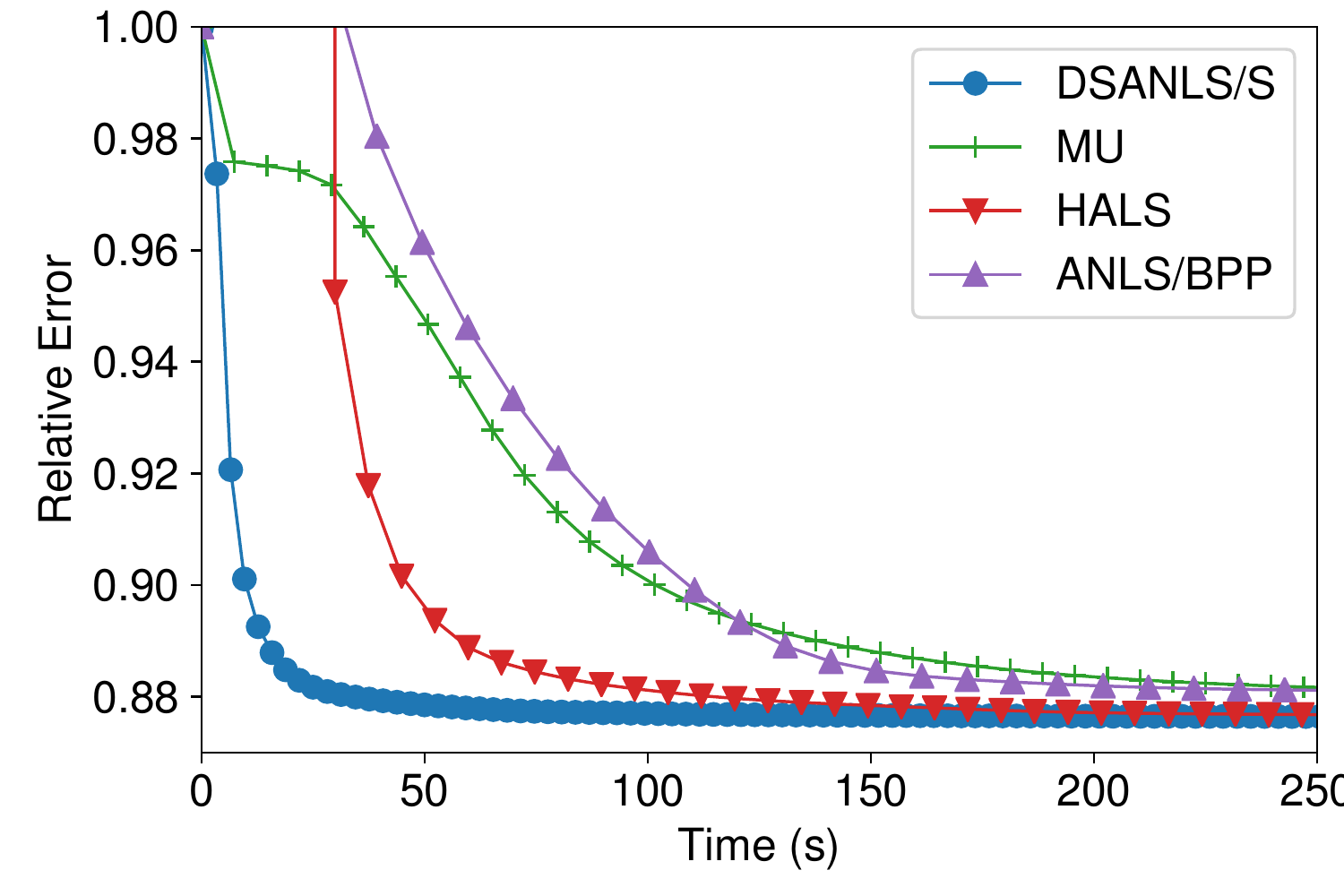}}
  \hspace{0.03in}
   \subfigure[$k$=200]{
    \label{fig:dsanl:k200}
    \includegraphics[width=0.22\linewidth]{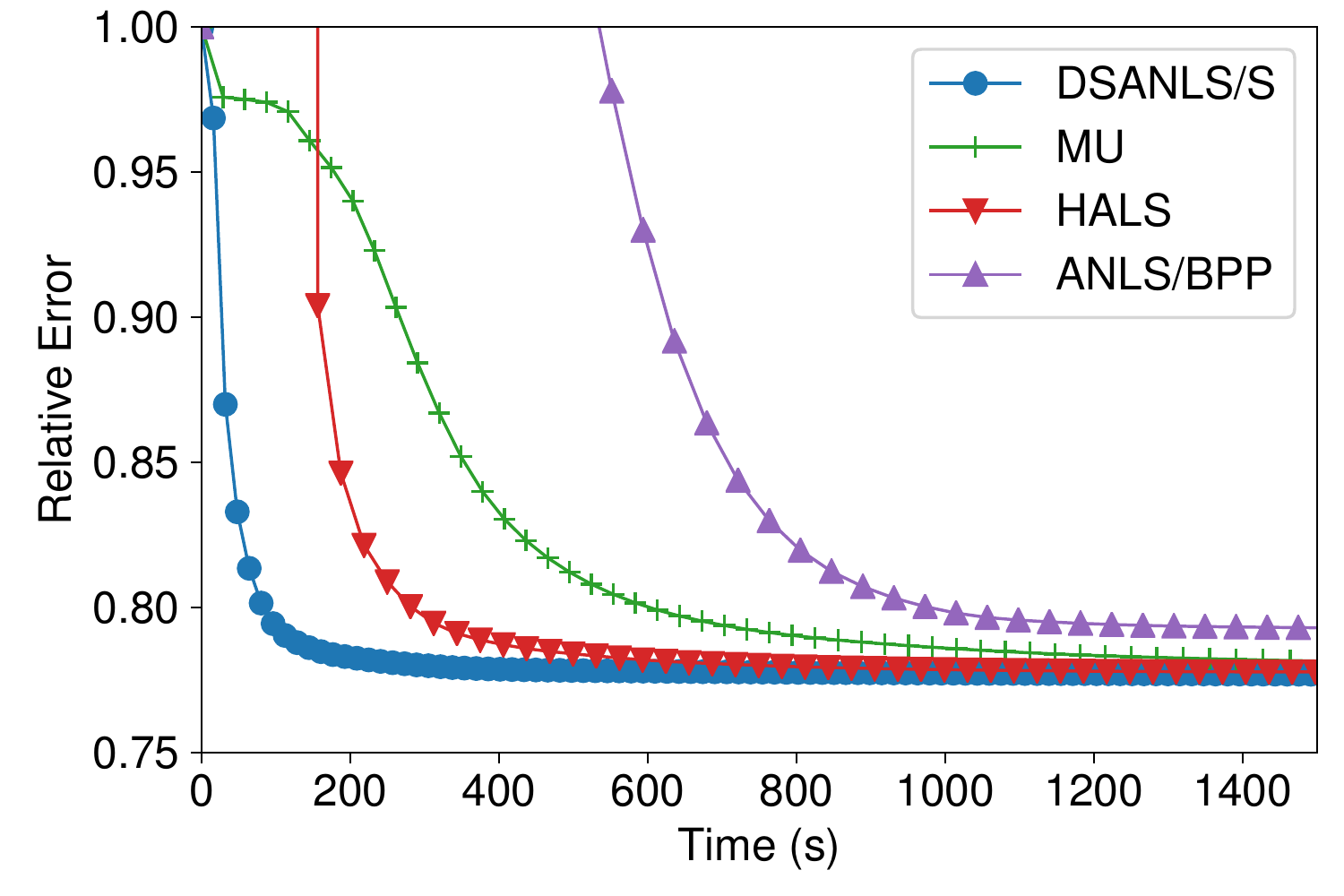}}
  \hspace{0.03in}
  \subfigure[$k$=500]{
    \label{fig:dsanl:k500}
    \includegraphics[width=0.22\linewidth]{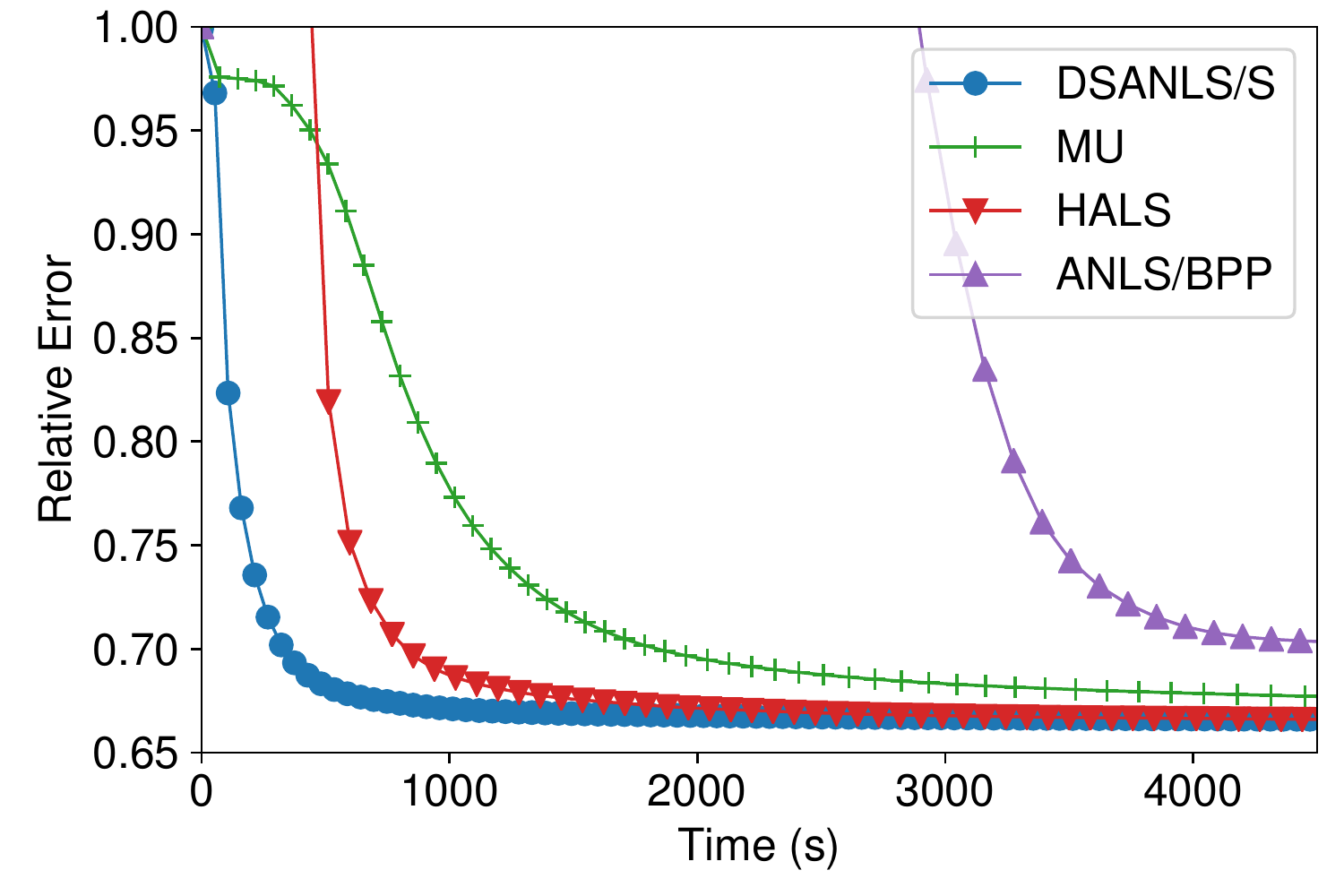}}
  \caption{Relative error over time for general distributed NMF, varying $k$ value}
  \label{fig:dsanl:varyk}
\end{figure*}

\begin{figure*}[!ht]
  \centering
  \subfigure[BOATS]{
    \label{fig:dsanl:gdboats}
    \includegraphics[width=0.22\linewidth]{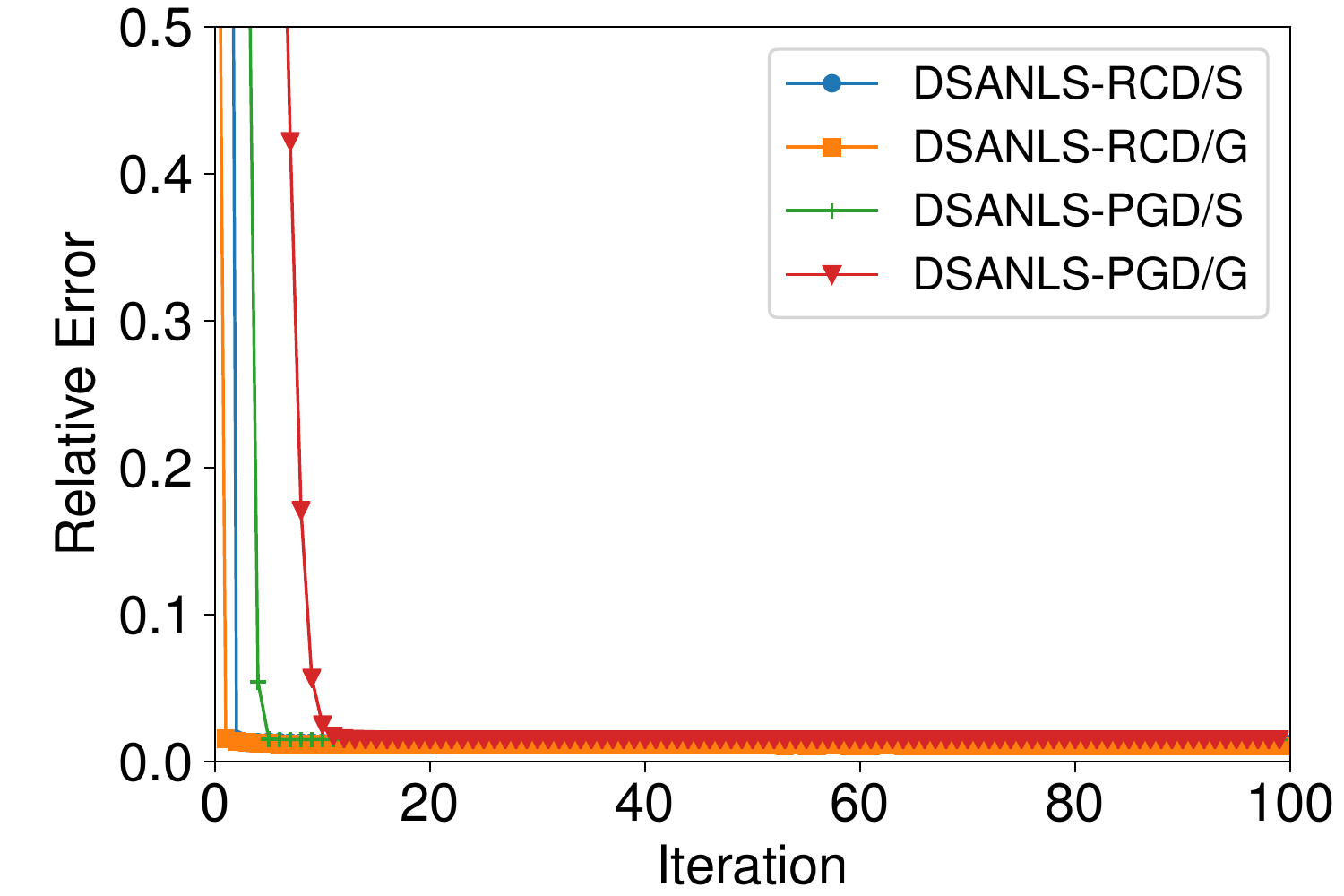}}
  \hspace{0.03in}
  \subfigure[FACE]{
    \label{fig:dsanl:gdface}
    \includegraphics[width=0.22\linewidth]{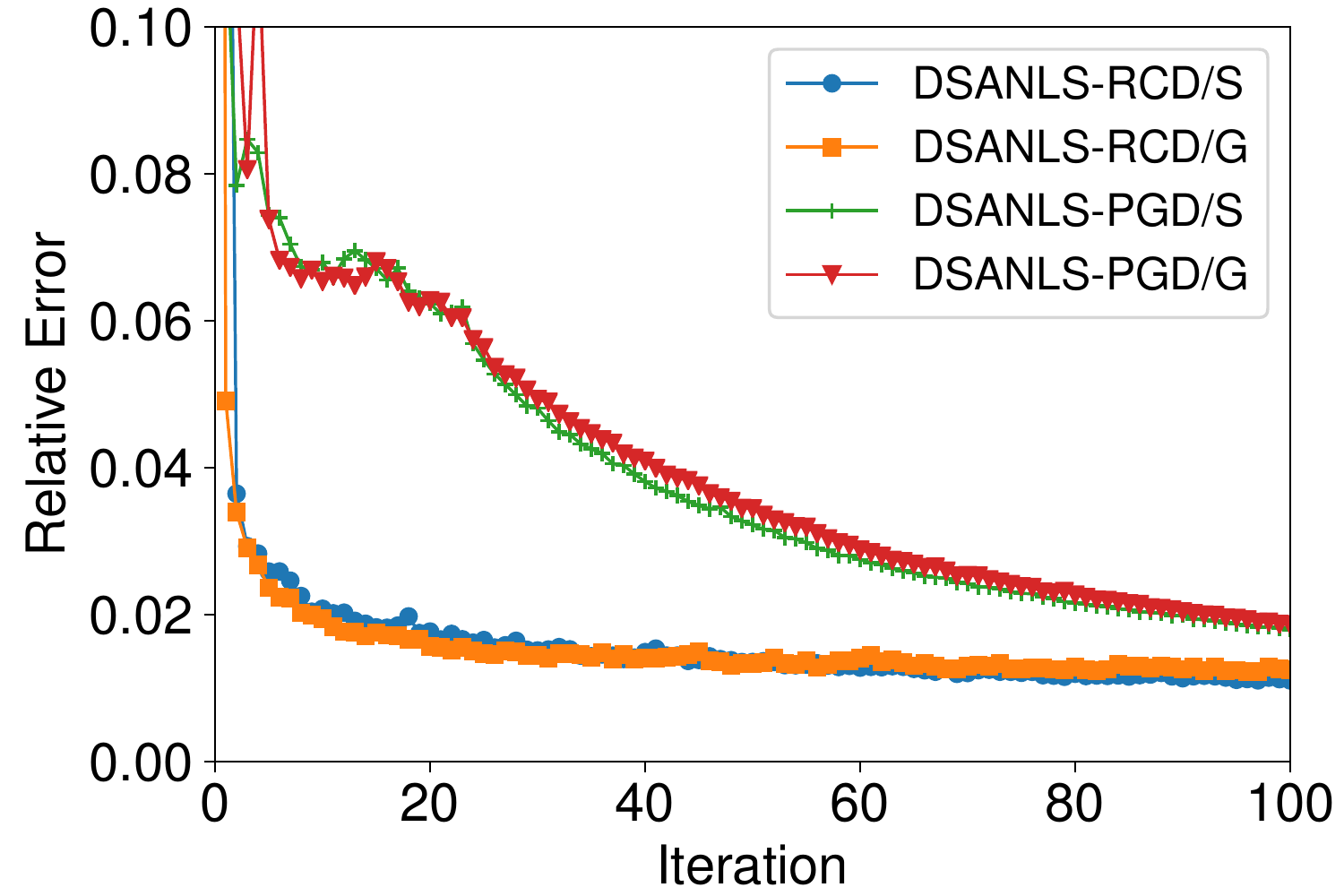}}
  \hspace{0.03in}
  \subfigure[GISETTE]{
    \label{fig:dsanl:gdnumber}
    \includegraphics[width=0.22\linewidth]{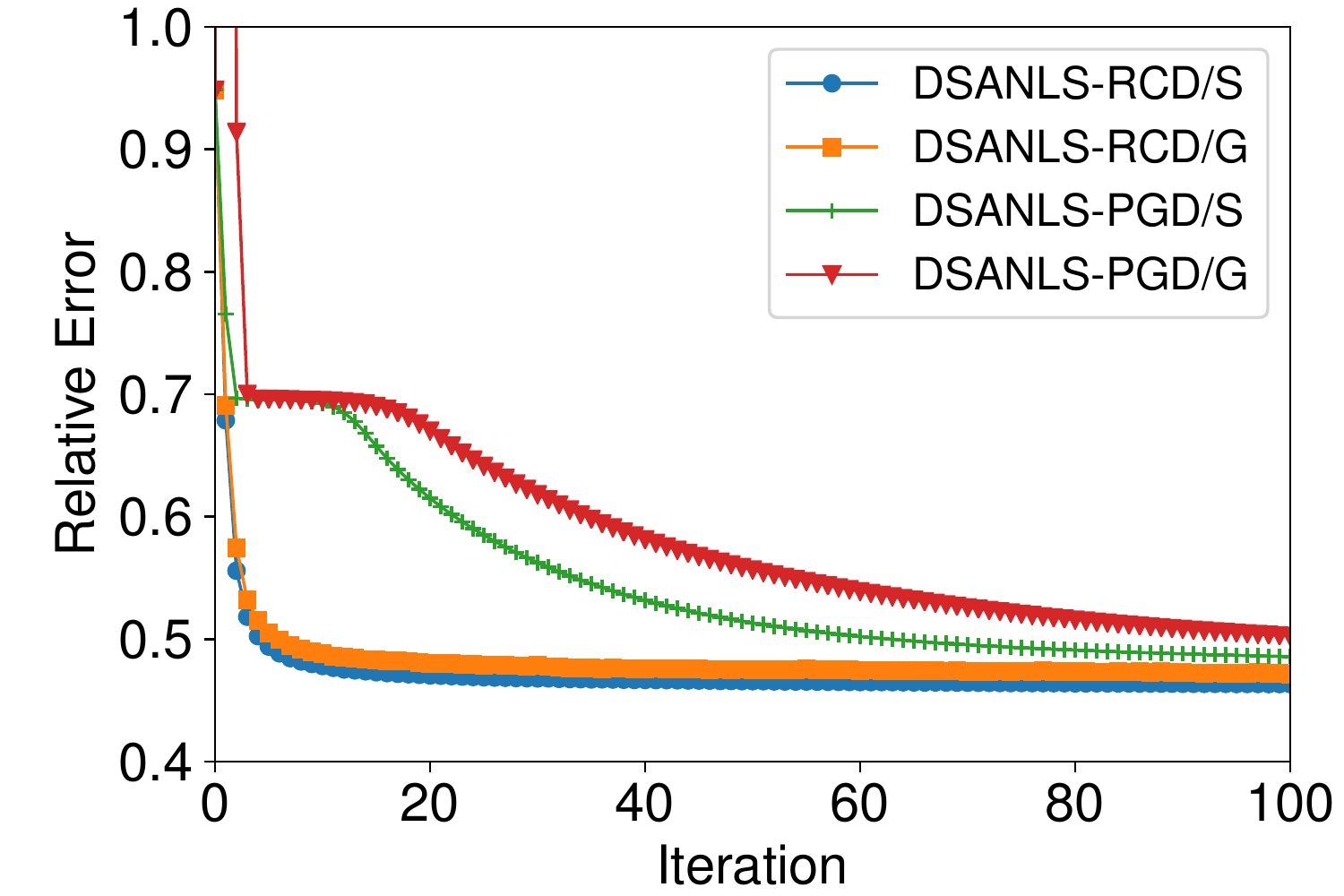}}
   \hspace{0.03in}
   \subfigure[RCV1]{
    \label{fig:dsanl:gdrcv1}
    \includegraphics[width=0.22\linewidth]{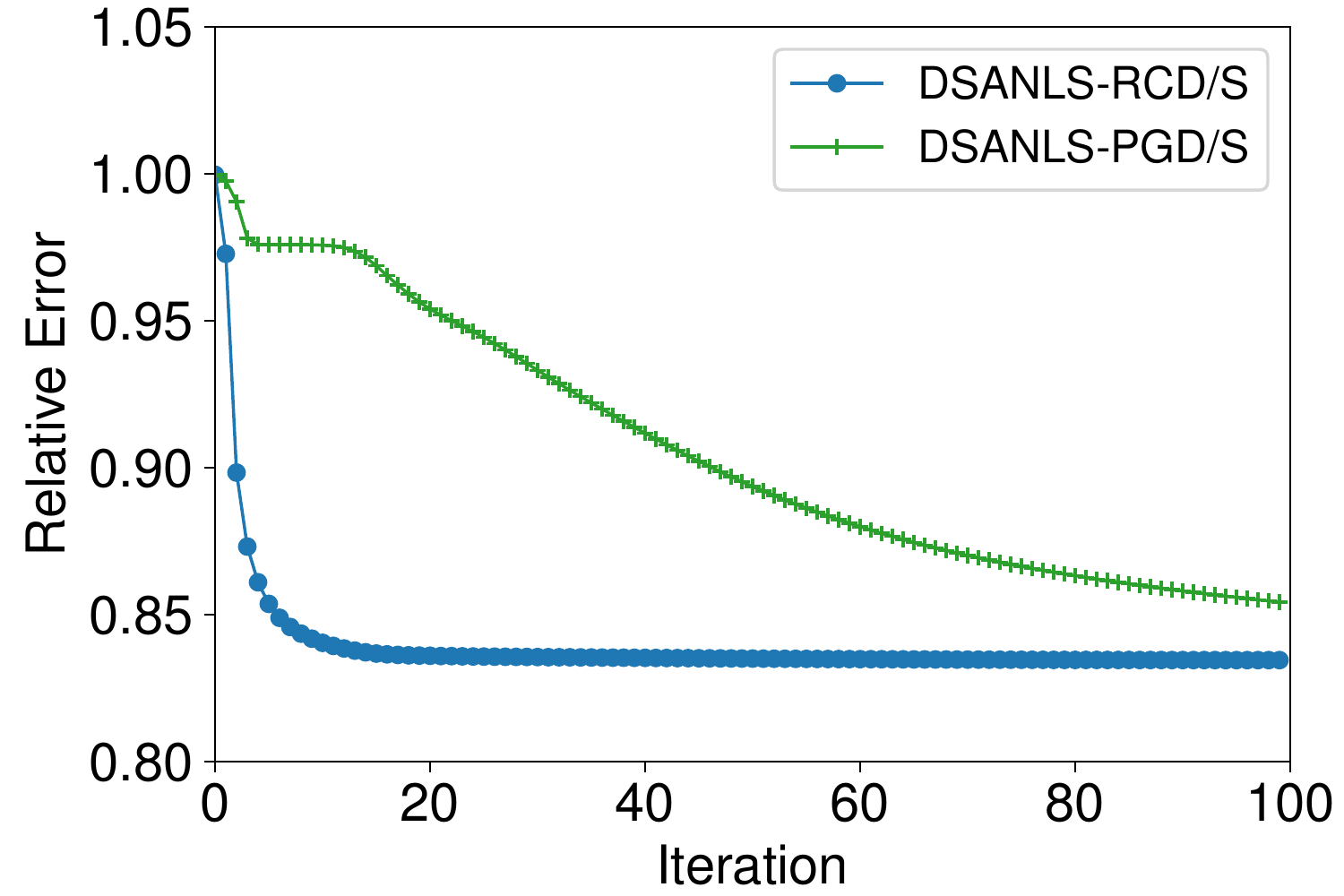}}
  \caption{Relative error per-iteration of different subproblem solvers for general distributed NMF}
  \label{fig:dsanl:sub}
\end{figure*}

\begin{figure*}[!ht]
  \centering
  \subfigure[BOATS]{
    \label{fig:secure:boats}
    \includegraphics[width=0.22\linewidth]{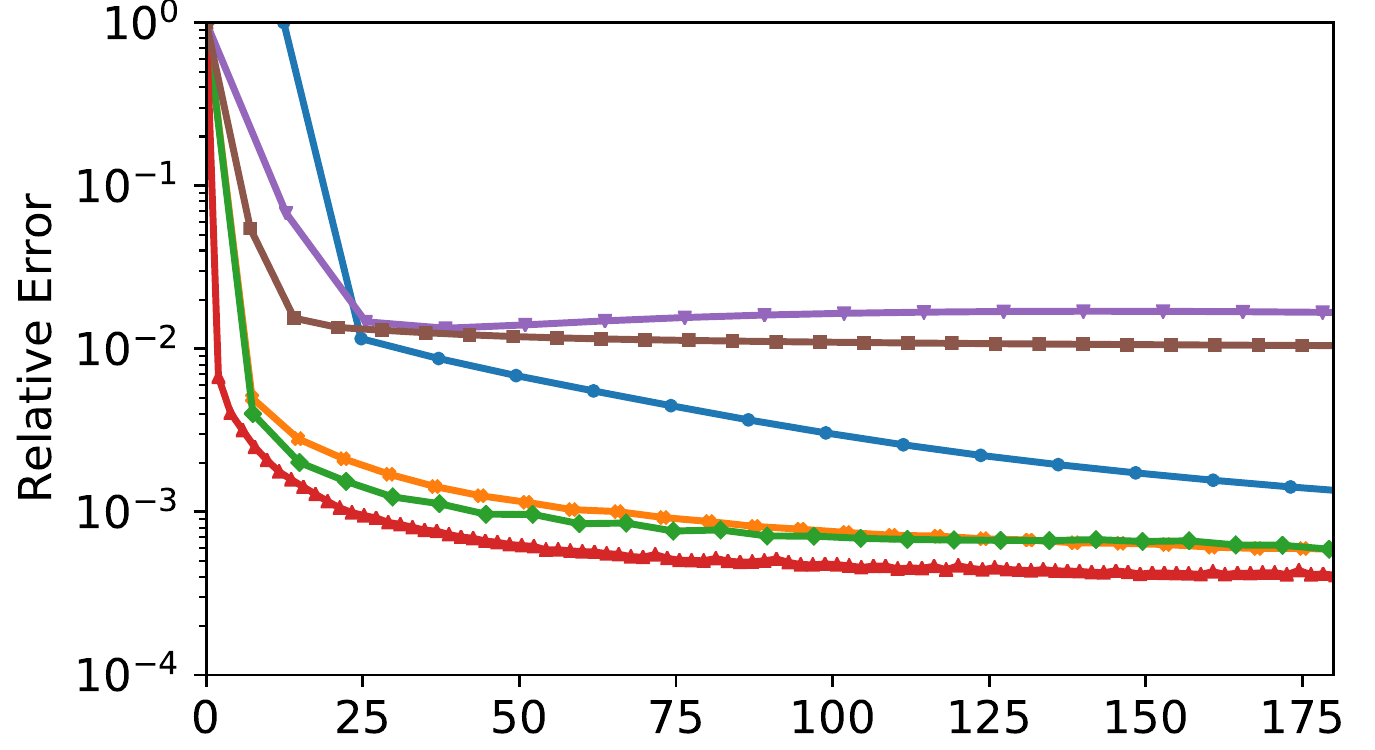}}
  \hspace{0.03in}
  \subfigure[FACE]{
    \label{fig:secure:face}
    \includegraphics[width=0.22\linewidth]{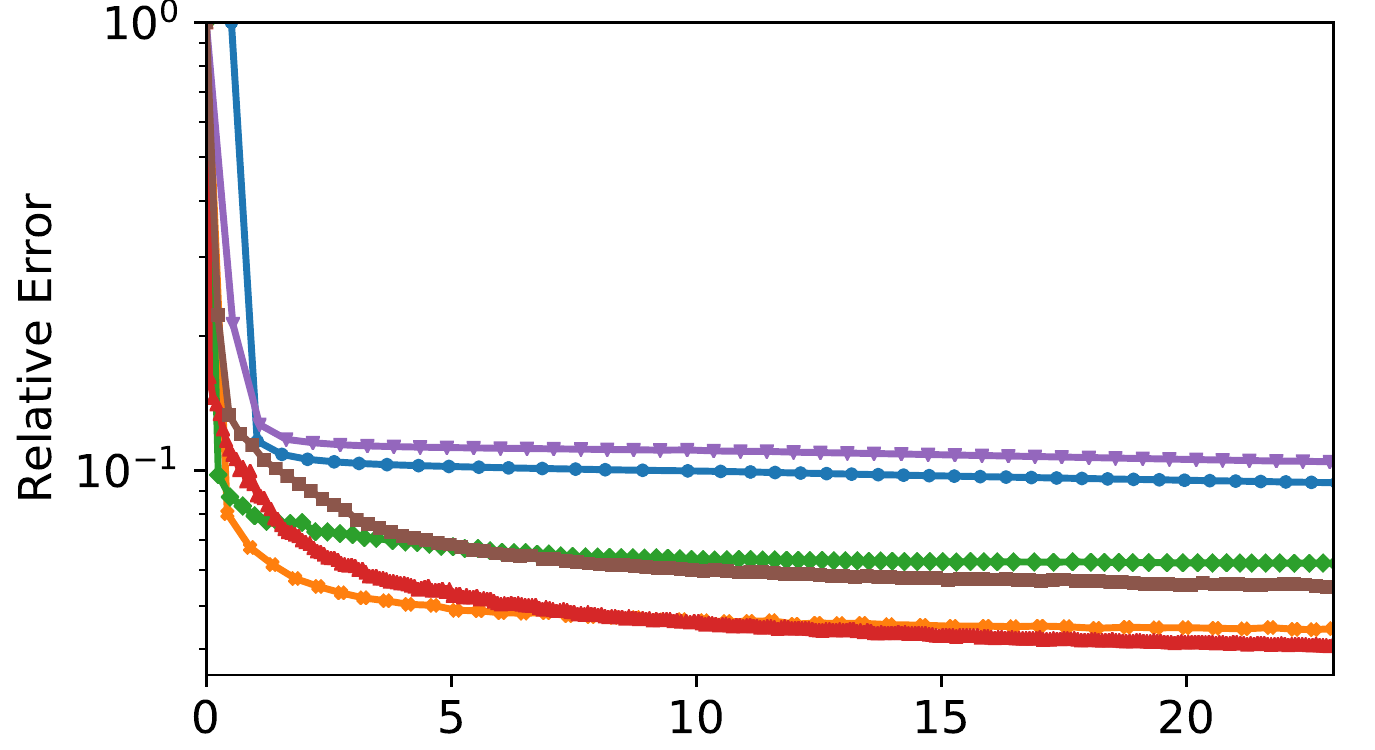}}
  \hspace{0.03in}
  \subfigure[MNIST]{
    \label{fig:secure:mnist}
    \includegraphics[width=0.22\linewidth]{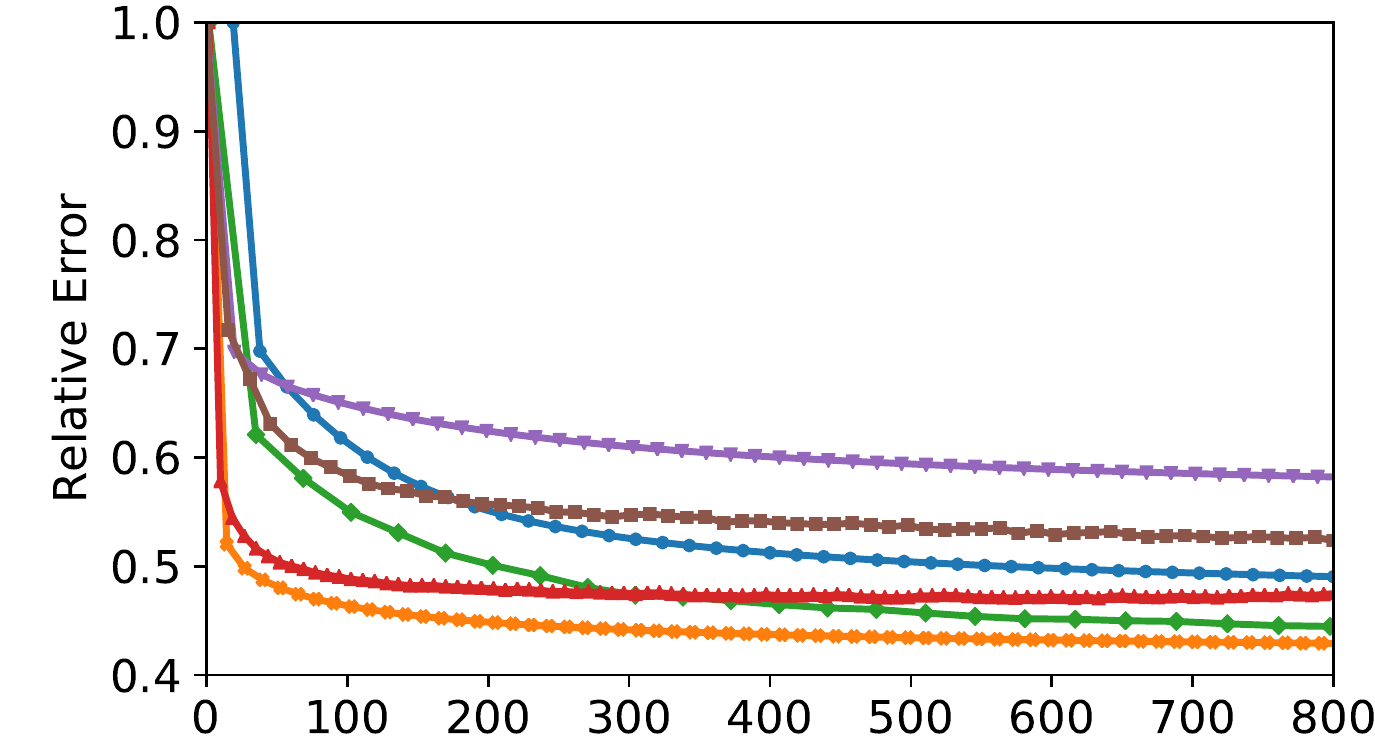}}
  \hspace{0.03in}
  \subfigure[GISETTE]{
    \label{fig:secure:number}
    \includegraphics[width=0.22\linewidth]{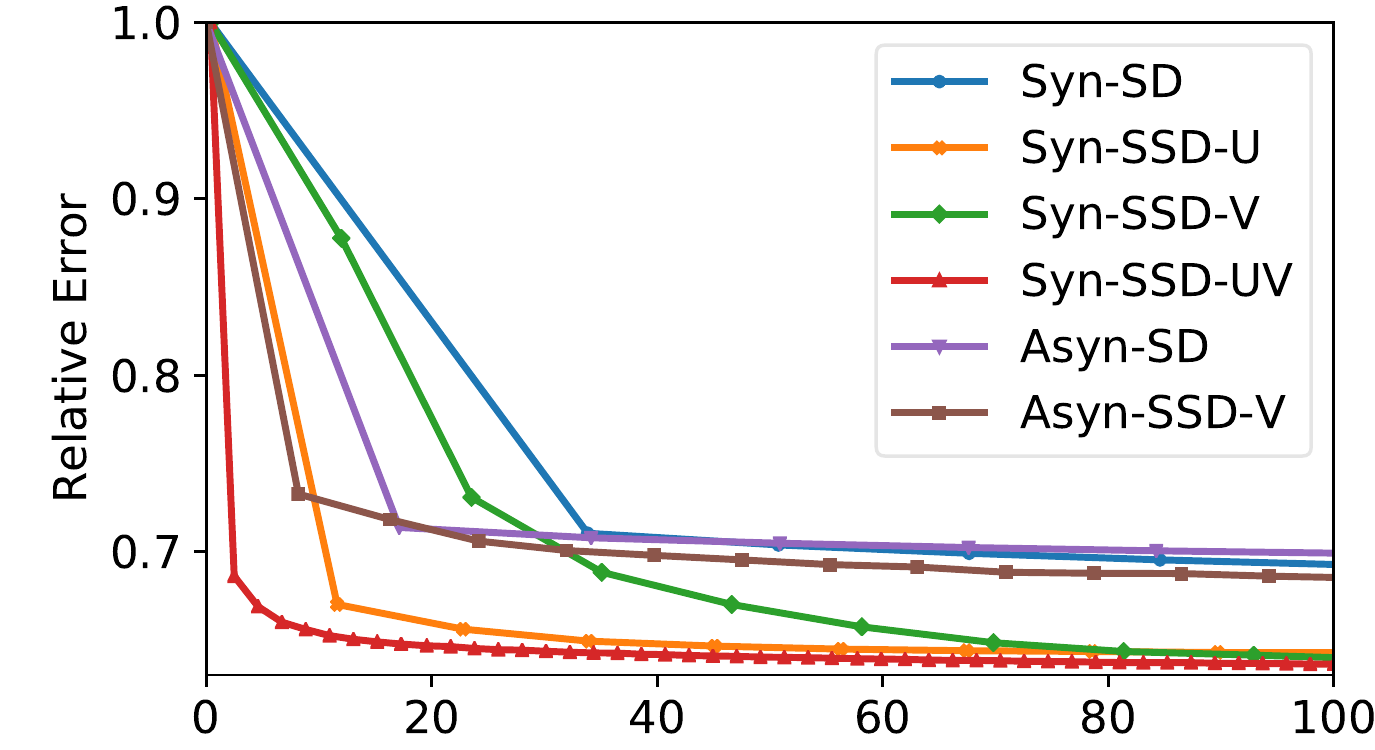}}
  \caption{Relative error over time for uniform workload in secure distributed NMF}
  \label{fig:secure:error_time}
\end{figure*}

\begin{figure*}[!ht]
  \centering
  \subfigure[BOATS]{
    \label{fig:secure:boats_imba}
    \includegraphics[width=0.22\linewidth]{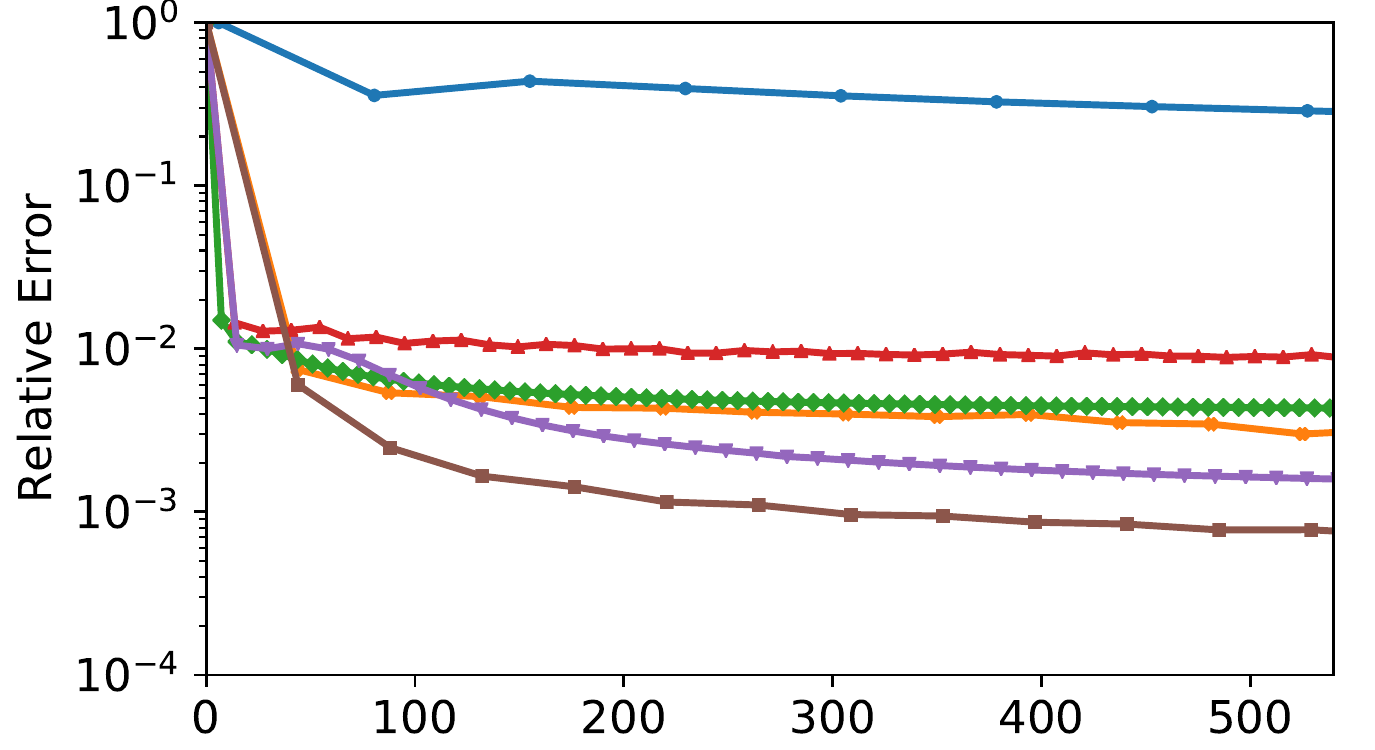}}
  \hspace{0.03in}
  \subfigure[FACE]{
    \label{fig:secure:face_imba}
    \includegraphics[width=0.22\linewidth]{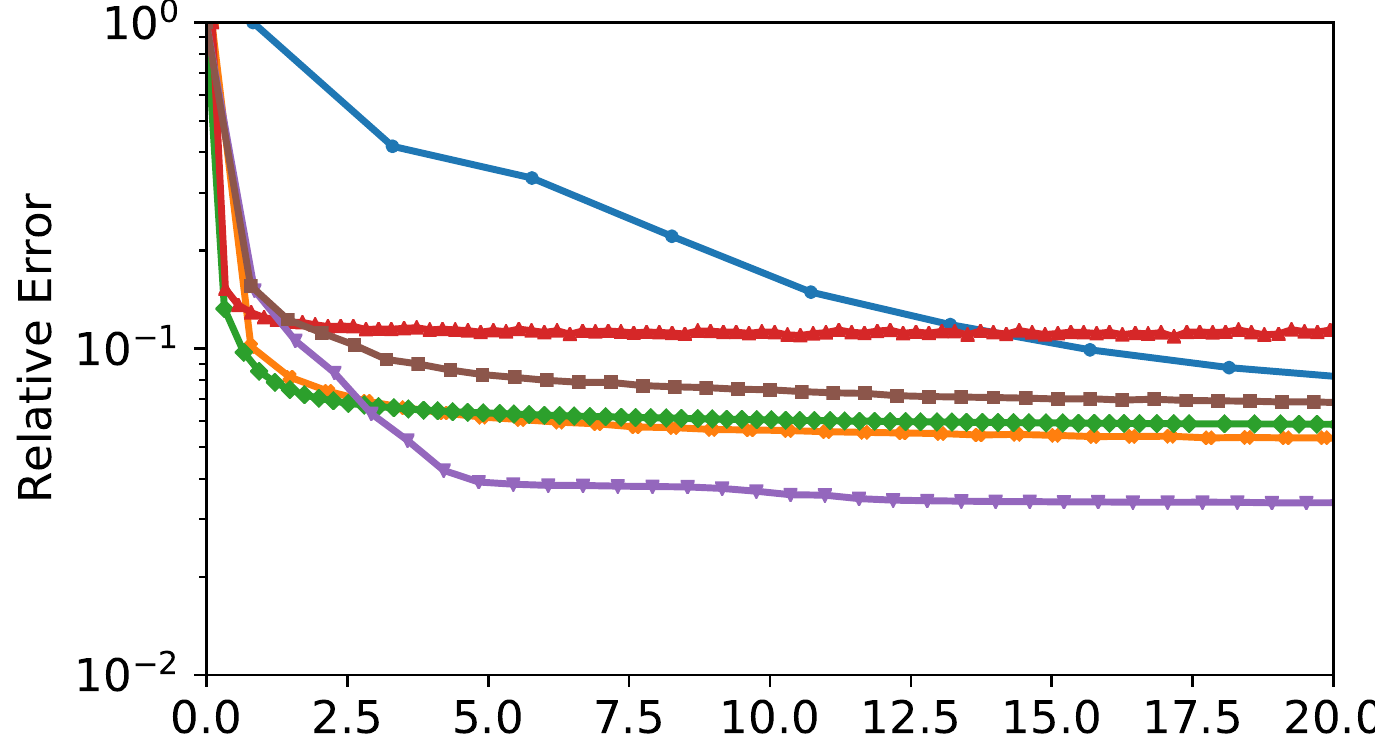}}
  \hspace{0.03in}
  \subfigure[MNIST]{
    \label{fig:secure:mnist_imba}
    \includegraphics[width=0.22\linewidth]{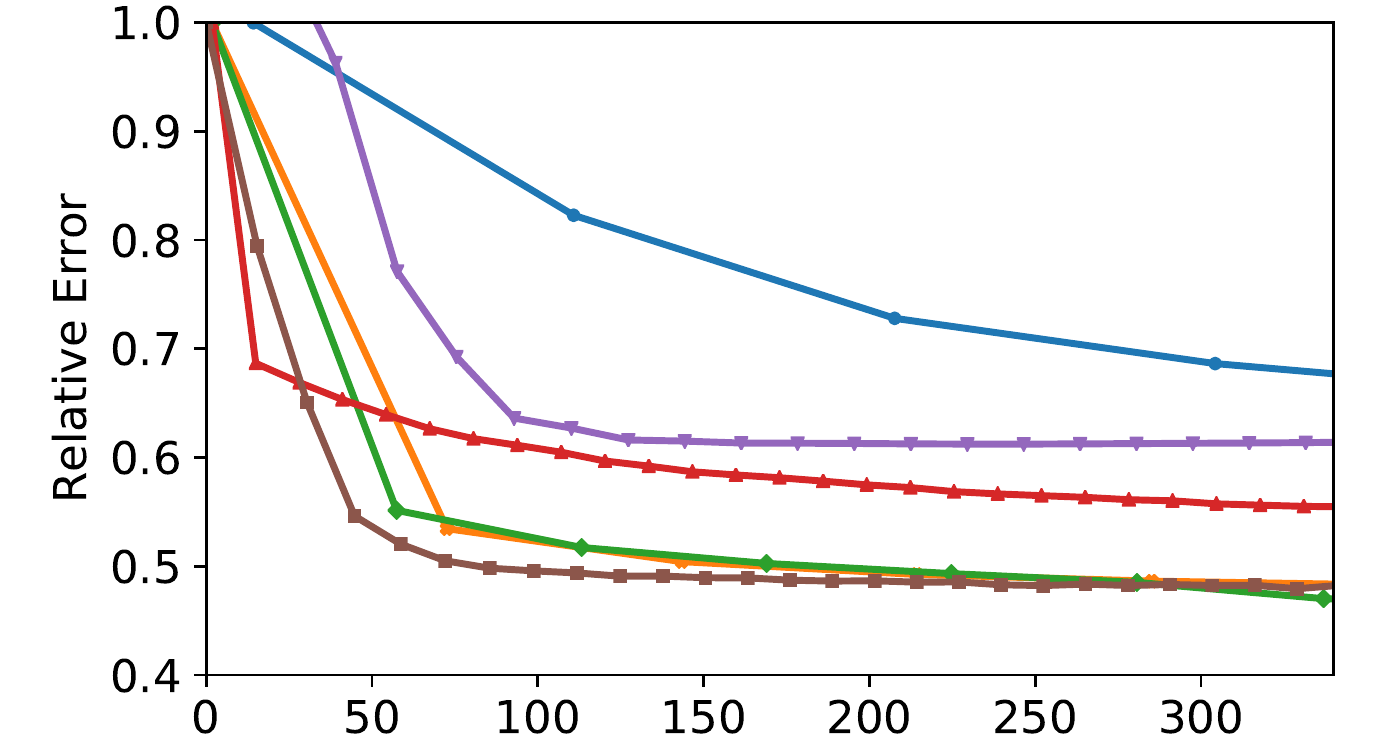}}
  \hspace{0.03in}
  \subfigure[GISETTE]{
    \label{fig:secure:number_imba}
    \includegraphics[width=0.22\linewidth]{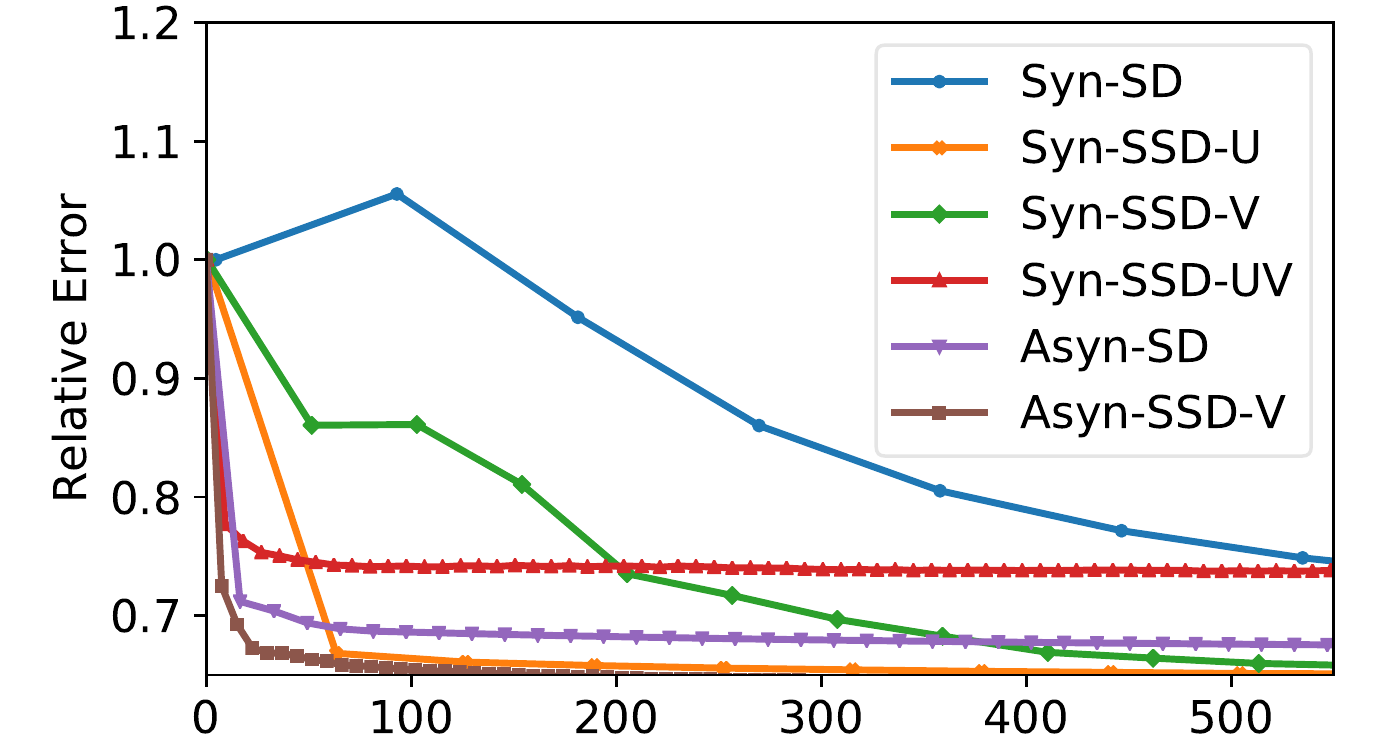}}
  \hspace{0.03in}
  \caption{Relative error over time for imbalanced workload in secure distributed NMF}
  \label{fig:secure:error_time_imba}
\end{figure*}

\begin{figure*}[!ht]
  \centering
  \subfigure[BOATS]{
    \label{fig:secure:scalercv1}
    \includegraphics[width=0.22\linewidth]{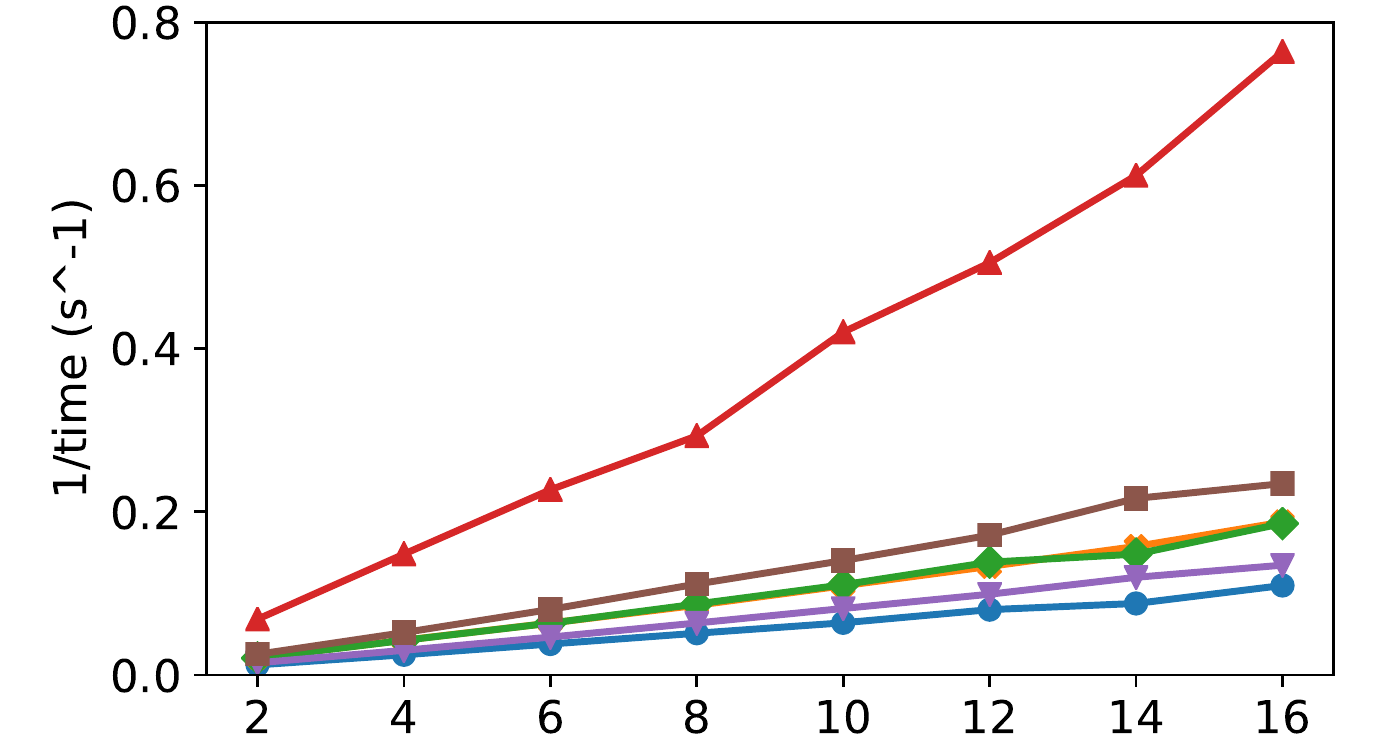}}
    \hspace{0.03in}
  \subfigure[FACE]{
    \label{fig:secure:scaleface}
    \includegraphics[width=0.22\linewidth]{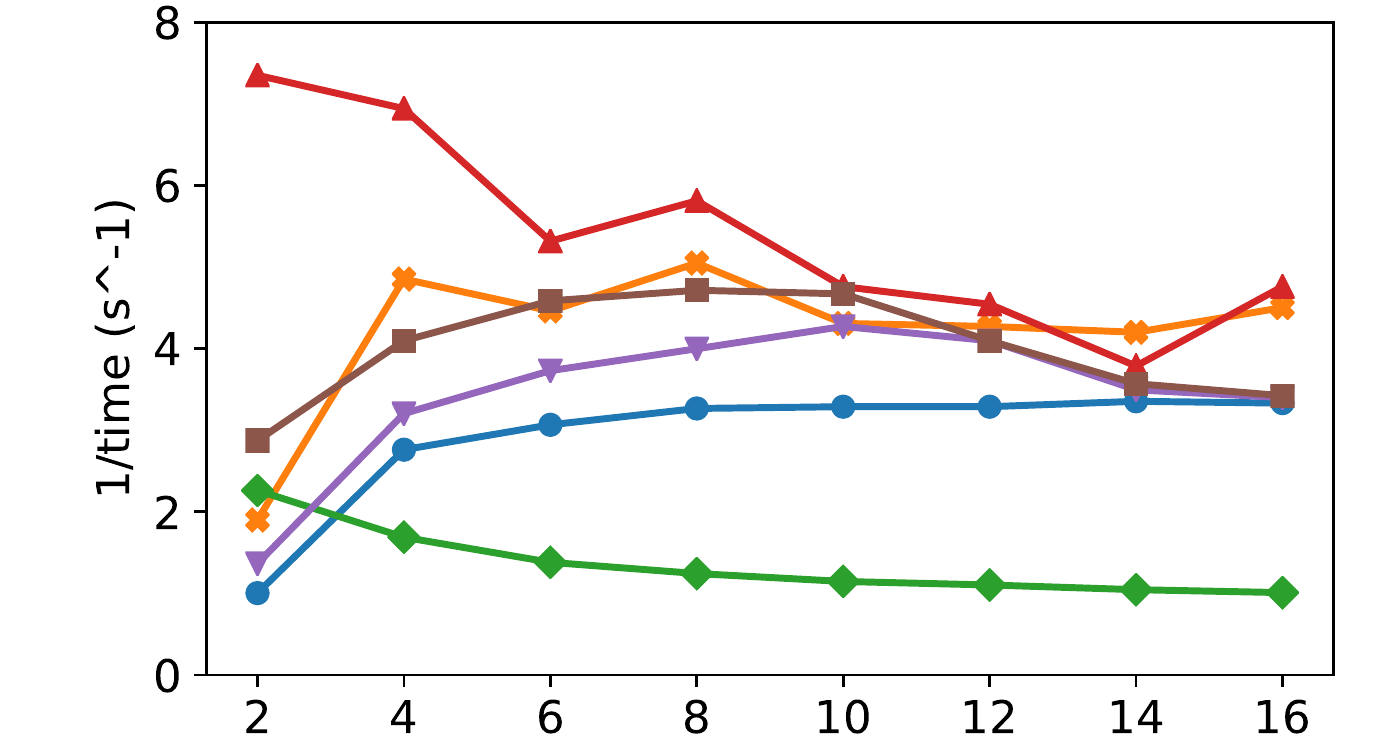}}
    \hspace{0.03in}
  \subfigure[MNIST]{
    \label{fig:secure:scalemnist}
    \includegraphics[width=0.22\linewidth]{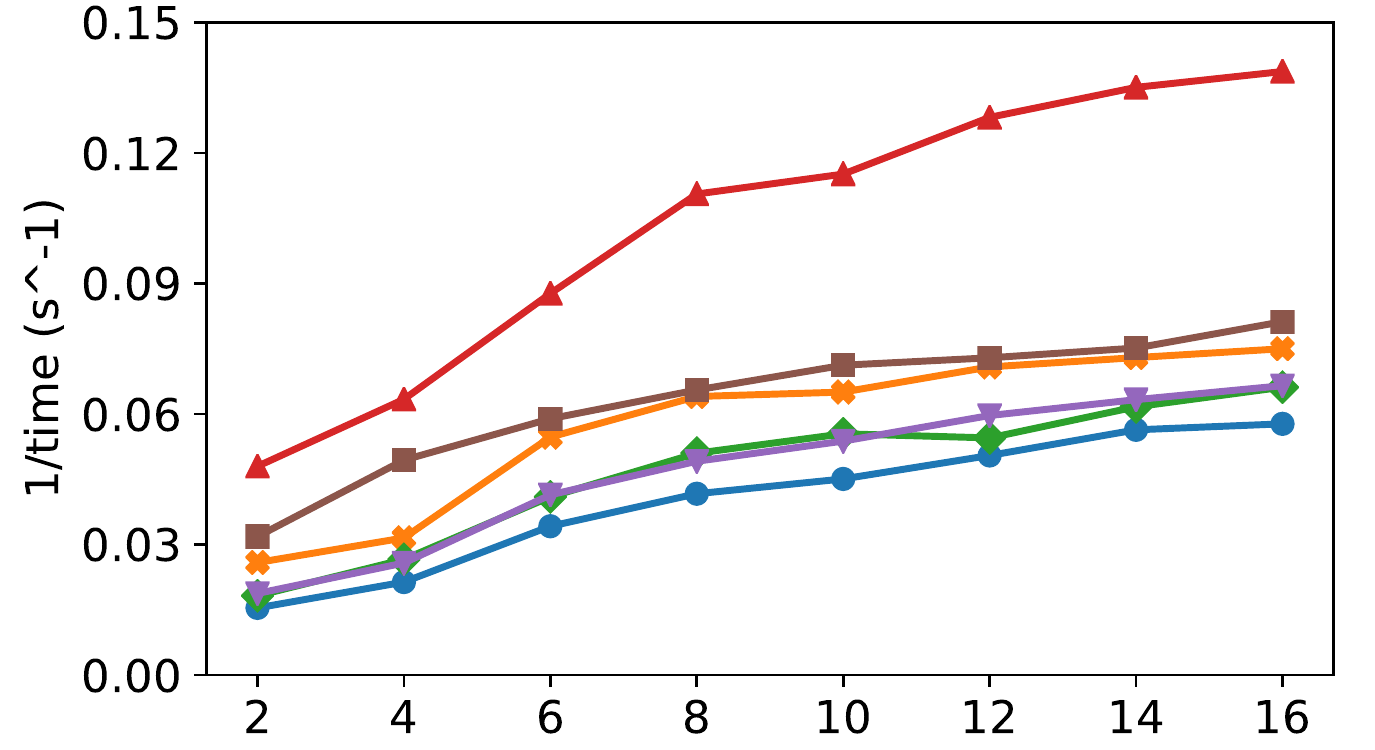}}
    \hspace{0.03in}
  \subfigure[GISETTE]{
    \label{fig:secure:scaledblp}
    \includegraphics[width=0.22\linewidth]{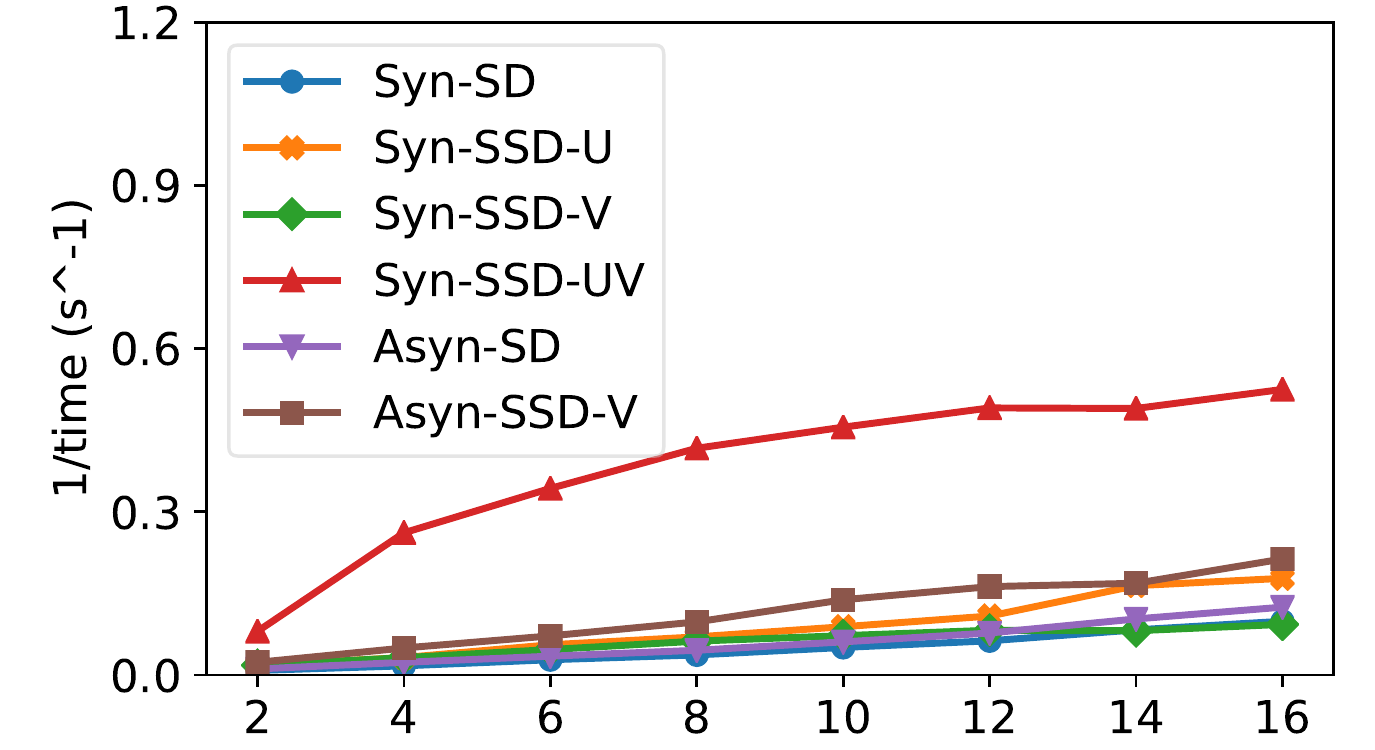}}
  \caption{Reciprocal of per-iteration time for uniform workload in secure distributed NMF}
  \label{fig:secure:scale_uniform}
\end{figure*}

\begin{figure*}[!ht]
  \centering
  \subfigure[BOATS]{
    \label{fig:secure:scalevideo_imba}
    \includegraphics[width=0.22\linewidth]{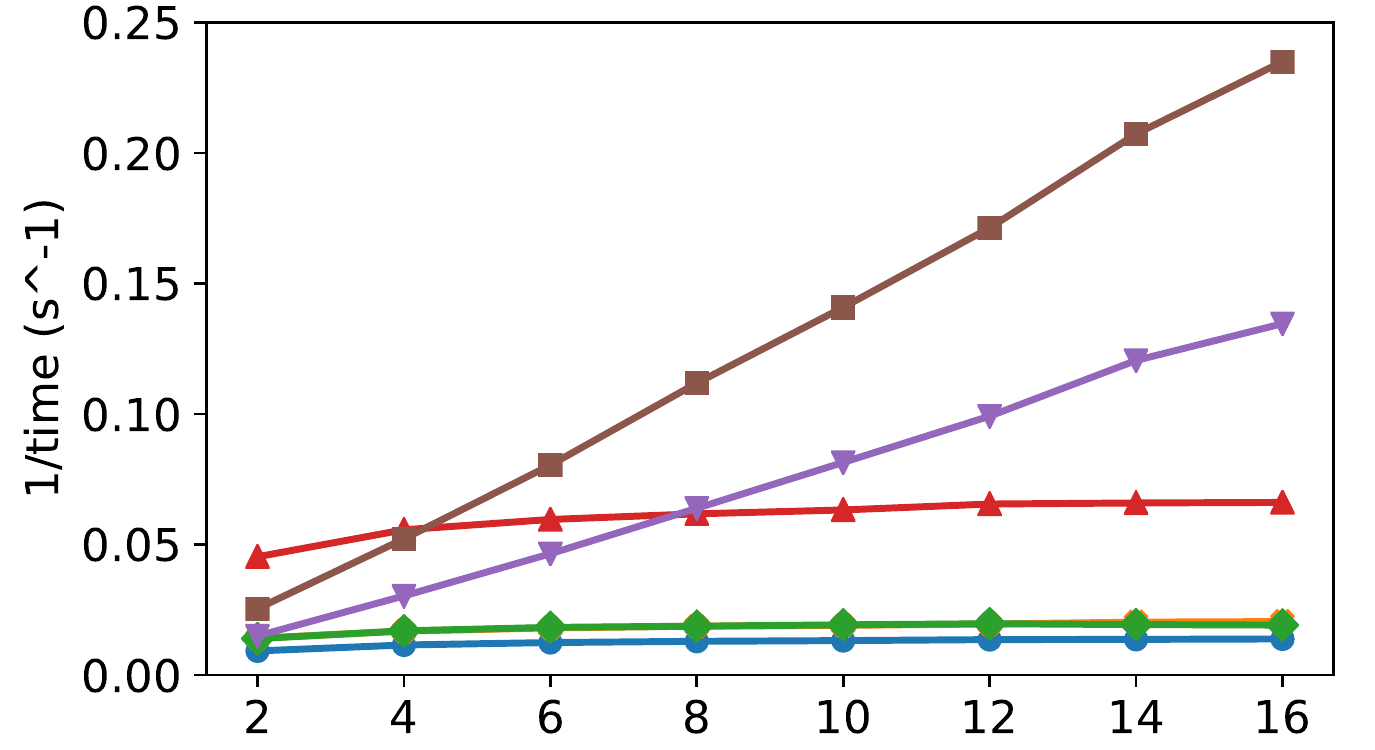}}
    \hspace{0.03in}
  \subfigure[FACE]{
    \label{fig:secure:scaleface_imba}
    \includegraphics[width=0.22\linewidth]{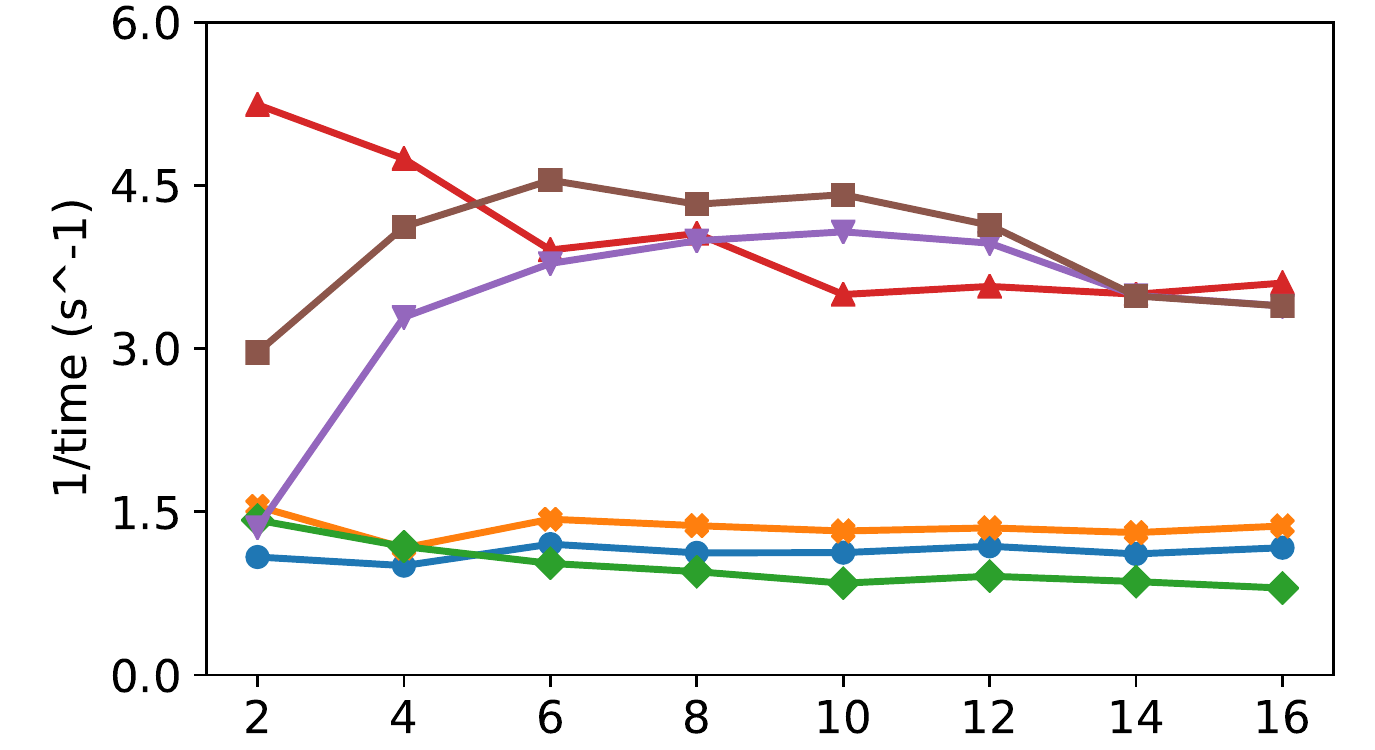}}
    \hspace{0.03in}
  \subfigure[MNIST]{
    \label{fig:secure:scalemnist_imba}
    \includegraphics[width=0.22\linewidth]{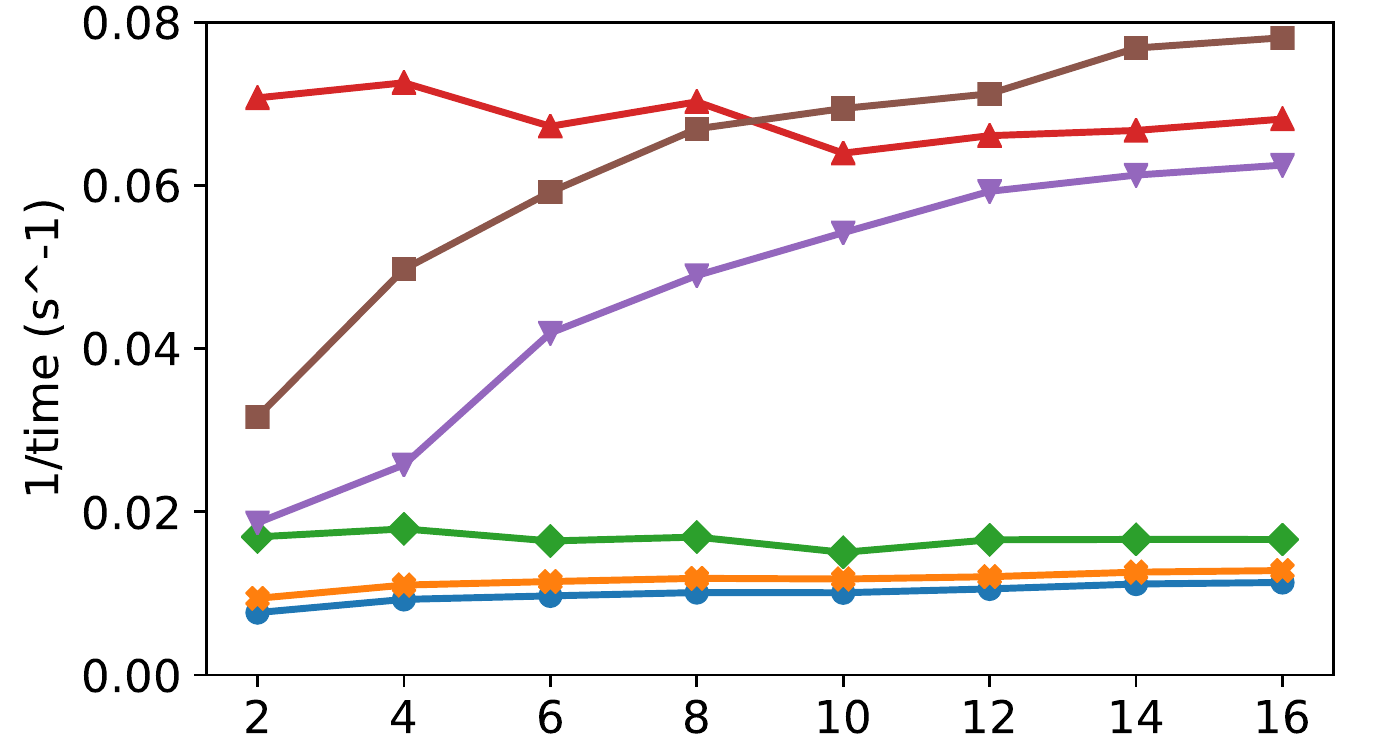}}
    \hspace{0.03in}
  \subfigure[GISETTE]{
    \label{fig:secure:scalenumber_imba}
    \includegraphics[width=0.22\linewidth]{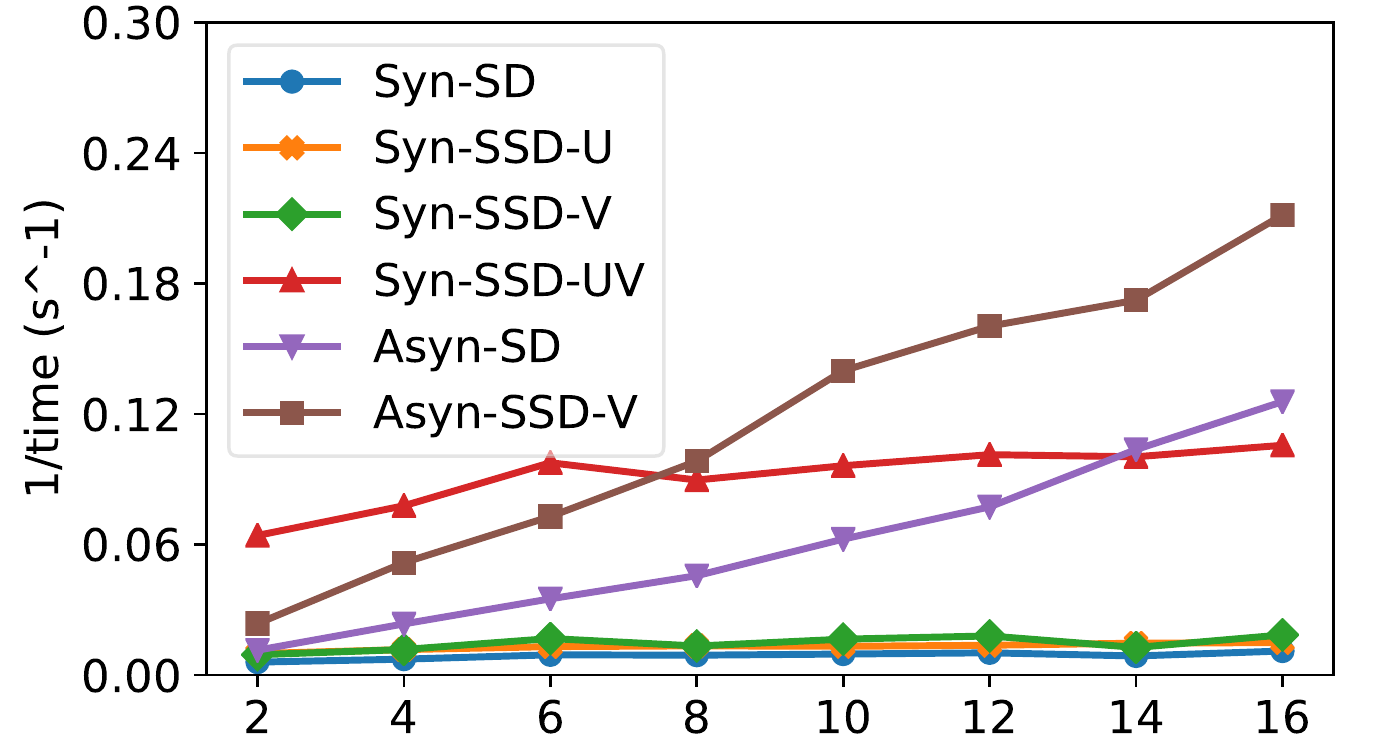}}
    \hspace{0.03in}
  \caption{Reciprocal of per-iteration time for imbalanced workload in secure distributed NMF}
  \label{fig:secure:scale_imba}
\end{figure*}

Since the time for each iteration is significantly reduced by our proposed DSANLS compared to MPI-FAUN,
in Fig.~\ref{fig:dsanl:error_time}, we show the relative error over time
for DSANLS and MPI-FAUN implementations of MU, HALS, and ANLS/BPP on the 6 real public datasets.
Observe that DSANLS/S performs best in all 6 datasets, although DSANLS/G has faster per-iteration convergence rate.
MU converges relatively slowly and usually has a bad convergence
result; on the other hand HALS may oscillate in the early rounds\footnote{HALS does not guarantee the objective function to decrease monotonically.}, but
converges quite fast and to a good solution. Surprisingly, although
ANLS/BPP is considered to be the state-of-art NMF algorithm, it does
not perform well in all 6 datasets. As we will see, this is due to its high per-iteration cost.

\subsubsection{Scalability Comparison}

We vary the number of nodes used in the cluster from 2 to 16 and record the average time for 100 iterations of each algorithm. Fig.~\ref{fig:dsanl:scale} shows the reciprocal of per-iteration time as a function of the number of nodes used.
All algorithms exhibit
good scalability for all datasets (nearly a straight line), except for FACE (i.e., Fig.~\ref{fig:dsanl:scaleface}).
FACE is the smallest dataset, whose number of columns is 300, while $k$ is set to 100 by default. When $n/N$ is smaller than $k$, the complexity is dominated by $k$, hence, increasing the number of nodes does not reduce the computational cost, but may increase the communication overhead.
In general, we can observe that DSANLS/Subsampling has the lowest per-iteration cost compared to all other algorithms, and DSANLS/Gaussian has similar cost to MU and HALS. ANLS/BPP has the highest per-iteration cost, explaining the bad performance of ANLS/BPP in Fig.~\ref{fig:dsanl:error_time}.

\subsubsection{Performance Varying the Value of $k$}

Although tuning the factorization rank $k$ is outside the scope of this paper, we compare the performance of DSANLS with MPI-FAUN varying the value of $k$ from 20 to 500
on RCV1.
Observe from Fig.~\ref{fig:dsanl:varyk} and Fig.~\ref{fig:dsanl:rcv1} ($k=100$) that DSANLS outperforms the state-of-art algorithms for all values of $k$. Naturally, the relative error of all algorithms decreases with the increase of $k$, but they also take longer to converge.

\subsubsection{Comparison with Projected Gradient Descent}
In Sec.~\ref{sec:solving subproblems}, we claimed that
our proximal coordinate descent approach (denoted as DSANLS-RCD) is faster than projected gradient descent (also presented in the same section, denoted as DSANLS-PGD).
Fig.~\ref{fig:dsanl:sub} confirms the difference in the convergence rate of the two approaches regardless of the random matrix generation approach.

\subsection{Evaluation on Secure Distributed NMF}

\subsubsection{Performance Comparison for Uniform Workload}

In Fig.~\ref{fig:secure:error_time}, we show
the relative error over time for secure distributed NMF algorithms on the 4 real public datasets, with a uniformly partition of columns.
\emph{Syn-SSD-UV} performs best in BOAT, FACE and GISETTE. As we will see in the next section, this is due to the fact that per-iteration cost of \emph{Syn-SSD-UV} is significantly reduced by sketching. On MNIST, \emph{Syn-SSD-U} and \emph{Syn-SSD-V} has a better convergence in terms of relative error. \emph{Syn-SD} and \emph{Asyn-SD} converge
relatively slowly and usually have a bad convergence result; on the
other hand \emph{Asyn-SSD-V} converges slowly but consistently generates better results than \emph{Syn-SD} and \emph{Asyn-SD}.

\subsubsection{Performance Comparison for Imbalanced Workload}

To evaluate the performance of different methods when the workload is
imbalanced, we conduct experiments on skewed partition of input matrix. Among
10 worker nodes, node $0$ is assigned with $50\%$ of the columns, while other
nodes have a uniform partition of the rest of columns. The measure for error
is the same as the case of uniform workload.

It can be observed that in imbalanced workload, asynchronous algorithms
generally outperform synchronous algorithms. \emph{Asyn-SSD-V} gives the best
result in terms of relative error over time, except dataset FACE. In FACE,
\emph{Asyn-SD} slowly converges to the best result. Unlike the case of uniform
workload in Fig.~\ref{fig:secure:error_time}, the sketching method
\emph{Syn-SSD-UV} does not perform well in imbalanced workload. \emph{Syn-SD}
are basically inapplicable in BOATS, MNIST and GISETTE datasets due to its
slow speed. In sparse datasets MNIST and GISETTE, \emph{Syn-SSD-V} and
\emph{Syn-SSD-U} can converge to a good result, but they do not generate
satisfactory results on dense dataset BOATS and FACE.

\subsubsection{Scalability Comparison}

We vary the number of nodes used in the cluster from 2 to 16 and record the
average time for 100 iterations of each algorithm.
Fig.~\ref{fig:secure:scale_uniform} shows the reciprocal of per-iteration time
as a function of the number of nodes for uniform workload. All
algorithms exhibit good scalability for all datasets (nearly a straight line),
except for FACE (i.e., Fig.~\ref{fig:secure:scaleface}). FACE is the smallest
dataset, whose number of columns is 361 and number of row is 2,429. When $n/N$
is smaller than $k=100$, the time consumed by subproblem solvers is dominated by the communication overhead. Hence, increasing the number of nodes is does not reduce per-iteration time. In general, we can observe that \emph{Syn-SSD-UV} has
the lowest per-iteration time compared to all other algorithms, and also has
the best scalability as we can see from the steepest slope. Synchronous
averaging has the highest per-iteration cost, explaining the bad performance
in uniform workload experiments in Fig.~\ref{fig:secure:error_time}.

In imbalanced workload settings, it is not surprising that asynchronous
algorithms outperform synchronous algorithms with respect to scalability, as
shown in Fig.~\ref{fig:secure:scale_imba}. Synchronization barriers before
All-Reduce operations severely affect the scalability of synchronous
algorithms, resulting in a nearly flat curve for per-iteration time. The
per-iteration time of \emph{Syn-SSD-UV} is satisfactory when cluster size is
small. However, it does not get significant improvements when more nodes are
deployed. On the other hand, asynchronous algorithms demonstrate decent
scalability as number of nodes grows. The short average iteration time of
\emph{Asyn-SD} and \emph{Asyn-SSD-V}, shown in
Fig.~\ref{fig:secure:scale_imba}, also explains their superior performance
over their synchronous counterparts in Fig.~\ref{fig:secure:error_time_imba}.

In conclusion, with an overall evaluation of convergence and scalability,
\emph{Syn-SSD-UV} should be adopted for secure distributed NMF under uniform
workload, while \emph{Asyn-SSD-V} is a more reasonable choice for secure
distributed NMF under imbalanced workload.

\section{Conclusion}\label{sec:discuss}

In this paper, we studied the acceleration and security problems for distributed
NMF. Firstly, we presented a novel distributed NMF algorithm DSANLS that can
be used for scalable analytics of high dimensional matrix data. Our approach
follows the general framework of ANLS, but utilizes matrix sketching to reduce
the problem size of each NLS subproblem. We discussed and compared two
different approaches for generating random matrices (i.e., Gaussian and
subsampling random matrices). We presented two subproblem solvers for our
general framework, and theoretically proved that our algorithm is convergent.
We analyzed the per-iteration computational and communication cost of our
approach and its convergence, showing its superiority compared to the 
state-of-the-art. Secondly, we designed four efficient distributed NMF methods
in both synchronous and asynchronous settings with a security guarantee. They are
the first distributed NMF methods over federated data, where data from all
parties are utilized together in NMF for better performances and the data of
each party remains confidential without leaking any individual information to
other parties during the process. Finally, we conducted extensive experiments
on several real datasets to show the superiority of our proposed methods. In
the future, we plan to study the applications of DSANLS in dense or sparse
tensors and consider more practical designs of asynchronous algorithm for
secure distributed NMF.

\section{Acknowledgment}
This work was supported by the National Natural Science Foundation of China (no. 61972328).

\appendices

\section{Proof of Lemma~\ref{lemma:exisit_optimal}}\label{sec:lemma1}

\begin{proof}[Proof of Lemma~\ref{lemma:exisit_optimal}] Suppose $(U^*,V^*)$ is the global optimal solution but fails to satisfy Eq.~\ref{eq:extra_con} in the paper. If there exist indices $i,j,l$ such that $U^*_{i:l}\cdot V^*_{j:l}> 2\|M\|_F$, then
\begin{equation}
\small
\begin{aligned}
\left\|M - U^* V^{*\top}\right\|_F^2 &\geq \left(U^*_{i:l}\cdot V^*_{j:l} - M_{i:j} \right)^2 > \left(2\|M\|_F - \|M\|_F\right)^2 \\
& \geq \|M\|_F^2.
\end{aligned}	
\end{equation}
	However, simply choosing $U=0$ and $V=0$ will yield a smaller error
	$\|M\|_F^2$, which contradicts the fact that $(U^*,V^*)$ is optimal. Therefore, if we define $\alpha_l=\max_i U^*_{i:l}$ and $\beta_l=\max_j V^*_{j:l}$, we must have $\alpha_l\cdot\beta_l\leq 2\|M\|_F$ for each $l$. Now we construct a new solution $(\overline{U}, \overline{V})$ by:
	\begin{equation}
	\overline{U}_{i:l} = U^*_{i:l} \cdot \sqrt{\beta_l/\alpha_l}\quad\text{and}\quad \overline{V}_{j:l} = V^*_{j:l} \cdot \sqrt{\alpha_l/\beta_l}.
	\end{equation}
	Note that
	\begin{equation}
	\begin{aligned}
	&\overline{U}_{i:l} \leq \alpha_l \cdot \sqrt{\beta_l/\alpha_l} = \sqrt{\alpha_l\cdot\beta_l} \leq \sqrt{2\|M\|_F},\\
	&\overline{V}_{j:l} \leq \beta_l \cdot \sqrt{\alpha_l/\beta_l} = \sqrt{\alpha_l\cdot\beta_l} \leq \sqrt{2\|M\|_F},
	\end{aligned}
	\end{equation}
	so $(\overline{U}, \overline{V})$ satisfy Eq.~\ref{eq:extra_con} in the paper. Besides,
	\begin{equation}
	\begin{aligned}
	&\left\|M-\overline{U}\,\overline{V}^\top\right\|_F^2 = \sum_{i,j} \Big(M_{i:j} - \sum_{l}\overline{U}_{i:l}\overline{V}_{j:l} \Big)^2 \\
	= & \sum_{i,j} \Big(M_{i:j} - \sum_{l}U^*_{i:l}\cdot\sqrt{\beta_l/\alpha_l}\cdot V^*_{j:l}\cdot\sqrt{\alpha_l/\beta_l} \Big)^2 \\
	=& \sum_{i,j} \Big(M_{i:j} - \sum_{l}U^*_{i:l}\cdot V^*_{j:l} \Big)^2
	= \|M - U^* V^{*\top}\|_F^2,
	\end{aligned}
	\end{equation}
	which means that $(\overline{U}, \overline{V})$ is also an optimal
	solution. In short, for any optimal solution of Eq.~\ref{eq:problem} 
	outside the domain shown in Eq.~\ref{eq:extra_con}, there exists a corresponding
	global optimal solution satisfying the domain shown in Eq.~\ref{eq:extra_con}, which further
	means that there exists at least one optimal solution in the domain shown in Eq.~\ref{eq:extra_con}.
\end{proof}

\section{Proof of Theorem~\ref{theorem:pgd}}\label{proof:Theorem}
For simplicity, we denote $f(U,V) = \|M - U V^\top\|_F^2$, $\tilde{f}_S=\|MS - U (V^\top
S)\|_F^2$, and $\tilde{f}'_{S'}=\|M^\top S' - V (U^\top
S')\|_F^2$. Let $G^t$ and $\tilde{G}^t$ denote the gradients of the
above quantities, i.e.,
\begin{equation}
\small
\begin{aligned}
&G^t\triangleq \nabla_U \left.f(U,V^t)\right|_{U=U^t},\quad \tilde{G}^t \triangleq \nabla_U \left.\tilde{f}_{S^t}(U,V^t)\right|_{U=U^t}, \\
&G'^t\triangleq \nabla_V \left.f(U^{t+1},V)\right|_{V=V^t},\quad \tilde{G}'^t \triangleq \nabla_V \left.\tilde{f}'_{S'^t}(U^{t+1},V)\right|_{V=V^t}.
\end{aligned}
\end{equation}
Besides, let
\begin{equation}
\Delta^t \triangleq \frac{1}{\eta_t}\left(U^t - U^{t+1}\right)\quad\text{and}\quad
\Delta'^t \triangleq \frac{1}{\eta_t}\left(V^t - V^{t+1}\right).
\end{equation}

\subsection{Preliminary Lemmas}\label{proof:Preliminary Lemmas}
To prove Theorem~\ref{theorem:pgd}, we need following lemmas (which
are proved in Sec.~\ref{LemmainTheorem}):

\begin{lemma}\label{lemma:expecation}
	Under Assumptions~\ref{assumption:1} and~\ref{assumption:2}, conditioned on $U^t$ and $V^t$, $\tilde{G}^t$ and $\tilde{G}'^t$ are unbiased estimators of $G^t$ and $G'^t$ respectively with uniformly bounded variance.
\end{lemma}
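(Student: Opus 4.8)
The plan is to reduce the claim to the closed-form gradient expressions and then treat the two requirements separately, exploiting that conditioning on $U^t$ and $V^t$ freezes everything except the random matrix $S^t$. Writing $E^t \triangleq U^t V^{t\top} - M$ for the residual at iteration $t$, differentiating $\tilde{f}_{S^t}$ in $U$ and evaluating at $U^t$ gives
\begin{equation}
\tilde{G}^t = 2 E^t \left(S^t S^{t\top}\right) V^t, \qquad G^t = 2 E^t V^t,
\end{equation}
which are just the full-matrix forms of the row-restricted gradients already computed in Sec.~\ref{sec:gradient descent}.

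For unbiasedness I would condition on $U^t$ and $V^t$, so that $E^t$ and $V^t$ are deterministic, and push the expectation through the linear dependence on $S^t S^{t\top}$:
\begin{equation}
\E\left[\tilde{G}^t \mid U^t, V^t\right] = 2 E^t\, \E\left[S^t S^{t\top}\right] V^t = 2 E^t V^t = G^t,
\end{equation}
invoking $\E[S^t S^{t\top}] = I$ from Assumption~\ref{assumption:1}. This is exactly the expectation identity already derived in the main text.

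The substantive part is the variance bound. Starting from
\begin{equation}
\tilde{G}^t - G^t = 2 E^t \left(S^t S^{t\top} - I\right) V^t,
\end{equation}
I would apply submultiplicativity of the Frobenius and spectral norms in the form $\|AXB\|_F \le \|A\|_F \|X\|_F \|B\|_F$ to obtain
\begin{equation}
\left\|\tilde{G}^t - G^t\right\|_F \le 2\,\|E^t\|_F\,\|V^t\|_F\,\left\|S^t S^{t\top} - I\right\|_F.
\end{equation}
Assumption~\ref{assumption:2} gives $\|V^t\|_F \le R$ and, via $\|E^t\|_F \le \|U^t\|_F\|V^t\|_F + \|M\|_F \le R^2 + \|M\|_F$, makes both deterministic prefactors uniform constants. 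Squaring, taking the conditional expectation, and using the variance bound $\mathbb{V}[S^t S^{t\top}] \le \sigma^2$ (read as $\E\|S^t S^{t\top} - I\|_F^2 \le \sigma^2$) then produces a bound of the form $4(R^2 + \|M\|_F)^2 R^2 \sigma^2$, independent of $t$. The same computation applied to $\tilde{f}'_{S'^t}$, with residual $E'^t \triangleq V^t U^{t+1\top} - M^\top$ and $S'^t$ in place of $S^t$, yields the corresponding claims for $\tilde{G}'^t$ and $G'^t$.

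The only obstacle I anticipate is bookkeeping: pinning down the precise matrix norm in which $\mathbb{V}[S^t S^{t\top}]$ is measured and matching it to the norm used in the submultiplicativity step so the scalar constants combine cleanly. No deeper difficulty should arise, since unbiasedness is immediate from $\E[S^t S^{t\top}] = I$, and the boundedness of the iterates supplied by Assumption~\ref{assumption:2} reduces the variance estimate to controlling $\|S^t S^{t\top} - I\|_F$ alone.
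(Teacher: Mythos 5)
Your proposal is correct and follows essentially the same route as the paper: unbiasedness falls out of linearity and $\E[S^tS^{t\top}]=I$, and the variance bound comes from submultiplicativity combined with Assumption~\ref{assumption:2} to control $\|U^tV^{t\top}-M\|_F$ and $\|V^t\|_F$ and Assumption~\ref{assumption:1} to control the fluctuation of $S^tS^{t\top}$. Your explicit centering $\tilde{G}^t-G^t=2E^t(S^tS^{t\top}-I)V^t$ is a slightly cleaner bookkeeping of the same estimate the paper writes as a chain of inequalities on $\mathbb{V}[\tilde{G}^t]$, and the norm-convention ambiguity you flag for $\mathbb{V}[S^tS^{t\top}]$ is present in the paper's version as well.
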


\begin{lemma}\label{lemma:min}
	Assume $X$ is a nonnegative random variable with mean $\mu$ and variance $\sigma^2$, and $c\geq 0$ is a constant. Then we have
	\begin{equation}\label{eq:to_prove}
	\E\left[\min\{X, c\}\right] \geq \min\left\{c,\frac{\mu}{2}\right\}\cdot\left(1  - \frac{4\sigma^2}{4\sigma^2 + \mu^2}\right).
	\end{equation}
\end{lemma}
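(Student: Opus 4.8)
The plan is to reduce the claimed bound to a single tail estimate and then control that tail using only the first two moments. First I would record an elementary \emph{truncation inequality}: for any threshold $\tau$ with $0\le\tau\le c$, the nonnegativity of $X$ gives the pointwise bound $\min\{X,c\}\ge \tau\,\mathbf{1}\{X\ge\tau\}$. Indeed, if $X\ge\tau$ then $\min\{X,c\}\ge\min\{\tau,c\}=\tau$, while if $X<\tau$ the right-hand side vanishes and $\min\{X,c\}\ge 0$. Taking expectations yields $\E[\min\{X,c\}]\ge \tau\,\P[X\ge\tau]$. This is precisely the step where the hypothesis $X\ge 0$ is indispensable: without it the event $\{X<\tau\}$ could contribute a negative amount and destroy the bound.

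Next I would specialize the threshold to $\tau=\min\{c,\mu/2\}$, which is exactly the factor appearing in \eqref{eq:to_prove}. Since $\tau\le\mu/2$, monotonicity of the tail gives $\P[X\ge\tau]\ge\P[X\ge\mu/2]$, so it remains only to lower-bound $\P[X\ge\mu/2]$ in terms of $\mu$ and $\sigma^2$. Combining this with the truncation step gives $\E[\min\{X,c\}]\ge \min\{c,\mu/2\}\,\P[X\ge\mu/2]$, reducing the whole lemma to the tail estimate.

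For this last estimate I would invoke the one-sided Chebyshev (Cantelli) inequality, which for a variable of mean $\mu$ and variance $\sigma^2$ reads $\P[X\le\mu-\lambda]\le \sigma^2/(\sigma^2+\lambda^2)$ for $\lambda>0$. Choosing $\lambda=\mu/2$ gives $\P[X\le\mu/2]\le 4\sigma^2/(4\sigma^2+\mu^2)$, hence $\P[X\ge\mu/2]\ge 1-4\sigma^2/(4\sigma^2+\mu^2)$. Chaining the three inequalities produces $\E[\min\{X,c\}]\ge\min\{c,\mu/2\}\,\bigl(1-4\sigma^2/(4\sigma^2+\mu^2)\bigr)$, which is the claim. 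The main obstacle is exactly this tail lower bound: a plain two-sided Chebyshev estimate is too weak here, and the correct tool is Cantelli's sharper one-sided bound, which I would either cite or derive in one line from the optimization $\inf_{s\ge 0}\E[(X-\mu+s)^2]/(\lambda+s)^2$. Everything else is elementary bookkeeping.
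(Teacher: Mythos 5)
Your proposal is correct and follows essentially the same route as the paper: lower-bound $\E[\min\{X,c\}]$ by $\tau\,\P[X\ge\tau]$ and control the tail with Cantelli's one-sided inequality at deviation $\mu/2$; the paper merely organizes this as a case split on $\mu\ge 2c$ versus $\mu<2c$ (plus the trivial $\mu=0$ case), whereas you unify the cases by truncating directly at $\tau=\min\{c,\mu/2\}$. The only nit is that your invocation of Cantelli with $\lambda=\mu/2$ presumes $\mu>0$; when $\mu=0$ the claimed bound is trivially $0$ and should be dispatched separately, as the paper does.
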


\begin{lemma} \label{lemma:intern}
Define the function
\begin{equation}\label{eq:def_phi}
\phi(x,y,z)=\min\left\{|xy|,y^2/2\right\}\cdot\left(1 - \frac{4z^2}{4z^2+y^2}\right)\geq 0.
\end{equation}
Conditioned on $U^t$ and $V^t$, there exists an uniform constant $\sigma'^2 > 0$ such that
\begin{equation}\label{eq:to_prove_2}
\E[G^t_{i:l}\cdot\Delta^t_{i:l}] \geq \phi\left(U^t_{i:l}/\eta_t, G^t_{i:l},\sigma'^2\right)
\end{equation}
and
\begin{equation}
\E[G'^t_{j:l}\cdot\Delta'^t_{j:l}] \geq \phi\left(V^t_{j:l}/\eta_t, G'^t_{j:l},\sigma'^2\right)
\end{equation}
for any $i,j,l$.
\end{lemma}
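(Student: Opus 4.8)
The plan is to reduce both inequalities to the scalar estimate of Lemma~\ref{lemma:min}, after making the projected gradient step explicit entrywise. For the $U$-update, the rule $U^{t+1}_{i:l}=\max\{U^t_{i:l}-\eta_t\tilde{G}^t_{i:l},0\}$ gives, after checking the two cases of whether the unprojected step stays nonnegative,
\[
\Delta^t_{i:l}=\frac{1}{\eta_t}\left(U^t_{i:l}-U^{t+1}_{i:l}\right)=\min\left\{\tilde{G}^t_{i:l},\,c\right\},\qquad c\triangleq U^t_{i:l}/\eta_t\geq 0,
\]
where $c\geq 0$ because the iterates remain nonnegative. Since $G^t$ is the true gradient of $f$ at the already fixed $U^t,V^t$, it is deterministic under the conditioning, so with $\mu\triangleq G^t_{i:l}$ we have $\E[G^t_{i:l}\Delta^t_{i:l}]=\mu\,\E[\min\{\tilde{G}^t_{i:l},c\}]$. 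By Lemma~\ref{lemma:expecation}, $\E[\tilde{G}^t_{i:l}]=\mu$ and $\mathbb{V}[\tilde{G}^t_{i:l}]$ is bounded by a uniform constant, which I take as $\sigma'^2$.

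I would then split on the sign of $\mu$. If $\mu<0$, the estimate is immediate: $\min\{\tilde{G}^t_{i:l},c\}\leq\tilde{G}^t_{i:l}$ yields $\E[\min\{\tilde{G}^t_{i:l},c\}]\leq\mu$, and multiplying by the negative number $\mu$ reverses the inequality to give $\E[G^t_{i:l}\Delta^t_{i:l}]\geq\mu^2\geq\mu^2/2\geq\phi(c,\mu,\sigma'^2)$, using that the second factor of $\phi$ lies in $[0,1]$ and $\min\{|c\mu|,\mu^2/2\}\leq\mu^2/2$. If $\mu\geq 0$ (the case $\mu=0$ being trivial since both sides vanish), dividing the target inequality by $\mu$ turns it into exactly the conclusion of Lemma~\ref{lemma:min} applied to $X=\tilde{G}^t_{i:l}$ with this $c$, because the factor $\min\{c,\mu/2\}$ and the variance-dependent factor of $\phi$ are precisely those of the lemma's lower bound. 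To replace the exact conditional variance by the uniform constant, I would use that $\phi$ is nonincreasing in its third argument (the factor $1-\tfrac{4z^2}{4z^2+y^2}$ decreases in $z$), so enlarging the variance only weakens the bound and keeps it valid.

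The companion inequality for $V$ follows verbatim with $\tilde{G}'^t_{j:l}$, $\Delta'^t_{j:l}$ and $c'=V^t_{j:l}/\eta_t$, since the $V$-subproblem has identical structure and Lemma~\ref{lemma:expecation} already supplies unbiasedness and a uniform variance bound for $\tilde{G}'^t$. The regularized coordinate descent solver is handled analogously, with $1/\mu_t$ playing the role of $\eta_t$ in the definition of $\Delta^t$; the entry update again takes the form of a clipped stochastic gradient step, so the same reduction applies.

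The main obstacle is the case $\mu\geq 0$: Lemma~\ref{lemma:min} requires a \emph{nonnegative} random variable, whereas $\tilde{G}^t_{i:l}$ can be negative, so its hypotheses are not met directly. The delicate point is therefore to justify the application to the sketched gradient, i.e.\ to control the contribution of the event $\{\tilde{G}^t_{i:l}<\mu/2\}$, on which $\min\{\tilde{G}^t_{i:l},c\}$ may be negative, using only the mean $\mu$ and the uniform variance bound $\sigma'^2$. I expect the argument to isolate the favorable event $\{\tilde{G}^t_{i:l}>\mu/2\}$, on which $\min\{\tilde{G}^t_{i:l},c\}\geq\min\{\mu/2,c\}\geq 0$, and to bound its probability from below by a one-sided Chebyshev (Cantelli) estimate $\P(\tilde{G}^t_{i:l}>\mu/2)\geq\mu^2/(4\sigma'^2+\mu^2)$, while arguing that the unfavorable event cannot overwhelm this contribution. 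Matching the constants of this tail estimate to the exact factor appearing in $\phi$ is precisely what Lemma~\ref{lemma:min} packages, so the crux of the proof is a clean reduction to (a suitably sign-relaxed form of) that lemma.
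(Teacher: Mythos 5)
Your reduction to Lemma~\ref{lemma:min} is the right skeleton, and your handling of the cases $G^t_{i:l}=0$ and $G^t_{i:l}<0$ is complete and correct (for the negative case, your one-line argument via $\min\{\tilde{G}^t_{i:l},c\}\le\tilde{G}^t_{i:l}$ even reaches the bound $\mu^2$ without any structural input about the sketch). But the case $G^t_{i:l}>0$, which you yourself flag as the main obstacle, is genuinely left open: you never produce the ``sign-relaxed form'' of Lemma~\ref{lemma:min}, and the Cantelli lower bound on $\Pr\{\tilde{G}^t_{i:l}>\mu/2\}$ by itself does not control the negative contribution of $\min\{\tilde{G}^t_{i:l},c\}$ on the complementary event, so the exact constants of $\phi$ are not recovered. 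This is a missing idea, not a presentational shortfall.

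The paper closes exactly this hole with a pointwise sign-preservation property: it asserts $G^t_{i:l}\cdot\tilde{G}^t_{i:l}\ge 0$ for \emph{every} realization of $S^t$, by writing $G^t_{i:l}=2(U^tV^{t\top}-M)_{i:}V^t_{:l}$ and $\tilde{G}^t_{i:l}=2(U^tV^{t\top}-M)_{i:}(S^tS^{t\top})V^t_{:l}$ and claiming $a^\top(S^tS^{t\top})b\cdot a^\top b=\tr\bigl(aa^\top(S^tS^{t\top})bb^\top\bigr)\ge 0$ from positive semi-definiteness. Granting that, when $G^t_{i:l}>0$ the variable $\tilde{G}^t_{i:l}$ is a genuinely nonnegative random variable and Lemma~\ref{lemma:min} applies verbatim with $X=\tilde{G}^t_{i:l}$ and $c=U^t_{i:l}/\eta_t$ (your observation that enlarging the variance to the uniform $\sigma'^2$ only weakens the bound is also how the paper finishes); when $G^t_{i:l}<0$ it gives $\Delta^t_{i:l}=\tilde{G}^t_{i:l}$ identically. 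So the single ingredient you are missing is this deterministic sign alignment between the sketched and true gradient entries. I will add that your suspicion that something beyond the mean and variance is required here is well founded: the trace of a product of three PSD matrices is not nonnegative in general, so the paper's justification of the sign-preservation step is itself delicate and worth verifying directly for the specific sketching matrices (e.g., for subsampling, $S^tS^{t\top}$ is a $0$/$1$ diagonal matrix and the claim amounts to a partial sum of $\sum_i a_ib_i$ sharing the sign of the full sum), rather than taken for granted.
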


\begin{lemma}[Supermartingale Convergence Theorem~\cite{neveu1975discrete}]\label{lemma:martingale}
Let $Y_t$, $Z_t$ and $W_t$, $t=0,1,\dots$, be three sequences of random variables and let $\mathcal{F}_t$, $t=0,1,\dots$, be sets of random variables such that $\mathcal{F}_t\subset \mathcal{F}_{t+1}$. Suppose that
\begin{enumerate}[leftmargin=13pt]
\item The random variables $Y_t$, $Z_t$ and $W_t$ are nonnegative, and are functions of the random variables in $\mathcal{F}_t$.
\item For each $t$, we have
    \begin{equation}
    \mathbb{E}[Y_{t+1}|\mathcal{F}_t] \leq Y_t - Z_t + W_t.
    \end{equation}
\item There holds, with probability 1, $\sum_{t=0}^\infty W_t< \infty$.
\end{enumerate}
Then we have $\sum_{t=0}^\infty Z_t< \infty$, and the sequence $Y_t$ converges to a nonnegative random variable $Y$, with probability 1.
\end{lemma}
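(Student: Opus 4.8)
The final statement is the Robbins--Siegmund almost-supermartingale convergence theorem (here attributed to Neveu). The plan is the classical \emph{localization} argument that reduces it to Doob's convergence theorem for supermartingales that are bounded below. Throughout I assume, as the conditional expectation in hypothesis~2 already tacitly requires, that $Y_t,Z_t,W_t$ are integrable. The first step is to construct an auxiliary process that is an honest supermartingale. Define
\begin{equation}
U_t \triangleq Y_t + \sum_{s=0}^{t-1}\bigl(Z_s - W_s\bigr),
\end{equation}
with the empty sum equal to $0$. By hypothesis~1 each $U_t$ is $\mathcal{F}_t$-measurable and integrable. Taking conditional expectations and applying the drift inequality $\E[Y_{t+1}\mid\mathcal{F}_t]\leq Y_t - Z_t + W_t$ of hypothesis~2, the telescoping of the $Z_s-W_s$ sum yields $\E[U_{t+1}\mid\mathcal{F}_t]\leq U_t$, so $\{U_t\}$ is a supermartingale with respect to $\{\mathcal{F}_t\}$.

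The difficulty is that hypothesis~3 only guarantees $\sum_t W_t<\infty$ \emph{almost surely}, not $\sum_t\E[W_t]<\infty$; hence $U_t$ is neither nonnegative nor obviously bounded below, and Doob's theorem cannot be applied to it directly. To get around this I would localize. For a fixed constant $a>0$ set
\begin{equation}
\tau_a \triangleq \inf\Bigl\{t\geq 0 : \textstyle\sum_{s=0}^{t} W_s > a\Bigr\},
\end{equation}
which is a stopping time because $\{\tau_a\leq t\}=\{\sum_{s=0}^{t}W_s>a\}\in\mathcal{F}_t$. The stopped process $U_{t\wedge\tau_a}$ is again a supermartingale. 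The key pathwise observation is that for every $t$ one has $\sum_{s<t\wedge\tau_a}W_s\leq a$ (on $\{t\leq\tau_a\}$ this is the definition of $\tau_a$; on $\{t>\tau_a\}$ it is the partial sum up to $\tau_a-1$). Combining this with $Y\geq 0$ and $Z\geq 0$ gives $U_{t\wedge\tau_a}\geq -a$, so the stopped supermartingale is uniformly bounded below. Since then $\sup_t\E\bigl[(U_{t\wedge\tau_a})^-\bigr]\leq a<\infty$, Doob's supermartingale convergence theorem implies that $U_{t\wedge\tau_a}$ converges almost surely to a finite limit.

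It remains to convert this into the desired conclusion and remove the localization. On the event $\{\tau_a=\infty\}=\{\sum_s W_s\leq a\}$ we have $U_{t\wedge\tau_a}=U_t$ for all $t$, so $U_t$ converges; since $\sum_{s<t}W_s$ converges to the finite limit $\sum_s W_s$ there, it follows that $Y_t+\sum_{s<t}Z_s$ converges to a finite limit as well. Here $\sum_{s<t}Z_s$ is nondecreasing and $Y_t\geq 0$, so a standard splitting (a bounded nondecreasing sequence converges, and subtracting it leaves a convergent sequence) shows that $\sum_s Z_s<\infty$ and $Y_t\to Y$ for some finite $Y\geq 0$, \emph{on} $\{\tau_a=\infty\}$. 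Finally I let $a\to\infty$ along the integers: the events $\{\tau_a=\infty\}=\{\sum_s W_s\leq a\}$ increase to $\{\sum_s W_s<\infty\}$, which has probability $1$ by hypothesis~3. Hence the conclusion $\sum_t Z_t<\infty$ and $Y_t\to Y$ holds on a set of probability $1$.

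The step I expect to be the crux is the localization in the second paragraph: the whole point of the theorem is that the ``noise'' $W_t$ is controlled only almost surely, and the introduction of $\tau_a$ is exactly what converts an a.s.\ summability statement into the pathwise lower bound $U_{t\wedge\tau_a}\geq -a$ needed to invoke Doob's theorem. Verifying the stopping-time measurability and the bound $\sum_{s<t\wedge\tau_a}W_s\leq a$ carefully, and then confirming that $\{\tau_a=\infty\}\uparrow\{\sum_s W_s<\infty\}$ so that nothing is lost in the limit $a\to\infty$, is where all the care is required; the supermartingale identity and the final monotone-splitting argument are routine by comparison.
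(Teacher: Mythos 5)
Your proposal is correct, but there is nothing in the paper to compare it against: the paper states this lemma as an imported classical result, citing Neveu's book, and gives no proof (its appendix proves only Lemmas~\ref{lemma:expecation}, \ref{lemma:min} and \ref{lemma:intern}). Your argument is the standard proof of the Robbins--Siegmund almost-supermartingale theorem, and each step checks out: $U_t = Y_t + \sum_{s=0}^{t-1}(Z_s - W_s)$ is a supermartingale by the drift inequality and $\mathcal{F}_t$-measurability of $Z_s, W_s$ for $s\leq t$; $\{\tau_a\leq t\}=\{\sum_{s=0}^{t}W_s>a\}\in\mathcal{F}_t$ because the partial sums of the nonnegative $W_s$ are nondecreasing; the pathwise bound $\sum_{s<t\wedge\tau_a}W_s\leq a$ gives $U_{t\wedge\tau_a}\geq -a$, hence $\sup_t\E\bigl[(U_{t\wedge\tau_a})^-\bigr]\leq a$ and Doob's convergence theorem applies to the stopped process; and the events $\{\tau_a=\infty\}=\{\sum_s W_s\leq a\}$ increase to $\{\sum_s W_s<\infty\}$, which has probability one by hypothesis~3, so the monotone-splitting step delivers $\sum_t Z_t<\infty$ and $Y_t\to Y$ almost surely. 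The one caveat you rightly flag yourself is integrability of $Y_t, Z_t, W_t$, which the lemma's statement leaves tacit (alternatively one works with generalized conditional expectations for nonnegative variables); this is harmless for the paper's application, where the quantities $f(U^t,V^t)$, $\Phi$, and $\Gamma\eta_t^2$ are bounded under Assumptions~\ref{assumption:1} and~\ref{assumption:2}.
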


\begin{lemma}[\cite{mairal2013stochastic}]\label{lemma:convergence_gradient}
For two nonnegative scalar sequences $\{a_t\}$ and $\{b_t\}$, if $\sum_{t=0}^\infty a_k=\infty$ and $\sum_{t=0}^\infty a_tb_t<\infty$, then
\begin{equation}
\liminf_{t\rightarrow \infty} b_t = 0.
\end{equation}
Furthermore, if $|b_{t+1} - b_t|\leq B\cdot a_t$ for some constant $B>0$, then
\begin{equation}
\lim_{t\rightarrow \infty} b_t = 0.
\end{equation}
\end{lemma}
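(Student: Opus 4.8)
The plan is to prove the two assertions separately, in both cases arguing by contradiction and converting a hypothesized persistent lower bound on $b_t$ into a divergent lower bound for $\sum_t a_t b_t$, which collides with the hypothesis $\sum_t a_t b_t < \infty$.

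First I would establish $\liminf_{t\to\infty} b_t = 0$. Suppose instead that $\liminf_{t\to\infty} b_t = c > 0$. Then there is an index $T$ with $b_t \ge c/2$ for all $t \ge T$, so that
\begin{equation}
\sum_{t=T}^{\infty} a_t b_t \ge \frac{c}{2}\sum_{t=T}^{\infty} a_t = \infty,
\end{equation}
since discarding the finitely many terms $a_0,\dots,a_{T-1}$ cannot affect the divergence of $\sum_t a_t$. This contradicts $\sum_t a_t b_t < \infty$ and proves the first claim; note it uses only nonnegativity and the two summation hypotheses, not the Lipschitz-type bound.

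For the second claim I would invoke the extra hypothesis $|b_{t+1}-b_t| \le B a_t$, which forces $b_t$ to vary slowly relative to the increments $a_t$. Having shown $\liminf_t b_t = 0$, it suffices to rule out $\limsup_t b_t > 0$. Assume for contradiction that $\limsup_t b_t > 0$ and fix $\epsilon \in (0, \limsup_t b_t)$, so that $b_t \ge \epsilon$ for infinitely many $t$ while, by the first claim, $b_t < \epsilon/2$ for infinitely many $t$. The central idea is to isolate, for each excursion above $\epsilon$, a \emph{descent interval} on which $b_t$ stays bounded below and whose $a$-mass is bounded below. Concretely, whenever $b_{s} \ge \epsilon$, let $u > s$ be the first later index with $b_u < \epsilon/2$; then $b_t \ge \epsilon/2$ for all $s \le t < u$, and telescoping the increments gives
\begin{equation}
\frac{\epsilon}{2} \le b_s - b_u \le \sum_{t=s}^{u-1} |b_{t+1}-b_t| \le B\sum_{t=s}^{u-1} a_t ,
\end{equation}
whence $\sum_{t=s}^{u-1} a_t \ge \epsilon/(2B)$, and the interval's contribution satisfies $\sum_{t=s}^{u-1} a_t b_t \ge (\epsilon/2)\cdot \epsilon/(2B) = \epsilon^2/(4B)$.

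Finally I would chain these intervals: because $b_t \ge \epsilon$ infinitely often and $b_t < \epsilon/2$ infinitely often, I can select indices $s_1 < u_1 \le s_2 < u_2 \le \cdots$ yielding infinitely many pairwise disjoint descent intervals $[s_k,u_k)$ of the above type, each adding at least $\epsilon^2/(4B)$ to $\sum_t a_t b_t$; summing over $k$ forces $\sum_t a_t b_t = \infty$, the desired contradiction. Hence $\limsup_t b_t = 0$, and with nonnegativity this gives $\lim_{t\to\infty} b_t = 0$. I expect the main obstacle to be the bookkeeping in this last part: one must pick the endpoints so the intervals are genuinely disjoint and, crucially, define them via the \emph{descent} from $\epsilon$ down to $\epsilon/2$ rather than the ascent, since only on a descent interval is $b_t$ guaranteed to stay above $\epsilon/2$ throughout — an ascent would leave a boundary term where $b_t$ is small and break the estimate $\sum a_t b_t \ge (\epsilon/2)\sum a_t$.
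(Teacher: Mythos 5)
Your proof is correct, but there is nothing in the paper to compare it against: the paper states this lemma as an imported result, citing Mairal (2013), and gives no proof of its own (unlike Lemmas~2--4, which are proved in the appendix). What you have written is the standard self-contained argument for this classical stochastic-approximation fact. Both halves check out: the first claim needs only nonnegativity and the two summability hypotheses, exactly as you say (with the trivial extra case $\liminf_t b_t=+\infty$ handled the same way); for the second, your choice of \emph{descent} intervals from level $\epsilon$ down to the first crossing below $\epsilon/2$ is the right one, since it guarantees $b_t\ge\epsilon/2$ throughout the interval, the telescoping bound $\epsilon/2\le b_s-b_u\le B\sum_{t=s}^{u-1}a_t$ gives each interval an $a$-mass of at least $\epsilon/(2B)$, and the infinitely many pairwise disjoint such intervals each contribute at least $\epsilon^2/(4B)$ to $\sum_t a_tb_t$, contradicting its finiteness. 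Your closing remark about why ascent intervals would fail (the boundary term where $b_t$ is small breaks the lower bound on $\sum a_tb_t$) is exactly the pitfall that trips people up here. In short, your proposal supplies a complete elementary proof of a statement the paper merely cites.
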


\subsection{Proof of Theorem~\ref{theorem:pgd}}
\begin{proof}[Proof of Theorem~\ref{theorem:pgd}]
	Let us first focus on projected gradient descent. By conditioning on $U^t$ and $V^t$, we have
	\begin{small}
	\begin{align*}
	f(U^{t+1}, V^t)
	= &\left\| M - U^{t+1} V^{t\top} \right\|_F^2
	= \left\| M - \left(U^t - \eta_t \Delta^t \right) V^{t\top} \right\|_F^2\\
	= &\left\| \left(M - U^t V^{t\top}\right) - \eta_t \Delta^t V^{t\top} \right\|_F^2 \\
	=& \left\|M - U^t V^{t\top}\right\|_F^2 - 2\eta_t \left(M - U^t V^{t\top}\right)\cdot \left(\Delta^t V^{t\top}\right) \\
	& + \eta_t^2 \|\Delta^t V^{t\top}\|_F^2 \\
	=& f(U^t,V^t) - 2\eta_t \left(M - U^t V^{t\top}\right)\cdot \left(\Delta^t V^{t\top}\right)\\ 
	& + \eta_t^2 \|\Delta^t V^{t\top}\|_F^2.
	\numberthis \label{eq:sufficient_decrease}
	\end{align*}
	\end{small}For the second term of Eq.~\ref{eq:sufficient_decrease}, note that
	\begin{equation}
	\begin{aligned}
	2\left(M - U^t V^{t\top}\right)\cdot \left(\Delta^t V^{t\top}\right)
	&= 2\tr\left[ \left(M - U^t V^{t\top}\right) V^t \Delta^{t\top} \right] \\
	&= \tr\left[ G^t \Delta^{t\top} \right]
	= \sum_{i,l} G^t_{i:l}\cdot \Delta^t_{i:l}.
	\end{aligned}
	\end{equation}
	By taking expectation and using Lemma~\ref{lemma:intern}, we obtain:
	\begin{equation}
	\begin{aligned}
	\E\left[2\left(M - U^t V^{t\top}\right)\cdot \left(\Delta^t V^{t\top}\right)\right]
	&= \sum_{i,l} \E\left[G^t_{i:l}\cdot \Delta^t_{i:l}\right] \\
	&\geq \sum_{i,l}\phi\left(U^t_{i:l}/\eta_t, G^t_{i:l}, \sigma'^2\right).
	\end{aligned}
	\end{equation}
	For simplicity, we will use the notation
	\begin{equation}
	\Phi(U^t/\eta_t, G^t)\triangleq\sum_{i,l}\phi\left(U^t_{i:l}/\eta_t, G^t_{i:l}, \sigma'^2\right).
	\end{equation}For the third term of Eq.~\ref{eq:sufficient_decrease}, we can bound it in the following way:
	\begin{equation}
	\begin{aligned}
	\|\Delta^t V^{t\top}\|_F^2 \leq& \|\Delta^t\|_F^2 \cdot \|V^t\|_F^2
	\leq \|\tilde{G}^t\|_F^2 \cdot \|V^t\|_F^2 \\
	=& \left\| 2(U^t V^{t\top} - M) (S^t S^{t\top}) V^t \right\|_F^2 \cdot \|V^t\|_F^2 \\
	\leq& 4\|M - U^t V^{t\top}\|_F^2 \cdot \|S^t S^{t\top}\|^2_F \cdot \|V^t\|_F^4 \\
	\leq& 8\left(\|M\|_F^2 + \|U^t\|_F^2\cdot \|V^t\|_F^2\right) \cdot \|S^t S^{t\top}\|^2_F \cdot \|V^t\|_F^4 \\
	\leq& 8\left(\|M\|_F^2 + R^4\right)R^4 \cdot \|S^t S^{t\top}\|^2_F,
	\end{aligned}
	\end{equation}
	where in the last inequality we have applied Assumption~\ref{assumption:2}. If we take expectation, we have
	\begin{equation}
	\small
	\begin{aligned}
	\E\|\Delta^t V^{t\top}\|_F^2 \leq& 8\left(\|M\|_F^2 + R^4\right)R^4 \cdot \E\|S^t S^{t\top}\|^2_F \\
	\leq& 8\left(\|M\|_F^2 + R^4\right)R^4 \cdot \left(\left\|\E[ S^t S^{t\top} ]\right\|^2 + \mathbb{V}[S^t S^{t\top}] \right) \\
	\leq& 8\left(\|M\|_F^2 + R^4\right)R^4 \cdot \left(n + \sigma^2 \right),
	\end{aligned}
	\end{equation}where mean-variance decomposition have been applied in the second inequality, and Assumption~\ref{assumption:1} was used in the last line. For convenience, we will use
	\begin{equation}
	\Gamma\triangleq 8\left(\|M\|_F^2 + R^4\right)R^4 \cdot \left(n + \sigma^2 \right)\geq0
	\end{equation}
	to denote this constant later on.
	
	\vspace{5pt}
	\noindent By combining all results, we can rewrite Eq.~\ref{eq:sufficient_decrease} as
	\begin{equation}
	\E \left[f(U^{t+1},V^t)\right] \leq f(U^t, V^t) - \eta_t\Phi\left(U^t/\eta_t, G^t\right) + \eta_t^2 \Gamma.
	\end{equation}

	\vspace{5pt}
	\noindent Likewise, conditioned on $U^{t+1}$ and $V^t$, we can prove a similar inequality for $V$:
	\begin{equation}
	\E \left[f(U^{t+1},V^{t+1})\right] \leq f(U^{t+1}, V^t) - \eta_t\Phi\left(V^t/\eta_t, G'^t\right) + \eta_t^2 \Gamma',
	\end{equation}
	where $\Gamma'\geq 0$ is also some uniform constant. From definition,
	it is easy to see both $\Phi\left(U^t/\eta_t, G^t\right)$ and
	$\Phi\left(V^t/\eta_t, G'^t\right)$ are nonnegative. Along with
	condition the condition $\sum_{t=0}^\infty \eta_t^2<\infty$, we can
	apply the Supermartingale Convergence Theorem (Lemma \ref{lemma:martingale}) with
	\begin{equation}
	\begin{aligned}
	&Y_{2t}=f(U^t, V^t),\quad Y_{2t+1}=f(U^{t+1}, V^t),\\
	&Z_{2t}=\Phi\left(U^t/\eta_t,G^t\right),\quad Z_{2t+1}=\Phi\left(V^t/\eta_t,G'^t\right), \\
	&W_{2t}=\Gamma \eta_t^2,\quad W_{2t+1}=\Gamma' \eta_t^2,
	\end{aligned}
	\end{equation}
	and then conclude that both $\{f(U^{t+1},V^t)\}$ and $\{f(U^t,V^t)\}$ will converge to a same value, and besides:
	\begin{equation}
	\sum_{t=0}^\infty \eta_t \left[\Phi\left(U^t/\eta_t, G^t\right) + \Phi\left(V^t/\eta_t, G'^t\right)\right] < \infty,
	\end{equation}
	with probability 1. In addition, it is not hard to verify that $\left|\Phi\left(U^{t+1}/\eta_{t+1},G^{t+1}\right) - \Phi\left(U^t/\eta_t,G^t\right)\right|\leq C\cdot\eta_t$ for some constant $C$ because of the boundness of the gradients. Then, by Lemma \ref{lemma:convergence_gradient}, we obtain that
	\begin{equation}
	\lim_{t\rightarrow\infty} \Phi\left(U^t/\eta_t, G^t\right)=\lim_{t\rightarrow\infty} \sum_{i:l}\phi\left(U^t_{i:l}/\eta_t, G^t_{i:l},\sigma'^2\right) \rightarrow 0.
	\end{equation}
	Since each summand in the above is nonnegative, this equation further implies
	\begin{equation}
	\lim_{t\rightarrow\infty} \phi\left(U^t_{i:l}/\eta_t, G^t_{i:l},\sigma'^2\right) \rightarrow 0
	\end{equation}
	for all $i$ and $l$. By looking into the definition of $\phi$ in Eq.~\ref{eq:def_phi}, it is not hard to see that $\phi\left(U^t_{i:l}/\eta_t, G^t_{i:l},\sigma'^2\right)\rightarrow 0$ if and only if $\min\left\{U^t_{i:l}/\eta_t,\, \big|G^t_{i:l}\big|\right\}\rightarrow 0$. Considering $\eta_t>0$ and $\eta_t\rightarrow 0$, we can conclude that
	\begin{equation}
	\lim_{t\rightarrow \infty} \min\left\{U^t_{i:l},\, \big|G^t_{i:l}\big|\right\}\rightarrow 0
	\end{equation}
	for all $i,l$, which means either the gradient $G^t_{i:l}$ converges to $0$, or $U^t_{i:l}$ converges to the boundary $0$. In other words, the projected gradient at $(U^t, V^t)$ w.r.t $U$ converges to $0$ as $t\rightarrow \infty$. Likewise, we can prove
	\begin{equation}
	\lim_{t\rightarrow \infty} \min\left\{V^t_{j:l},\, \big|G'^t_{j:l}\big|\right\}\rightarrow 0,
	\end{equation}
	in a similar way, which completes the proof of projected gradient descent.
	
	\vspace{5pt}
	\noindent The proof of regularized coordinate descent is similar to that of projected
	gradient descent, and hence we only include a sketch proof here. The
	key here is to establish an inequality similar to Eq.~\ref{eq:sufficient_decrease}, but with the difference that just one
	column rather than whole $U$ or $V$ is changed every time. Take $U_{:1}$ as an example.
	An important observation is that when projection does not happen, we
	can rewrite (19) in the paper as
	$U^{t+1}_{:1}=U^t_{:1}-\tilde{G}_{:1}/(\tau_t+B^t_{j:}B^{t\top}_{j:})$,
	which means that the moving direction of regularized coordinate
	descent is the same as that of projected gradient descent, but with step size being $1/(\tau_t+B^t_{j:}B^{t\top}_{j:})$. Since both the expectation and variance of $B^t_{j:}B^{t\top}_{j:}$ are bounded, we will have $1/(\tau_t+B^t_{j:}B^{t\top}_{j:})\approx 1/\tau_t$ when $\tau_t$ is large. Given these two reasons, we can out down a similar inequality as Eq.~\ref{eq:sufficient_decrease}. The remaining proof just follows the one for projected gradient descent.
\end{proof}

\subsection{Proof of Preliminary Lemmas}\label{LemmainTheorem}


\begin{proof}[Proof of Lemma~\ref{lemma:expecation}]
	Since the proof related to $\tilde{G}'^t$ is similar to $\tilde{G}^t$, here we only focus on the latter one.
	
	\vspace{5pt}
	\noindent First, let us write down the definition of $G^t$ and $\tilde{G}^t$:
	\begin{equation}
	\begin{aligned}
	&G^t = 2(U^t V^{t\top} - M) V^t\\ 
	&\tilde{G}^t = 2(U^t V^{t\top} - M) (S^t S^{t\top}) V^t.
	\end{aligned}
	\end{equation}
	Therefore,
	\begin{equation}
	\begin{aligned}
	\E[\tilde{G}^t] = &\E\left[2(U^t V^{t\top} - M) (S^t S^{t\top}) V^t\right]\\
	= & 2(U^t V^{t\top} - M)\, \E[ S^t S^{t\top} ]\, V^t\\
	=&2(U^t V^{t\top} - M)\, I\, V^t
	= 2(U^t V^{t\top} - M) V^t
	= G^t,
	\end{aligned}
	\end{equation}
	which means $\tilde{G}^t$ is an unbiased estimator of $G^t$.
	Besides, its variance is uniformly bounded because
	\begin{equation}
	\begin{aligned}
	\mathbb{V}[\tilde{G}^t] \leq& \mathbb{V}\left[ 2(U^t V^{t\top} - M)_{i:} (S^t S^{t\top}) V^t_{:l}\right] \\
	\leq& 4\|M - U^t V^{t\top}\|_F^2 \cdot \mathbb{V}[S^t S^{t\top}] \cdot \|V^t\|_F^2 \\
	\leq& 8\left(\|M\|_F^2 + \|U^t\|_F^2 \|V^t\|_F^2\right)  \cdot \|V^t\|_F^2 \cdot \mathbb{V}[S^t S^{t\top}] \\
	\leq& 8\left(\|M\|_F^2 + R^4\right)R^2\cdot\sigma^2,
	\end{aligned}
	\end{equation}
	where both Assumptions~\ref{assumption:1} and~\ref{assumption:2} are applied in the last line.
\end{proof}

\begin{proof}[Proof of Lemma \ref{lemma:min}]
	In this proof, we will use Cantelli's inequality:
	\begin{equation}
	\P(X\geq \mu + \lambda) \geq 1 - \frac{\sigma^2}{\sigma^2+\lambda^2} \quad\forall \lambda < 0.
	\end{equation}
	
	\vspace{5pt}
	\noindent When $\mu=0$, it is easy to see that the right-hand-side of Eq.~\ref{eq:to_prove} is $0$. Considering that the left-hand-side is the expectation of a nonnegative random variable, Eq.~\ref{eq:to_prove} obviously holds in this case.
	
	\vspace{5pt}
	\noindent When $\mu> 0$ and $\mu\geq 2c$, by using the fact that $X$ is nonnegative, we have
	\begin{equation}
	\E\left[\min\{X, c\}\right] \geq c\cdot \P(X\geq c).
	\end{equation}
	Now we can apply Cantelli's inequality to bound $\P(X\geq c)$ with $\lambda=c - \mu < c - \mu/2\leq 0$, and obtain:
	\begin{equation}
	\label{eq:part_1}
	\begin{aligned}
	\E\left[\min\{X, c\}\right]
	& \geq c\cdot \left(1  - \frac{\sigma^2}{\sigma^2 + (\mu - c)^2}\right) \\
	& \geq c\cdot \left(1  - \frac{\sigma^2}{\sigma^2 + (\mu - \mu/2)^2}\right) \\
	&= c\cdot \left(1  - \frac{4\sigma^2}{4\sigma^2 + \mu^2}\right), 
	\end{aligned}
	\end{equation}
	where in the second inequality we used the fact $c\leq\mu/2$ again.
	
	\vspace{5pt}
	\noindent When $\mu> 0$ but $\mu<2c$, we have:
	\begin{equation}
	\E\left[\min\{X, c\}\right]\geq
	\E\left[\min\{X, \mu/2\}\right].
	\end{equation}
	Now we can apply inequality in Eq.~\ref{eq:part_1} from the previous part with $c=\mu/2$, and thus
	\begin{equation}
	\E\left[\min\{X, c\}\right]\geq \E\left[\min\{X, \mu/2\}\right] \geq \frac{\mu}{2}\cdot \left(1  - \frac{4\sigma^2}{4\sigma^2 + \mu^2}\right),
	\end{equation}
	which completes the proof.
\end{proof}

\begin{proof}[Proof of Lemma \ref{lemma:intern}]
	We only focus on $G^t$ and $\Delta^t$.
	We first show that
	\begin{equation}\label{eq:sign_preserving}
	G^t_{i:l} \cdot \tilde{G}^t_{i:l}\geq 0
	\end{equation}
	for any random matrix $S^t$. Note that
	\begin{equation}
	\begin{aligned}
	&G^t_{i:l}=2(U^t V^{t\top} - M)_{i:}V^t_{:l} \\
	&\tilde{G}^t_{i:l}=2(U^t V^{t\top} - M)_{i:} (S^t S^{t\top}) V^t_{:l}.
	\end{aligned}
	\end{equation}
	Hence it would be sufficient if we can show that there holds $a^\top (S^t S^{t\top}) b\cdot a^\top b\geq 0$ for any vectors $a$ and $b$:
	\begin{equation}
	\begin{aligned}
	a^\top (S^t S^{t\top}) b\cdot a^\top b &= \tr\left( a^\top (S^t S^{t\top}) b b^\top a \right) \\
	&= \tr\left( a a^\top (S^t S^{t\top}) b b^\top \right)
	\geq 0,
	\end{aligned}
	\end{equation}
	where the first equality is because $A\cdot B=\tr(AB^\top)$, the second equality is due to cyclic permutation invariant property of trace, and the last inequality is because all of $a a^\top$, $b b^\top$ and $S^t S^{t\top}$ are positive semi-definite matrices.
	
	\vspace{5pt}
	\noindent Now, let us consider the relationship between $\Delta^t$ and $\tilde{G}^t$:
	\begin{equation}
	\Delta^t = \frac{1}{\eta_t}\left(U^t - U^{t+1}\right) = \frac{1}{\eta_t}\left(U^t - \max\left\{ U^t - \eta_t\tilde{G}^t,\, 0 \right\}\right),
	\end{equation}
	from which it can be shown that
	\begin{equation}\label{eq:delta}
	\Delta^t_{i:l} = \min\left\{ U^t_{i:l}/\eta_t,\, \tilde{G}^t_{i:l} \right\}.
	\end{equation}
	
	\vspace{5pt}
	\noindent When $G^t_{i:l}=0$, it is easy to see that both sides of Eq.\ref{eq:to_prove_2} become 0, and hence Eq.\ref{eq:to_prove_2} holds.

	\vspace{5pt}
	\noindent When $G^t_{i:l}>0$, from Eq.\ref{eq:sign_preserving} we know that $\tilde{G}^t_{i:l}\geq 0$ regardless of the choice of $S^t$. From Lemma~\ref{lemma:expecation} we know that
	\begin{equation}
	\E[\tilde{G}^t_{i:l}]=G^t_{i:l}
	\end{equation}
	and there exists a constant $\sigma'^2 \geq 0$ such that
	\begin{equation}
	\mathbb{V}[\tilde{G}^t_{i:l}]\leq \sigma'^2.
	\end{equation}
	Since $U^t_{i:l}$ is a nonnegative constant here, we can apply Lemma~\ref{lemma:min} to Eq.\ref{eq:delta} and conclude
	\begin{equation}
	\begin{aligned}
	\E[\Delta^t_{i:l}] \geq& \min\left\{U^t_{i:l}/\eta_t, G^t_{i:l}/2\right\} \cdot \left(1 - \frac{4\mathbb{V}[\tilde{G}^t_{i:l}]}{4\mathbb{V}[\tilde{G}^t_{i:l}] + \big(G^t_{i:l}\big)^2}\right) \\
	\geq& \min\left\{U^t_{i:l}/\eta_t, G^t_{i:l}/2\right\} \cdot \left(1 - \frac{4\sigma'^2}{4\sigma'^2 + \big(G^t_{i:l}\big)^2}\right),
	\end{aligned}
	\end{equation}
	from which Eq.\ref{eq:to_prove_2} is obvious.

	\vspace{5pt}
	\noindent When $G^t_{i:l}<0$, also from Eq.~\ref{eq:sign_preserving} we know that $\tilde{G}^t_{i:l}\leq 0$. Since $U^t_{i:l}$ is a nonnegative constant here, we always have
	\begin{equation}
	\Delta^t_{i:l} = \min\left\{ U^t_{i:l}/\eta_t,\, \tilde{G}^t_{i:l} \right\} = \tilde{G}^t_{i:l}.
	\end{equation}
	Therefore, by taking expectation and using Lemma~\ref{lemma:expecation}, we obtain
	\begin{equation}
	\E[\Delta^t_{i:l}] = \E[\tilde{G}^t_{i:l}] = G^t_{i:l},
	\end{equation}
	and thus
	\begin{equation}
	\E\left[G^t_{i:l}\cdot \Delta^t_{i:l}\right] = \left(G^t_{i:l}\right)^2> \frac{\big(G^t_{i:l}\big)^2}{2} \cdot \left(1 - \frac{4\sigma'^2}{4\sigma'^2 + \big(G^t_{i:l}\big)^2}\right)
	\end{equation}
	for any constant $\sigma'$, which means that Eq.~\ref{eq:to_prove_2} holds.
\end{proof}

\bibliographystyle{IEEEtran}
\bibliography{main}

\vspace{-25pt}
\begin{IEEEbiography}[\vspace{-2em}{\includegraphics[width=1in,height=1.25in,clip,keepaspectratio]{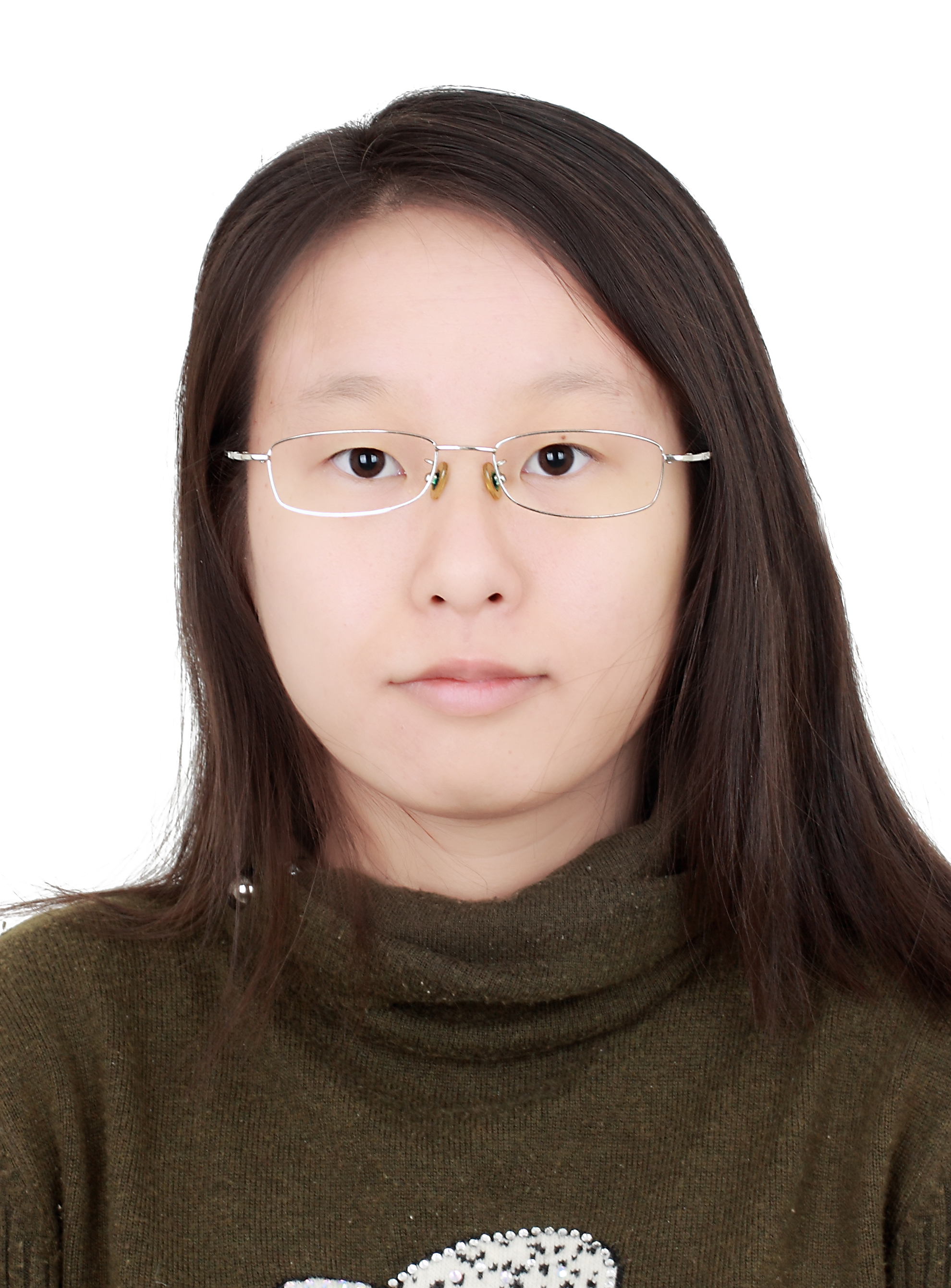}}]
{Yuqiu Qian} is currently an applied researcher in Tencent. Her research interests include data engineering and machine learning with applications in recommender systems. She received her B.Eng. degree in Computer Science from University of Science and Technology of China (2015), and her PhD degree in Computer Science from University of Hong Kong (2019).
\vspace{-40pt}
\end{IEEEbiography}

\begin{IEEEbiography}[\vspace{-2em}{\includegraphics[width=1in,height=1.25in,clip,keepaspectratio]{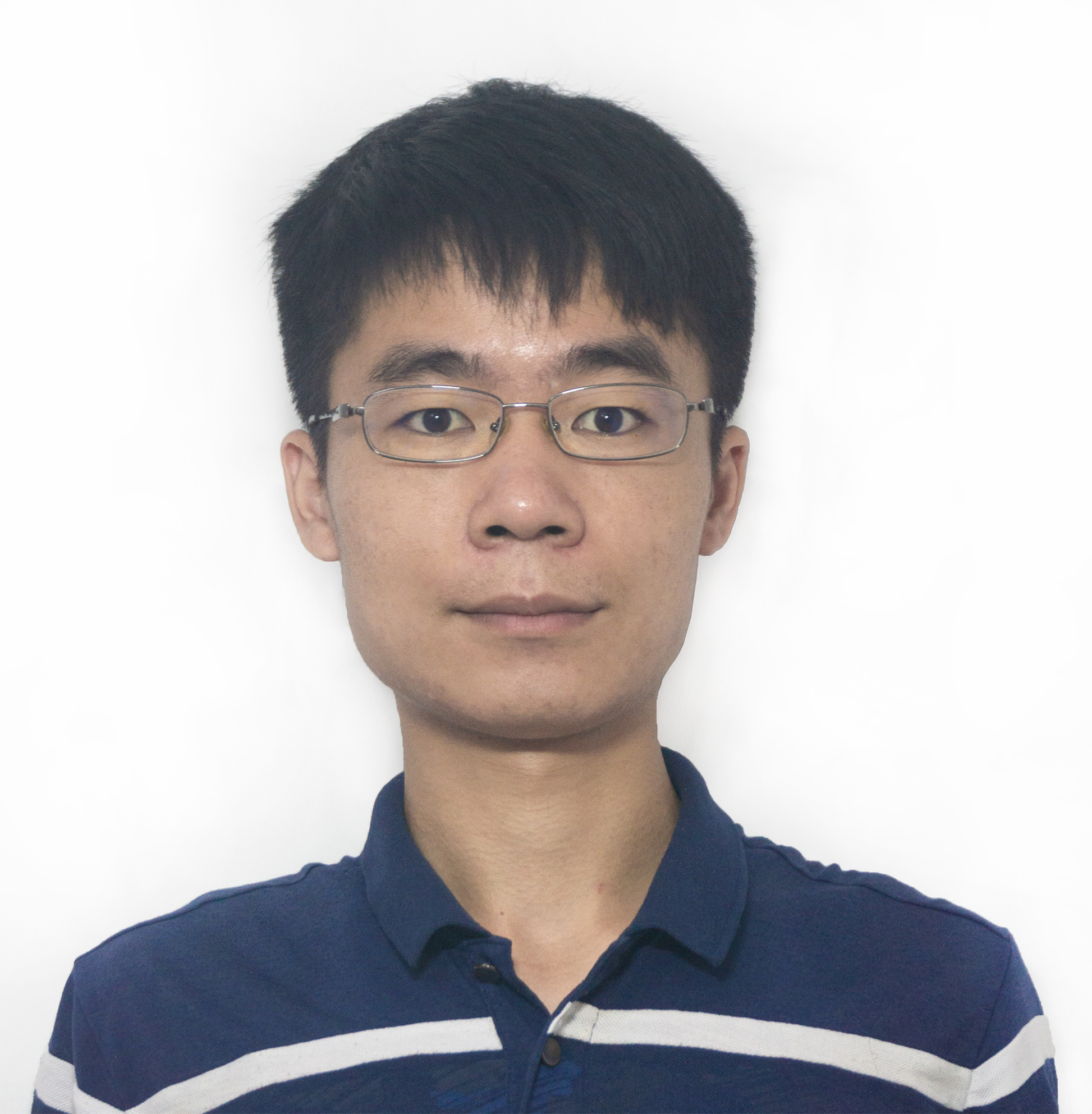}}]
{Conghui Tan} is currently a researcher in WeBank. His research interests include machine learning and optimization algorithms. He received his B.Eng. degree in Computer Science from University of Science and Technology of China (2015), and his PhD degree in System Engineering from Chinese University of Hong Kong (2019).
\vspace{-40pt}
\end{IEEEbiography}

\begin{IEEEbiography}[{\includegraphics[width=1in,height=1.25in,clip,keepaspectratio]{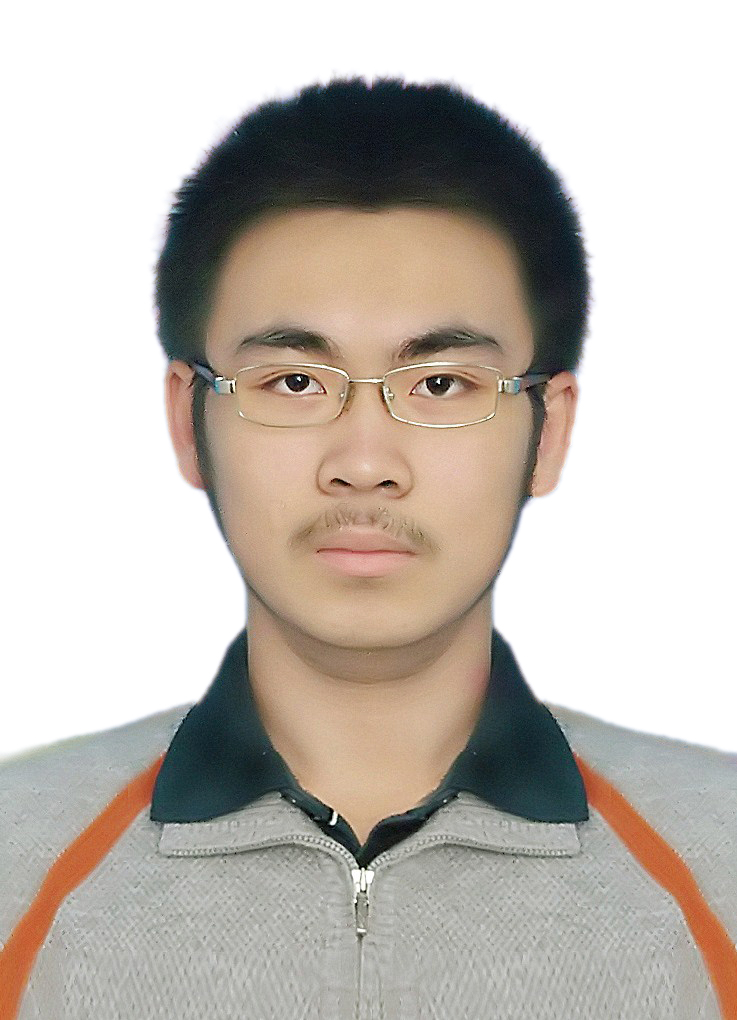}}]
{Danhao Ding} is currently a PhD candidate in Department of Computer Science, University of Hong Kong. His research interest include high performance computing and machine learning. He received his B.Eng. degree in Computing and Data Analytics from University of Hong Kong (2016).
\vspace{-40pt}
\end{IEEEbiography}

\begin{IEEEbiography}[{\includegraphics[width=1in,height=1.25in,clip,keepaspectratio]{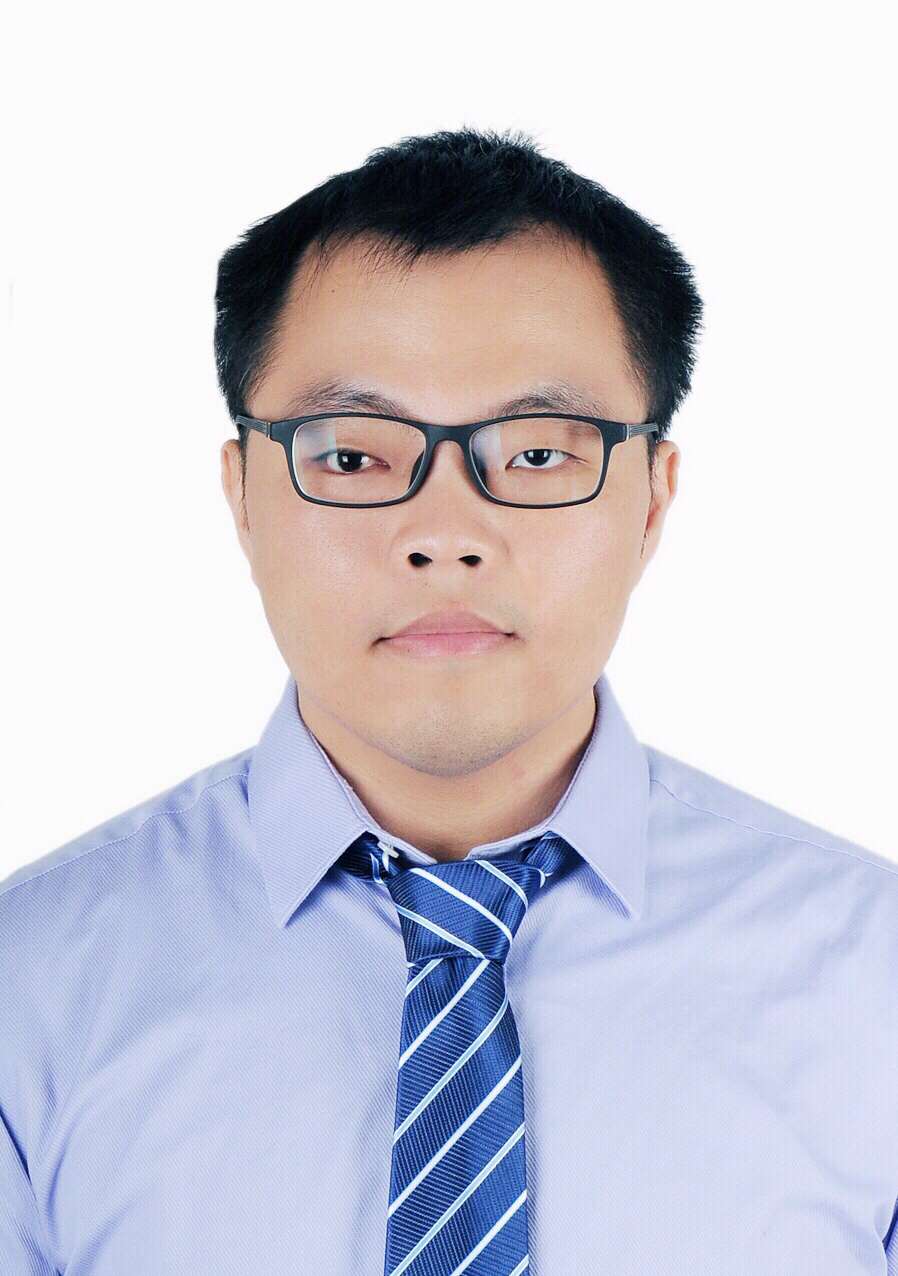}}]
{Hui Li} is currently an assistant professor in the School of Informatics, Xiamen University. His research interests include data mining and data management with applications in recommender systems and knowledge graph. He received his B.Eng. degree in Software Engineering from Central South University (2012), and his MPhil and PhD degrees in Computer Science from University of Hong Kong (2015, 2018).
\vspace{-40pt}
\end{IEEEbiography}

\begin{IEEEbiography}[{\includegraphics[width=1in,height=1.25in,clip,keepaspectratio]{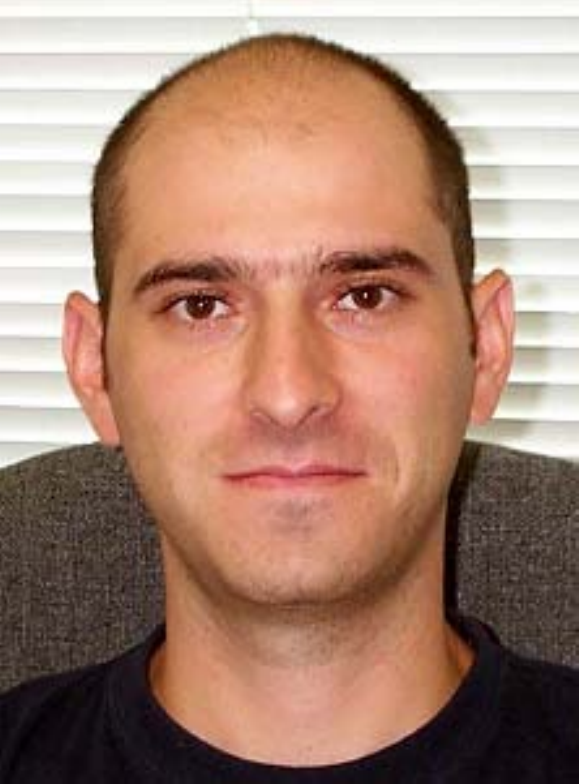}}]{Nikos Mamoulis} received his diploma in computer engineering and informatics in 1995 from the University of Patras, Greece, and his PhD in computer science in 2000 from HKUST. He is currently a faculty member at the University of Ioannina. Before, he was professor at the Department of Computer Science, University of Hong Kong. His research focuses on the management and mining of complex data types.
\end{IEEEbiography}

\end{document}